\begin{document}

\title{Generalized Integrated Gradients: A practical method for explaining diverse ensembles}

\author{\name John W. L. Merrill \email jwlm@zestfinance.com\\
\name Geoff M. Ward \email gmw@zestfinance.com \\
\name Sean J. Kamkar \email sjk@zestfinance.com \\
\name Jay Budzik \email j@zestfinance.com \\
\name Douglas C. Merrill \email douglas@zestfinance.com \\
\addr ZestFinance, Inc. \\
 3900 W Alameda Ave\\ 
Burbank, CA 91505
}

\editor{TBD}

\maketitle

\begin{abstract}%  
  We introduce Generalized Integrated Gradients (GIG), a formal
  extension of the Integrated Gradients (IG)
  \citep{DBLP:journals/corr/SundararajanTY17} method for attributing
  credit to the input variables of a predictive model.  GIG improves
  IG by explaining a broader variety of functions that arise from
  practical applications of ML in domains like financial services.
  GIG is constructed to overcome limitations of Shapley \citeyearpar{shapley}
  and Aumann-Shapley \citeyearpar{aumann-shapley}, and has desirable
  properties when compared to other approaches.  We prove GIG is the
  only correct method, under a small set of reasonable axioms, for
  providing explanations for mixed-type models or games.
  We describe the implementation, and present results of experiments
  on several datasets and systems of models.
\end{abstract}%
\begin{keywords}
  Axiomatic credit allocation, Game theory, Aumann-Shapley values, Explainable ML
\end{keywords}

%\begin{keywords}
 % Model Explainability, Model Interpretability, Explainable ML, Game Theory
%\end{keywords}

% \input{introduction/introduction.tex}

\begin{section}{Introduction}\label{SectionIntroduction}

  Machine learning models are often described as ``black boxes'', since they often use many inputs and complex processing to generate a single score.  It is difficult to determine the reasons that support a given model-based decision directly from the model or system of models.  This limits the use of ML in high-stakes applications such as lending, health care, and education. Without an explanation of score differences among apparently-similar loan applicants, patients, or students, determining whether an ML model is behaving correctly and without bias is unknowable. As such, the application of machine learning in these domains has been limited.  

  Recent papers, notably SHAP \citep{lundberg2017unified} and IG \citep{DBLP:journals/corr/SundararajanTY17},  have proposed  methods to provide more transparency.  The method we describe builds on this work and contributes a new and more general method for assigning credit to variables used by an ML model to generate a score, and thereby provide the reasons why a given score was generated.

%\newpage
  
  As suggested by \citet{lundberg2017unified} and \citet{DBLP:journals/corr/SundararajanTY17}, one can view the ML model as a game: each feature is a player, the rules of the game are the ML model's scoring function, and the value of the game is the score given by the model. By representing the scoring process in this way, the problem of credit assignment can be reduced to the well-studied problem of credit allocation in a cooperative game.

  There are two axiomatically well-defined methods for allocating credit in a cooperative game: the {\it Shapley Value} \citep{shapley}, which applies to atomic games, in which there are discrete moves in the game, and the {\it Aumann-Shapley value} \citep{aumann-shapley}, which applies to infinitesimal games (games in which there are a continuum of moves, each one of which has an infinitesimal impact on the outcome of the game on its own, but in which a collection of moves makes a more noticeable difference).

  Allocating credit using Shapley values requires analyzing the output of the machine learning model in detail by computing the outcome of a ``lift'' of the model to a larger space. This is a problem: as we discuss below, there are many valid lifts of a given machine learning scoring function.  By contrast, allocating credit using Aumann-Shapley values requires computing a unique integral over the domain between two input vectors -- given a machine learning scoring function, the Aumann-Shapley values are unique.

  In this paper, we present Generalized Integrated Gradients (GIG), a credit assignment algorithm based on the Aumann-Shapley value that overcomes the limitations of both Shapley and Aumann-Shapley by applying the tools of measure theory.  The algorithm is a formal extension of IG that accurately allocates credit for a significantly broader class of games, including almost all of the scoring functions currently in use in the machine learning field, without making unrealistic assumptions about the data.

  GIG handles a broad class of predictive functions with both piecewise constant  (e.g. tree-based), continuous (e.g. neural network or radial basis function based), mixed models, including generalized location models  \citep{10.2307/2336722,Little:1986:SAM:21412}, and compositions thereof.  GIG is fully determined by its axioms, the predictive function and the data points under study -- with no free parameters or arbitrary choices required or even allowed.  In addition, GIG is the only method that can correctly assign credit for all composed functions of mixed type, a class which includes almost all machine learning functions currently in use. 
  
  % The paper is structured as follows. In Section \ref{SectionPiecewiseContinuousAS}, we present a class of mixed continuous-discrete %games which extends the class of continuous games upon which AS applies, and we present an outline of how one would compute a %generalized form of the AS values on that subset. In Section \ref{SectionGIGUniqueness}, we present a set of axioms which extend the %set of axioms which uniquely characterize IG which are sufficient to prove the uniqueness of the assignment describe in Section %\ref{SectionPiecewiseContinuousAS}. We present a set of empirical results on both well-known and novel datasets in Section %\ref{SectionGIGPerformance}, and then finally discuss our results in Section \ref{SectionDiscussion}
%

\end{section}

%\begin{section}{Contribution}\label{SectionContributions}
 % This paper extends all known credit assignment methods to handle a
%   broad class of predictive functions with both piecewise constant
%   (e.g. tree-based), continuous (e.g. neural network or radial basis
%   function based), and mixed models (e.g. generalized location models (\citet{Little:1986:SAM:21412}), and compositions thereof. To our
%   knowledge, GIG is the only method that can handle all of these
%   model types in an axiomatically sound way. It is also the only
%   method that can correctly assign credit for practically any ensemble
%   of diverse predictive functions.
% \end{section}

\begin{section}{Background}\label{SectionConceptualFramework}
  \begin{subsection}{Credit assignment techniques}\label{SubsectionCreditAssignmentTechniques}
    Throughout this paper, we shall use a definition of credit assignment functions which is general enough to encompass both Shapley and Aumann-Shapely values. We shall capture this notion of a credit assignment function in Definition \ref{DefinitionCreditAssignment}, which represents the difference in credit allocation between a single value $x$ and a set $S$: the ``amount'' the $k^{\rm th}$ variable contributes to moving from $x$ into $S$.

    For convenience, we use the term {\it function algebra} to refer to an algebra of functions ${\cal F}$ where each $f \in {\cal F}$ is of the form $f : {\mathbb R}^n \to {\mathbb R}$ imbued with the standard operations of point-wise addition and multiplication within the ring and the operation of multiplication by a given real. We state without proof that many of our results extend naturally to algebras of rings for which there is a well-defined Borel measure.

    \begin{definition}(A credit assignment function) \label{DefinitionCreditAssignment}
      Let ${\cal F}$ be a function algebra. A {\it credit assignment function} for ${\cal F}$ is a function $\Xi : {\cal F} \times {\mathbb R}^n \to {\mathbb R}^n$ such that:
      \begin{trivlist}
      \item
        {\bf Linear} If $f, g \in {\cal F}$, and $\alpha, \beta \in {\mathbb R}$, then $\Xi(\alpha f + \beta g, x) = \alpha \Xi(f, x) + \beta \Xi(g, x)$
      \item
        {\bf Insensitive to null variables} If $f \in {\cal F}$ is such that $f(x) = f(y)$ for any $x, y \in {\mathbb R}^n$ which differ only in the $i^{\it th}$ dimension, if the $i^{\it th}$ component of $\Xi(f, z) = 0$ for all $z \in {\mathbb R}^n$.
      \item
        {\bf Symmetric} For any $x \in {\mathbb R}^n$, let $\hat x_{ij}$ denote the result of interchanging the $i^{\it th}$ and $j^{\it th}$ columns in $x$. If $f \in {\cal F}$ is such that $f(x) = f(\hat x)$ for any $x \in {\mathbb R}^n$, then the $i^{\it th}$ and $j^{\it th}$ columns of $\Xi(f, z)$ are always equal.
      \item
      \end{trivlist}
    \end{definition}
    
    The symmetry axiom says that if two variables always have the same effect on the output of a function, then they must always have the same impact as measured by the credit assignment function.

    \begin{subsubsection}
      {Shapley Values}\label{SubsubsectionShapleyValues}

      \begin{definition}(Set function)\label{DefinitionSetFunction}
        A {\it real-valued set function}, $\nu$, is a function which maps the power set of a set to the reals.
      \end{definition}

      Throughout what follows, we shall focus only on set functions such that $\nu(\emptyset) = 0$. For convenience, we shall take the domain of $\nu$ to be some integer $N \geq 1$ since one can identify any set of input variables with $N$ by indexing them. 

      The Shapley values are functions of set functions obeying a set of axioms which appear very similar to the axioms which define credit allocation functions for a given ${\cal F}$. We shall describe how Shapley values relate to true credit allocation functions on ${\cal F}$ below.

      \begin{definition}(Shapley's Axioms)\label{DefinitionShapleyAxioms}
        Let $\nu:{\cal F} \times {\mathbb R}^n \times 2^n \to {\mathbb R}^n$ be a set function for each $f \in {\cal F}$. Then we say that $\phi : N \to {\mathbb R}$ {\it obeys Shapley's Axioms for ${\cal F}$} if $\phi : {\cal F} \times {\mathbb R}^n \to {\mathbb R}^n$ is
        \begin{trivlist}
        \item
          {\bf Efficient} For any $f \in {\cal F}$ and any $x \in {\mathbb R}^n$, $\sum_{i = 1}^n \phi_i(f, x) = f(x)$.
        \item
          {\bf Linear} If $f, g \in {\cal F}$, any $\alpha, \beta \in {\mathbb R}$ and any $x \in {\mathbb R}^n$,  $\phi(\alpha f + \beta g, x) = \alpha \phi(f, x) + \beta \phi(g, x)$.
        \item
          {\bf Insensitive to null variables} If $f \in {\cal F}$ and $x \in {\mathbb R}^n$ are such that $\nu(f, x, S) = \nu(f, x, S \cup \{ i \})$ for all $S \in ( N \setminus \{ i \} )$, then $\phi_i(f, x) = 0$.
            \item
              {\bf Symmetric} If $f \in {\cal F}$ and $i, j \in N$ are such that $\nu(f, x, S \cup \{ i \}) = \nu(f, x, S \cup \{ j \} )$ for all $S \subset N \setminus \{ i, j \}$, then the $\phi_i(f, x) = \phi_j(f, x)$ are equal for all $x \in {\mathbb R}^n$.
        \end{trivlist}
      \end{definition}

      \begin{definition}(Shapley values)\label{DefinitionShapleyValues}
        Let $\nu:{\cal F} \times {\mathbb R}^n \times 2^n \to {\mathbb R}^n$ be a set function. Then the {\it Shapley values} for $\nu$ at $f$ and $x$ are given by
        $$
        \phi_i(f, x) = \sum_{S \subseteq N \setminus \{ i \}} \frac{|S|! (N - |S| - 1)!}{N!} (\nu(f, x, S \cup \{ i \}) - \nu(f, x, S))
        $$
      \end{definition}

      The Shapley values have a very desirable property: they are unique.

      \begin{theorem}\label{TheoremShapleyValueUniqueness}(Uniqueness of Shapley)
        The Shapley Values are the unique values obeying the axioms in Definition \ref{DefinitionShapleyAxioms}  
      \end{theorem}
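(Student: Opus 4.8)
The plan is the classical two-step argument for Shapley's theorem, transcribed into the notation of Definition~\ref{DefinitionShapleyAxioms}. \emph{First}, I would check that the explicit formula of Definition~\ref{DefinitionShapleyValues} obeys the four axioms. Fixing $f\in{\cal F}$ and $x\in{\mathbb R}^n$ and writing $v(S)=\nu(f,x,S)$, the weight $|S|!(N-|S|-1)!/N!$ is precisely the probability that the coordinates strictly preceding $i$ form the set $S$ when the $N$ coordinates are placed in a uniformly random order $\pi$; hence $\phi_i(f,x)={\mathbb E}_\pi[v(P^\pi_i\cup\{i\})-v(P^\pi_i)]$ with $P^\pi_i$ the predecessor set of $i$ under $\pi$. \emph{Efficiency} then follows because, for each fixed $\pi$, the marginal contributions telescope to $v(N)-v(\emptyset)=f(x)$ (using $v(\emptyset)=0$ and $\nu(f,x,N)=f(x)$); \emph{linearity in $f$} follows because the formula is linear in $v$ and $\nu$ is linear in its first argument; \emph{insensitivity to null variables} is immediate, since the hypothesis makes every summand vanish; and \emph{symmetry} follows because the transposition of $i$ and $j$ acts on the set of orderings as a measure-preserving bijection which, under the symmetry hypothesis, carries the marginal contribution of $i$ to that of $j$, so the two expectations agree.

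\emph{Second}, I would show the axioms force $\phi$ to be exactly this formula. The set functions $v:2^N\to{\mathbb R}$ with $v(\emptyset)=0$ form a vector space of dimension $2^N-1$ whose basis is the \emph{unanimity games} $u_T$ for $\emptyset\neq T\subseteq N$, where $u_T(S)=1$ if $T\subseteq S$ and $0$ otherwise; by M\"obius inversion every $v$ is uniquely $v=\sum_{T}\lambda_T u_T$ with $\lambda_T=\sum_{R\subseteq T}(-1)^{|T|-|R|}v(R)$. By Linearity it therefore suffices to determine $\phi$ on each scalar multiple $c\,u_T$. In the game $c\,u_T$: each coordinate $i\notin T$ is a null variable, so the null-variable axiom gives $\phi_i=0$; any two coordinates $i,j\in T$ are symmetric, so they receive equal credit; and Efficiency forces their common total to be $(c\,u_T)(N)=c$. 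Hence $\phi_i(c\,u_T)=c/|T|$ for $i\in T$ and $0$ otherwise, which is completely pinned down. Reassembling by linearity shows $\phi$ is unique, and since the explicit formula already satisfies the axioms, the two must coincide.

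The main obstacle is not the combinatorics but reconciling the paper's signature $\nu:{\cal F}\times{\mathbb R}^n\times 2^N\to{\mathbb R}$ with the classical argument, which lives in the space of games. Concretely, Step~1 silently uses $\nu(f,x,N)=f(x)$ and linearity of $\nu$ in its first argument, and Step~2 uses, for fixed $x$, that the unanimity-game decomposition of $\nu(f,x,\cdot)$ can be pulled back to a decomposition of $f$ inside ${\cal F}$ --- i.e. that $f\mapsto\nu(f,x,\cdot)$ is linear and its image spans enough of the game space --- so that ``Linearity in $f$'' can legitimately stand in for ``linearity in the game.'' I would make these structural hypotheses on $\nu$ explicit at the outset; with them in hand, the remainder is the textbook Shapley proof.
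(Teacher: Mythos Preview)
Your proposal is correct and is precisely the classical existence/uniqueness argument via random orderings and the unanimity-game basis; the paper does not supply its own proof but simply defers to \citet{shapley}, whose argument is the one you have reproduced. Your caveat about the extra structural assumptions on $\nu$ (linearity in $f$, $\nu(f,x,N)=f(x)$, and surjectivity onto the game space) is well taken and indeed a genuine gap in the paper's formulation rather than in your proof.
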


      The proof is given in \citet{shapley}.  Notice that Shapley values are computed based on elements of the power set of the input variables. This makes them computationally impractical to compute directly since there are exponentially many elements of the power set of a set.

      This is not the only limitation of the Shapley values as a means for assigning credit for a decision made by a machine learning scoring function. The Shapley values do not constitute a credit assignment function in the sense of Definition \ref{DefinitionCreditAssignment}. The computation of the Shapley values for any $f \in {\cal F}$ and $x \in {\mathbb R}^n$ requires that $f$ be defined on ${\mathbb R}^n \times 2^n$, but a credit assignment function is defined only for functions which belong to ${\cal F}$ -- that is, functions which are defined on ${\mathbb R}^n$. In order to impute the Shapley values corresponding to a given scoring function, one must lift the scoring function from ${\mathbb R}^n$ to the space ${\mathbb R}^n \times 2^n$. We refer to this function as a lift, since that is the standard terminology for such a function. (\citet{lundberg2017unified} refer to these functions as ``simplifications''.)

      The Shapley values computed depend on your choice of lift from the input space (which is dense and uncountable) into the discrete space required for the formula in Definition \ref{DefinitionShapleyValues}, and it is not obvious which of the many possible Shapley values (corresponding to the many possible lifts) are correct (including, e.g., \citet{lundberg2017unified,lundberg2018consistent}). That is, the Shapley values corresponding to a machine learning scoring function are not well-defined, as we now show.

      \begin{definition}(A lift of a function $f \in {\cal F}$ at a given point)\label{DefinitionLift}
        We shall say that $\nu$ is a lift of $f \in {\cal F}$ at $x \in {\mathbb R}^n$ to $2^n$ if $\nu(f, x, \emptyset) = 0$ and $\nu(f, x, \{ 1, 2, \ldots, n \}) = f(x)$.
      \end{definition}

      In the above, $2^n$ refers to a vector of binary values corresponding to the number of input variables $n$. For each $x \in {\mathbb R}^n$, the lift $\nu(f)$ associates a set of values, one for each subset for the set of input variables, with the single value $f(x)$, “lifting” $f$ from ${\mathbb R}^n$ to the larger space ${\mathbb R}^n \times 2^n$.
      
      \begin{definition}\label{DefinitionTrivialLift}(Two trivial lifts)
        Let $f \in {\cal F}$ and let $x \in {\mathbb R}$.
        \begin{itemize}
        \item
          The $N$-lift of $f$ at $x$ is the function
          $$
          \nu_0(f, x, X) =
          \begin{cases}
            f(x) & X = N \\
            0 & {\mathrm {o.w.}} \\
          \end{cases}
          $$
        \item
          The $\emptyset$-lift of $f$ at $x$ is the function
          $$
          \nu_N(f, x, X) =
          \begin{cases}
            0 & X = \emptyset \\
            f(x) & {\mathrm {o.w.}} \\
          \end{cases}
          $$
        \end{itemize}
      \end{definition}

      It would seem at first that these lifts would produce different values.  But the Shapley values for the $\emptyset$-lift and the $N$-lift of a given function at a given point are actually equal. The $i^{\it th}$  Shapley value associated with the $\emptyset$-lift is
      \begin{equation}\label{ShapleyEmptysetLift}
        \phi_i(f, x) = \sum_{S \subseteq ( N \setminus \{ i \} )} \frac{|S|!(N - |S| - 1)!}{N!} ( \nu(f, x, S \cup \{ i \}) - \nu(f, x, S))
      \end{equation}
      For all $S \subseteq (N \setminus \{ i \})$ except for $\emptyset$ itself, the difference in the last term of Equation \ref{ShapleyEmptysetLift} is zero. Thus we are left with
      \begin{eqnarray}
        \phi_i(f, x) & = & \frac{0! (N - 1)!}{N!} f(x) \\
        & = & \frac{f(x)}{N}
      \end{eqnarray}

      The complement to that argument shows the Shapley values corresponding to the $N$-lift are the same, except that the only set with a non-trivial contribution to the sum is the set $N \setminus \{ i \}$, and the order of the factorial terms in the remaining term are reversed.

      Consider a non-trivial lift function, however.

      \begin{definition}\label{DefinitionNonTrivalLift}(A non-trivial lift)
        Let $f \in {\cal F}$ and $x \in {\mathbb R}^n$. Let the {\it half-weight lift} of $f$ at $x$ be
        $$
        \nu_{\mathrm {HWL}}(f, x, X) =
        \begin{cases}
          f(x) & X = N \\
          f(x) - i & X = N \setminus \{ i \} \\
          0 & {\mathrm o.w.}
        \end{cases}
        $$
      \end{definition}

      In this case, an argument not parallel to the arguments above gives us
      $$
      \phi_i(f, x) = \frac{f(x) - i}{N}
      $$
      with a correction term
      $$
      \phi_0(f, x) = \frac{N + 1}{2}
      $$

      These values are different from the ones corresponding to the ones computed in Equation \ref{ShapleyEmptysetLift}. The point is, there is no unique set of Shapley values corresponding to any given machine learning function $f$.

      There is one special case in which there is a particular ``natural'' lift for a given machine learning function: if the function itself supports 'fill-in' for all columns. In many cases, machine learning systems must deal with absent values in their training sets, and, in those cases, it makes sense to view the function $\nu$ as a function $\nu : {\cal F} \times {\mathbb R}^n \times {\mathscr P}(N) \to {\mathbb R}$, defined by first taking
      $$
      \lambda(f, x, S) =
      \begin{cases}
        x_i & i \in S \\
        {\mathrm {NA}} & {\mathrm {o.w.}} \\
      \end{cases}
      $$
      where ${\mathrm {NA}}$ is a formal symbol denoting the notion that `the value is not available' and must be filled in by the model itself, and then taking
      $$
      \nu(f, x, S) = f(\lambda(f, x, S))
      $$
      We shall examine the relationship between this lift and several published applications of Shapley values to machine learning systems below.
    \end{subsubsection}

    \begin{subsubsection}{Differential credit allocation}\label{SubsectionDifferentialCredit}

      The Shapley values provide a local credit allocation mechanism, but their computation requires a lift from the machine learning function to a set function. In this subsection, we consider an alternative approach to credit allocation. Here, we do not attempt to find a single local credit allocation for a given input, but rather attempt to find a differential credit allocation which explains the reasons for the difference in scores between two separate inputs. For this problem, we consider a different kind of credit allocation function and a different set of axioms to which any such function must adhere.

      A {\it differential credit allocation function} is a function which explains the reasons that two outputs of a machine learning function differ.

      \begin{definition}(Differential credit assignment function)\label{DefinitionDifferentialCreditAxioms}
        A function $\mu : {\cal F} \times {\mathbb R}^n \times {\mathbb R}^n \to {\mathbb R}^n$ is a {\it differential credit allocation function} if and only if it is
        \begin{trivlist}
        \item
          {\bf Efficient} For any $f \in {\cal F}$ and any $s, e$ in ${\mathbb R}^n$, $\sum_{i = 1}^n \mu_i(f, s, e) = f(s) - f(e)$.
        \item
          {\bf Linear} For any $f, g \in {\cal F}$, any $x, y \in {\mathbb R}^n$, and any $\alpha, \beta \in {\mathbb R}$, $\mu(\alpha f + \beta g, s, e) = \alpha \mu(f, s, e) + \beta \mu(g, s, e)$
        \item
          {\bf Insensitive to null variables} For any $f \in {\cal F}$, and any $i \leq n$, if $f(x)$ is constant along the $i^{\it th}$ component of its domain, then for any $s, e \in {\mathbb R}^n$, $\mu_i(f, s, e) = 0$
        \item
          {\bf Symmetric} if $f \in {\cal F}$ and $i, j \leq n$ are such that for any $x \in {\mathbb R}^n$, $f$ is unchanged when the $i^{\it th}$ and $j^{\it th}$ components of $x$ are interchanged, then the $i^{\it th}$ and $j^{\it th}$ components of $\mu(f, s, e)$ are always equal.
        \end{trivlist}
      \end{definition}

      Notice how similar the axioms in Definition \ref{DefinitionDifferentialCreditAxioms} appear to those in Definition \ref{DefinitionShapleyAxioms}, but notice also that a function which obeys Definition \ref{DefinitionDifferentialCreditAxioms} is a scoring function in the sense of Definition \ref{DefinitionCreditAssignment}. For any given lift, there is a straightforward relationship between the Shapley values associated with a given function and the definition of a credit allocation function as given above in Definition \ref{DefinitionCreditAssignment}:

      \begin{definition}(Differential Shapley)\label{DefinitionDifferentialShapley}
        Assume we have a lift of a scoring function $f \in {\cal F}$ into a set function $\nu_{f, x}$ for all $x \in {\mathrm R}^n$ as in Definition \ref{DefinitionLift}. Let $\phi_f(x)$ denote the Shapley values $\nu_{f, x}$ for each $x \in {\mathrm R}^n$. We can then define
        $$
        \Xi(f, x, S) = \phi_f(x) - E(\phi_f(y) | y \in S)
        $$
        for any $S \in \Sigma$.
      \end{definition}

      It is clear from the axioms characterizing the Shapley values, Differential Shapley is a credit assignment function in the sense of Definition \ref{DefinitionCreditAssignment}.
      
      The critical difference is that the axioms in Definition \ref{DefinitionDifferentialCreditAxioms} reflect behavior in an abstract extension of the domain of the machine learning function at a given point, and the axioms in Definition \ref{DefinitionShapleyAxioms} are effectively combinatorial, and reflect behavior in an abstract lift of the machine learning function to a higher dimensional space.

      In this subsection, we shall focus on a particular class of differential credit assignment functions: path-integral based differential credit assignment functions which may be applied to explain machine learning functions like neural networks, which are everywhere infinitely differentiable.

      \begin{theorem}\label{IGUniquenessTheorem}
        The only path integral based differential credit allocation function for continuously differentiable functions is IG.
      \end{theorem}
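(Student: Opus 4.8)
The plan is to separate the four defining properties of a differential credit allocation function into the three that any ``path-integral based'' method satisfies automatically and the one --- Symmetry --- that actually carries content. By a \emph{path-integral based} differential credit allocation function I mean a $\mu$ of the form $\mu_i(f,s,e)=\int_0^1 \partial_i f\bigl(\gamma(t)\bigr)\,\dot\gamma_i(t)\,dt$, where $\gamma=\gamma_{s,e}\colon[0,1]\to\mathbb{R}^n$ is a fixed (say piecewise-$C^1$) curve depending on the pair $(s,e)$ but not on $f$, with $\gamma(0)=e$ and $\gamma(1)=s$. For any such $\mu$, Linearity is just linearity of $\partial_i$ and of the integral; Efficiency is the gradient theorem, $\sum_i\mu_i(f,s,e)=\int_0^1\tfrac{d}{dt}f(\gamma(t))\,dt=f(s)-f(e)$; and insensitivity to null variables holds because $\partial_i f\equiv 0$ whenever $f$ is constant in $x_i$. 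So the theorem is equivalent to two statements: (i) the straight-line path $\gamma(t)=e+t(s-e)$, which yields $\mu_i=(s_i-e_i)\int_0^1\partial_i f\bigl(e+t(s-e)\bigr)\,dt$ --- i.e.\ IG --- is Symmetric; and (ii) it is the only path for which $\mu$ is Symmetric.

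Direction (i) is the easy half. Reading Symmetry as in \citet{DBLP:journals/corr/SundararajanTY17} --- i.e.\ invoked when $x_i,x_j$ enter $f$ symmetrically \emph{and} $s_i=s_j$, $e_i=e_j$ (this restriction is genuinely needed: for $f(x)=x_i^2x_j+x_ix_j^2$ with $s_i\neq s_j$, even IG assigns different values to $i$ and $j$) --- the segment from $e$ to $s$ is itself invariant under the transposition of coordinates $i$ and $j$. Since that transposition is a measure-preserving change of variables carrying the integrand defining $\mu_i$ to the one defining $\mu_j$, and line integrals of gradients are reparametrization-invariant, $\mu_i(f,s,e)=\mu_j(f,s,e)$.

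Direction (ii) is the heart of the matter, and the route I would take is the one behind the Aumann--Shapley diagonal value: use Linearity to reduce to a dense family of test functions --- for $C^1$ functions, the ``ray'' functions $f(x)=g(\langle a,x\rangle)$ are convenient --- compute the general-path value $\mu_i$ and the straight-line value on them, and show that imposing Symmetry for all coordinate pairs (and all symmetric endpoint configurations, exploiting the freedom to permute and rescale coordinates) forces $\gamma_i\equiv\gamma_j$ for every $i,j$, hence $\gamma$ is the segment $\overline{es}$ up to reparametrization and $\mu$ is IG. The step I expect to fight with is exactly this last implication. It is more delicate than it first appears: testing only against functions of a single pair $x_i,x_j$ is \emph{not} enough --- those constraints under-determine the curve's moments --- so one must use the full strength of Symmetry together with null-variable insensitivity across three or more coordinates, and handle the reparametrization ambiguity carefully (a curve that pauses off the diagonal is reparametrization-equivalent to one that does not). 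Rather than reproduce all of this, I would discharge direction (ii) by invoking the classical uniqueness of the symmetric path method from the cost-sharing / non-atomic game literature growing out of \citet{aumann-shapley}, of which Theorem~\ref{IGUniquenessTheorem} is the continuously-differentiable special case, and then verifying that the four axioms here imply its hypotheses.
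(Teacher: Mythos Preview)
Your proposal is correct and matches the paper's own treatment: the paper does not give an independent proof either, but simply cites \citet{aumann-shapley} for the uniqueness and \citet{DBLP:journals/corr/SundararajanTY17} for the observation that symmetry singles out the straight-line path. Your added scaffolding---showing explicitly that Linearity, Efficiency, and null-insensitivity hold for any path, and noting that Symmetry must be read with the endpoint restriction $s_i=s_j$, $e_i=e_j$ (else even IG fails it)---is more careful than what the paper supplies, but the core argument is the same deferral to the Aumann--Shapley diagonal result.
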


      The proof is given in  \citet{aumann-shapley}.  Sundararajan et al. \citeyearpar{DBLP:journals/corr/SundararajanTY17} show that symmetry reduces the set of non-self-intersecting paths upon which this computation can be performed to the single linear path between the endpoints.

      \begin{definition}(IG)\label{DefinitionIG}
        Let $f \in {\cal F}$ be everywhere continuously partially differentiable.   Then for any $s, e \in {\mathbb R}^n$, let 
  \begin{equation}\label{IG}
    \Xi_{IG}(f, s, e) =
    (e - s) \int_0^1 \nabla f((1 - \alpha) s + \alpha e) d \alpha
  \end{equation}
  
      \end{definition}

      From the axioms in Definition \ref{DefinitionDifferentialCreditAxioms} and Theorem \ref{IGUniquenessTheorem}, we see this is the mechanism for computing the Aumann-Shapley values in the sense of Definition \ref{DefinitionCreditAssignment}.
    \end{subsubsection}

  \end{subsection}

  \begin{subsection}{Comparing margin space to transformed spaces}\label{SubsectionTransformation}
    Because our primary interest in this work is in loan underwriting in the United States in 2019, where there are specific requirements to explain each model-based decision, it is often required to apply a smoothed ECDF of the output of a machine learning function. In this subsection, we discuss the general question of transformed output and explain the reasoning behind this particular choice.  For example, consider the following problems:
    
    \begin{trivlist}
    \item
      {\bf Loan application approval} Typically, lenders want to approve a fixed percentage of loan applications or wish to approve a set of loan applications which consume a given line of credit with optimal ROI, or some combination of both. The first of these is performed in rank space, and the second in what one would term ``expected payoff'' space.
    \item
      {\bf Medical outcome prediction} Machine learning can be used to predict whether a given medical procedure will be successful.  A physician wishes to understand the probability of success.
    \end{trivlist}

    In each of these cases, the natural interpretation of credit allocation is not in the output space of the machine learning function, but rather in the inverse transform of that output into the space in which the target function is defined. In the case of ranked loan application approval in which one wishes to approve a fixed fraction of all loan applications, for instance, one is not interested in the output of the machine learning system but rather in the rank of each loan within the collection of all loans. In the case of optimal ROI, one is not interested in the output of the machine learning system, but rather in the estimated ROI for any given loan. In the case of medical outcome prediction, one is not only interested in the probability of success, but also in the best way or ways to make the treatment or intervention more likely to succeed.

    Each of these cases requires predictions be transformed into a space other than the pure margin space output of the machine learning scoring function.  We refer to this transformed space as ``score space''.  To be useful in many applications, a credit assignment function must solve the more difficult problem of assigning credit in score space.

%    As a technical aside, we shall sometimes need to smooth the transformation from %margin space to score space, as sometimes the base transformation is a %discontinuous approximate of a continuous transformation. In this case, we shall %approximate the transformation with an almost everywhere differentiable %approximation thereto, whether we use a standard kernel-based estimate of the %underlying distribution or a piecewise linear spline. (Observe that a piecewise linear %spline is actually a kernel-based approximation using a set of uniform kernels.) This %approximation is theoretically valid since we only have a piecewise approximation of %an underlying smooth distribution.

    Since we do not know the distribution of scores in margin space, we instead use a smoothed empirical cumulative distribution function (ECDF,  \citet{dodge2006oxford}) to approximate the output. We shall use a piecewise linear spline approximating the ECDF to guarantee almost-everywhere differentiability and also computational efficiency. We shall refer to this as a ``Smoothed ECDF'' below.
  \end{subsection}
\end{section}

\begin{section}{Some difficult credit assignment problems}\label{SectionSampleScoringFunctions}
  In this section, we present several sample machine learning scoring functions that arise in practical applications, and discuss issues prior approaches have explaining them.  
  
  Consider the two scoring functions shown in Figure \ref{base_pair}. Both functions use a gradient boosting machine to predict an outcome.  GBM's produce piecewise constant functions. In the panel on the left, the machine learning model output is used directly. In the panel on the right, the output of that machine learning model is passed through a smoothed ECDF to transform the scores from the original non-uniform distribution to a uniform distribution more appropriate for making, e.g., loan underwriting decisions. As discussed above, we refer to the former of these output spaces as {\it margin space}; the latter {\it rank space}. 

IG cannot be applied to either of these, as the core machine learning score is piecewise constant and therefore not differentiable. Lundberg, et al. \citeyearpar{lundberg2018consistent} describe TreeExplainer for efficiently computing a Shapley value for the left-hand system. However, the method described in \citet{lundberg2018consistent} proposed for mapping explanations into score space relies on the assumption of statistical independence of the model inputs, a property which is almost never true in practical applications.

  \begin{figure}
    \centering
    \includegraphics[width=0.7\textwidth]{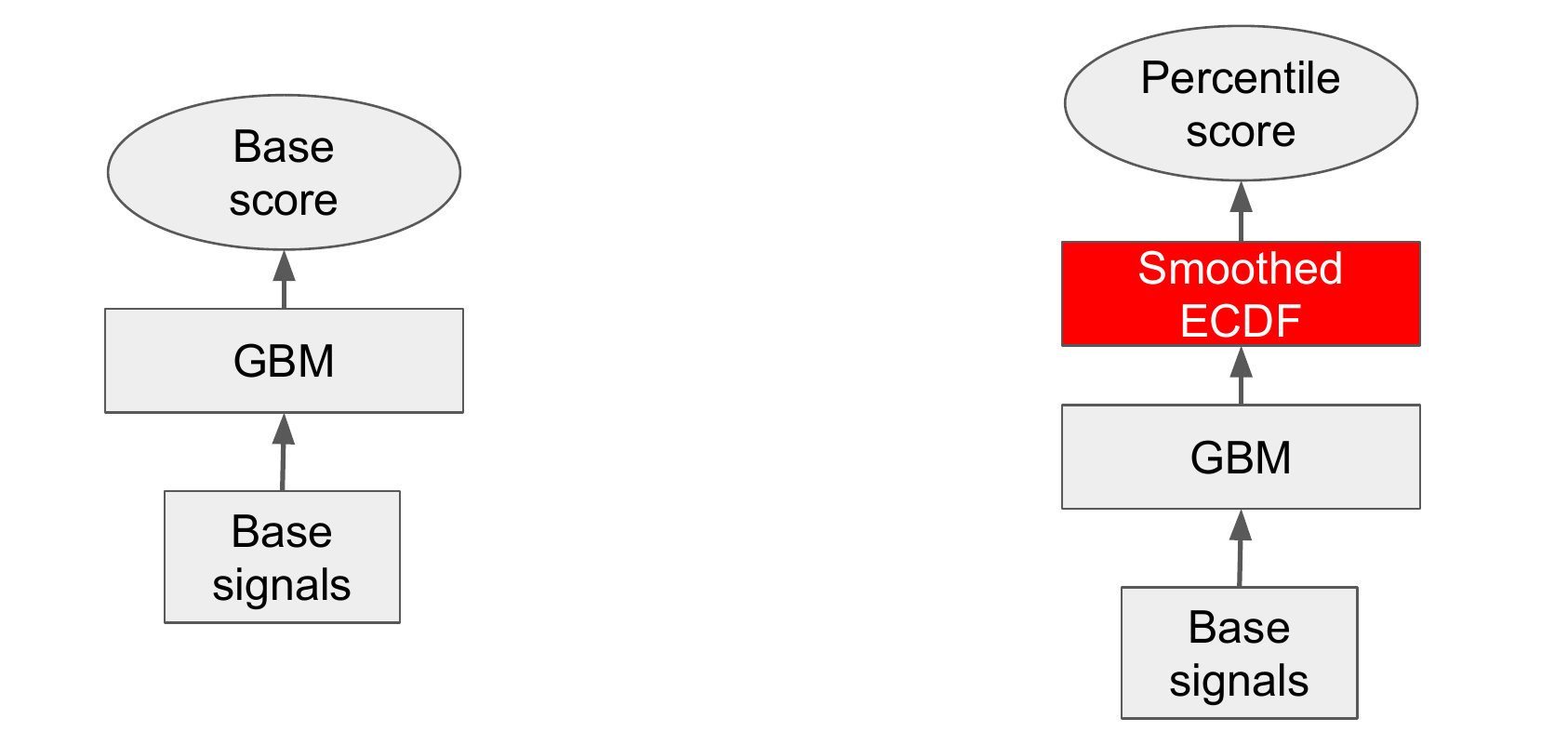}
    \begin{caption}
      {Two machine learning systems for credit assignment. GIG, TreeExplainer, and Shapley can assign credit to the system on the left, but only GIG and Shapley can assign credit to the system on the right.}    \label{base_pair}
%jwlm: leave the label in the caption, in latex you can't label figures only captions
    \end{caption}
  \end{figure}

  Next, consider the case of a linear ensemble of mixed model types that is a linear combination of scores from a GBM (piecewise constant), and neural network (continuous).  While credit can be assigned using prior techniques for this type of model,\footnote{First, use IG to allocate credit for the neural network and SHAP to allocate credit for the GBM; then apply the linear ensemble function to the credit assignments to arrive at the final credit assignment for the ensemble. \citet{MixedCreditPatent}} this same task becomes much more difficult when the results are passed through a smoothed ECDF as is required for their use in a decisioning system.

 % \begin{figure}\label{stacked_and_smoothed}
  %  \centering
   % \includegraphics[width=0.4\textwidth]{}
   % \begin{caption}
    %  {A notional example of a machine learning system which combines several different base classifiers linearly and then transforms the result into a quantile space}
    %\end{caption}
 % \end{figure}
  
  Finally, consider a non-linear ensemble function which combines many submodels of mixed type. Such non-linear ensemble methods have been used to improve the accuracy of an ensemble further than could be done with a linear combination of scores \citep{NonlinearEnsemble,WolpertStack}.  Moreover these more complex ensembles can be used to add additional constraints such as fairness to the ensemble score.

  We show below that only GIG (first described in \citealp{GIGPatent}) can provide axiomatically sound differential credit assignment for compositions of piecewise constant and continuous functions such as those described above.  Such function compositions commonly arise in domains like finance, where predictive accuracy is of utmost importance and a high-stakes decision is being made that must be explained.

 % \begin{figure}\label{deep_stacked_and_smoothed}
 %   \centering
  %  \includegraphics[width=0.4\textwidth]{}
    %\begin{caption}
    %  {A notional example of a machine learning system which compiones several
    %  different base classifiers with a non-linear function and then combines the result % into a quantile space.}
 %   \end{caption}
 % \end{figure}
  
\end{section}

\begin{section}{Intuitive description of the GIG method}\label{SectionIntuition}
  Let us start by considering IG, the jumping-off point for GIG.

  As discussed above, IG is an instance of Aumann-Shapley: it assigns credit between a pair of inputs, to determine the ``reasons why'' the model's score changed between those two inputs. Given the output of a continuous machine learning model (like an ANN), IG assigns credit by computing the component-wise integral of the gradient of the output of the model on the path from one input to the other as in Definition \ref{DefinitionIG}.  This is equivalent to the Aumann-Shapley value.

  \begin{figure}
    \centering
    \includegraphics[width=1.0\textwidth]{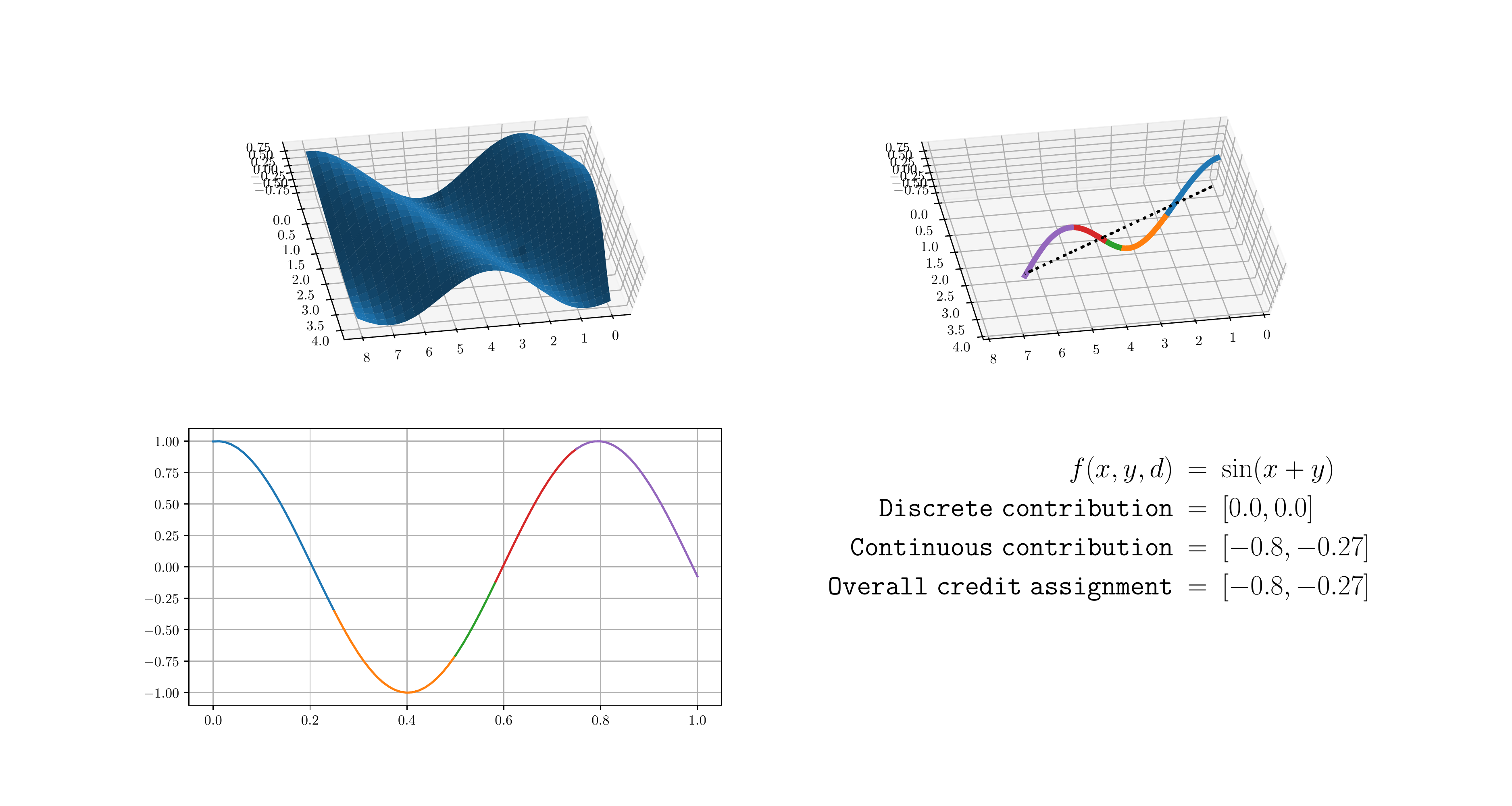}
    \begin{caption}
      {The interpretation of IG (and GIG) for a smooth function $f$ on ${\mathbb R}^2$ and $d$. The upper-left panel is $f$.  The upper-right panel, $f$ with respect to a path from $(0.5, 1)$ to $(7,3)$.  The lower-left panel, the cross section on the hyperplane induced by $f$ and $p$.  The lower-right is the credit assignment.  In this case, the discrete function $d$ maps all its inputs to 0, and therefore has no contribution. }\label{sine_only}
    \end{caption}
  \end{figure}

  To see what this means in pictures, see Figure~\ref{sine_only}. Here, $f$ is $\sin (x + y)$. The top left portion of the figure shows the graph of the function as a surface in 3-D, the top right, the path upon which the IG values will be evaluated, the bottom left, the function as restricted to the path, and the bottom right, the amount of credit assigned.

  If we allocate credit for the changes between the points $(0.5, 1)$ and $(7, 3)$, we would simply integrate the partial derivatives along the straight line between those two points, yielding a total credit allocation of $(-0.8, -0.27)$.

  \begin{subsection}{Extending IG to piecewise constant functions}\label{SubsectionGigPiecewiseConstant}
    
    In order to motivate the construction which follows, let us consider the problem of extending this kind of process to handle a piecewise constant function of a very specific kind: a function where there is a single boundary that is perpendicular to a single axis.

    We can not hope to integrate the partial derivatives of that discontinuous function, since the partials are almost everywhere zero, and undefined wherever they are not zero. Instead, one can mimic the process by which the Dirac delta function is constructed: build a sequence of continuous approximations to the discontinuous function, such that the limit is equal to the discontinuous function \citep{dirac}. One could then perform the integral in Equation \ref{IG} for each of those approximates and let the limit of those credit allocations serve as the credit allocation for the original discontinuous function. The resulting credit assignment would correctly allocate the size of the step as the amount of credit assigned, and would assign it, unavoidably, to the single variable upon which the function was defined.

    Unfortunately, this solution only works in functions of one variable. It assigns the appropriate amount of credit, but gives no indication of how to allocate that credit among many variables. We can, however, apply the technique in a special case: the case of a simple linear boundary which is orthogonal to one of the axes of the  function's inputs. For this case, we can construct a sequence of approximates as above, which all approach the discontinuity the same way, by making the value of each of the functions constant along each hyperplane parallel to the boundary, and approximate a step function with ever greater accuracy. We can then look at the limit of the integrals along the path which passes through those functions. This yields the expected result: at the limit, all of the credit is allocated along the axis along which the discontinuity occurred.  This process is shown in Figure~\ref{discrete}.

      \begin{figure}
      \centering
      \includegraphics[width=1.0\textwidth]{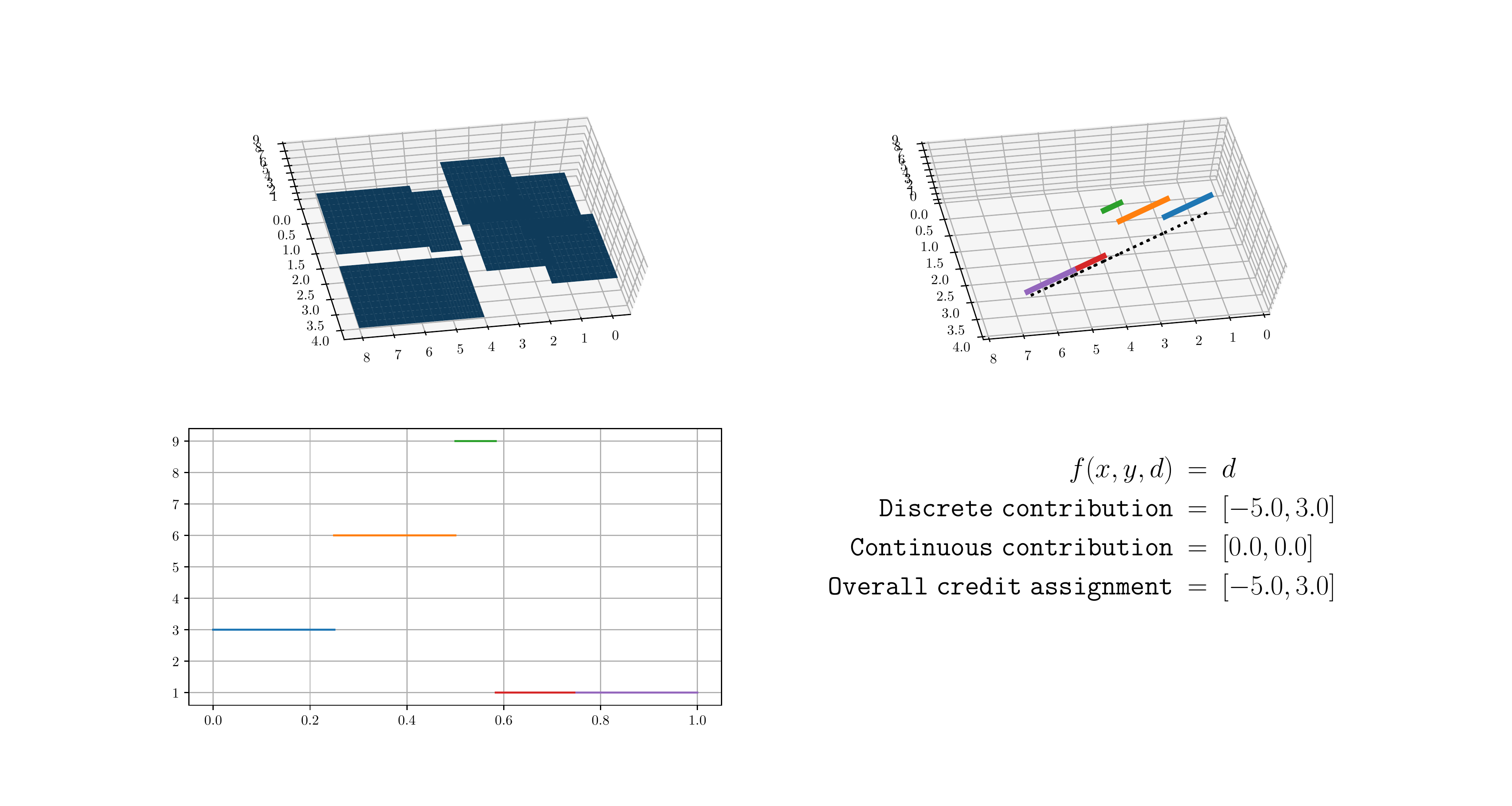}
      \begin{caption}
        {The credit allocated for a piecewise constant function $d$ on ${\mathbb R}^2$.  The panels are as in Figure \ref{sine_only} and the path the same.  We walk the path and measure the jumps in $x$ and $y$.  Here, $f$ has no contribution because it maps all its inputs to $d$. }\label{discrete}
      \end{caption}
    \end{figure}

    \begin{figure}
      \centering
      \includegraphics[width=0.5\textwidth]{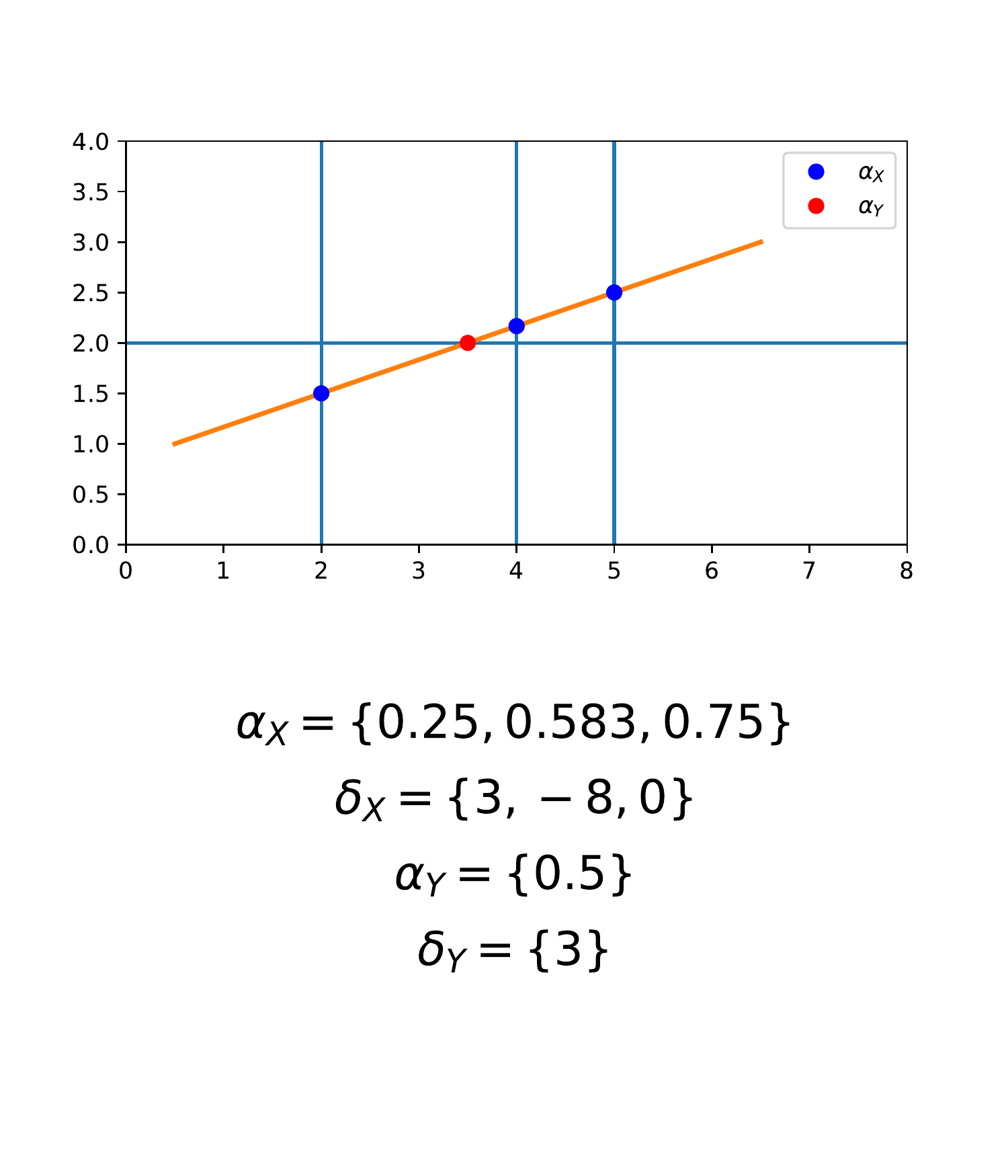}
      \begin{caption}
        {A schematic of a piecewise constant function with discontinuities perpendicular to the axes. The value of the function within each rectilinear cell is shown in the middle of that cell.}\label{discrete_grid}
      \end{caption}
    \end{figure}

    Observe that the actual path by which we traverse the discontinuity is irrelevant; the pattern of credit allocated is always exactly the same: all credit is allocated perpendicular to the boundary no matter what path is followed. In the case of IG in general, credit allocation varies with the path, but in this case, it does not.

    However, this mechanism is ill-defined in general. We can make it work if we chose the sequence of functions that are constant parallel to the hyperplanes upon which the discontinuity occurs. But there are many other sequences of functions, and different sequences yield different credit allocations. In the sections below, we formalize this process and prove the sequence of functions described above is the only solution which conforms to the axioms of interest.

    Examples of the GIG method are shown in several figures. Figure~\ref{discrete_grid} shows the values of a piecewise constant function of two variable with discontinuities perpendicular to the axes. Figure~\ref{discrete} shows how credit would be assigned for a linear path within that space for that particular function.

    We can extend this process to handle functions with more than one discontinuity, at least as long as the discontinuities are hyperplanes and the path along which the integral is made never intersects more than one discontinuity simultaneously. That case follows immediately from the proof of the single hyperplane case discussed above: the credit is assigned along the normal to the plane. That result can be extended to the case of smooth boundaries for which the normal is defined everywhere and the path intersects no more than one boundary at any given point -- one simply assigns credit along the normal at each intersection.

    The proof below deals with hyperplanes orthogonal to one or more axes where the path intersects one or more hyperplanes simultaneously. It, however, does not extend to the case of hyperplanes not orthogonal to a given axis, to say nothing of the general case of smooth boundaries.
  \end{subsection}

  \begin{subsection}{Extending IG to Piecewise Continuous Functions}\label{SubsectionGigPiecewiseContinuous}
    We can further extend the technique outlined in subsection \ref{SubsectionGigPiecewiseConstant} to a broad class of piecewise continuous functions by making a simple observation.

    Assume that $f$ is a piecewise continuous function whose discontinuities are always perpendicular to a single axis. We can consider decomposing the path along which we would compute the IG integral into a set of contiguous pieces each of which contained only a single discontinuity. If we could figure out how to allocate credit within each of those pieces, we could allocate credit to the entire path by adding the pieces together.

    We solve this problem by decomposing the piecewise continuous function $f$ into two subfunctions, a continuous function $f_C$ which captures the continuous aspect of $f$ and a piecewise constant function $f_D$ which captures the discontinuous aspect of $f$ such that $f = f_C + f_D$. That decomposition is obvious: letting $\hat \alpha$ be the unique value such that $f$ is discontinuous at $(1 - \hat \alpha) a + \hat \alpha b$. We then define
    $$
    f_D(\alpha) =
    \begin{cases}
      \lim_{\alpha \to \hat \alpha^-} f((1 - \alpha)a + \alpha b) & \alpha < \hat \alpha \\
      \lim_{\alpha \to \hat \alpha^+} f((1 - \alpha)a + \alpha b) & \alpha > \hat \alpha \\
    \end{cases}
    $$
    and we let $f_C = f - f_D$. It is straightforward to see that $f_C$ and $f_D$ meet the criteria above, and so the computation behaves as desired.

    In general, however, the problem of computing the two one-sided limits of the value of $f$ would at first seem to be impractical. However, with a small change of formalism and an observation, this problem goes away.

    The key observation is that any function $f$ on ${\mathbb R}^n$ which is piecewise differentiable off a set of hyperplanes which are perpendicular to one of another axis can be written in the form $g(x, d(x))$ for $x \in {\mathbb R}^n$ where $g$ is everywhere differentiable and each element of $d$ is piecewise constant off a set of hyperplanes each of which is perpendicular to a single axis. One performs this computation by enumerating the cells within which the $d_i$ are constant, obtaining a finite set of cells $C_1, C_2, \ldots, C_k$, and creating a function $d$ defined as
    $$
    d(x) =
    \begin{cases}
      1 & x \in C_1 \\
      2 & x \in C_2 \\
      \vdots & \vdots \\
      k & x \in C_k
    \end{cases}
    $$

    We can then extend $f$ to an everywhere differentiable function $g(x, n)$ by using a set of sufficiently smooth spline bases with compact and disjoint support.

    For any function of this form, we can compute the one-sided limits as follows:
    \begin{eqnarray}
      \lim_{\alpha \to \hat \alpha^-} f((1 - \alpha)a + \alpha b) =
                                                                  & g(((1 - \alpha)a + \alpha b), d(\frac{((1 - \alpha)a + \alpha b)}{2})) \\
      \lim_{\alpha \to \hat \alpha^+} f((1 - \alpha)a + \alpha b) =
                                                                  & g(((1 - \alpha)a + \alpha b), d(\frac{1 - ((1 - \alpha)a + \alpha b)}{2}))
    \end{eqnarray}

    An example showing the computation of the one-sided limits for the piecewise constant function is shown in Figure \ref{DiscreteAlphaCurve}.    Examples showing how this method can then be used to assign credit for two piecewise continuous functions are shown in Figures \ref{sine_and_offset} and \ref{compound_merger}.

    \begin{figure}
      \centering
      \includegraphics[width=0.5\textwidth]{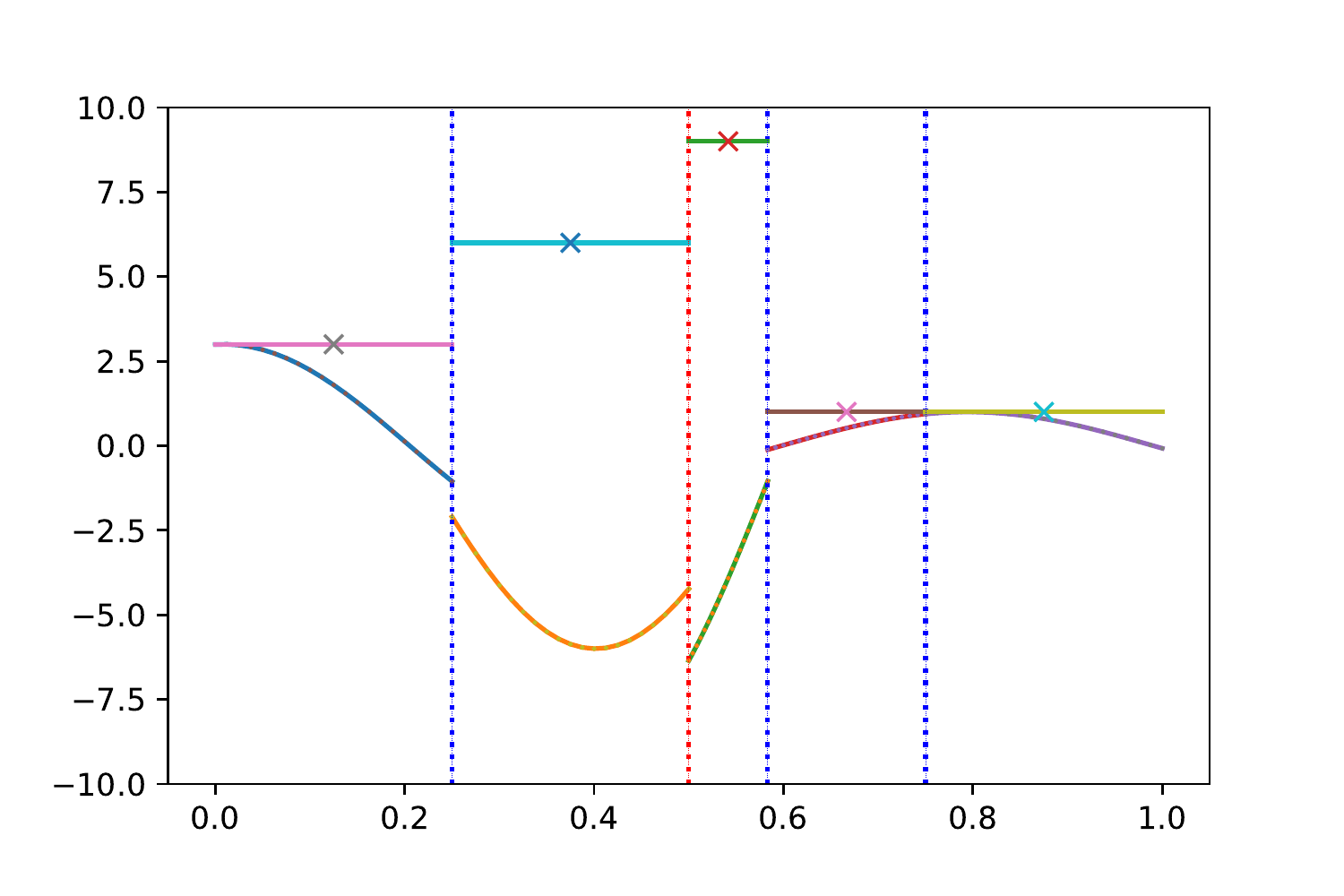}
      \begin{caption}
        {A schematic of how the individual one-sided limits are computed, by taking the midpoints of the segments along which $d$ is constant.}\label{DiscreteAlphaCurve}
      \end{caption}
    \end{figure}

    \begin{figure}
      \centering
      \includegraphics[width=1.0\textwidth]{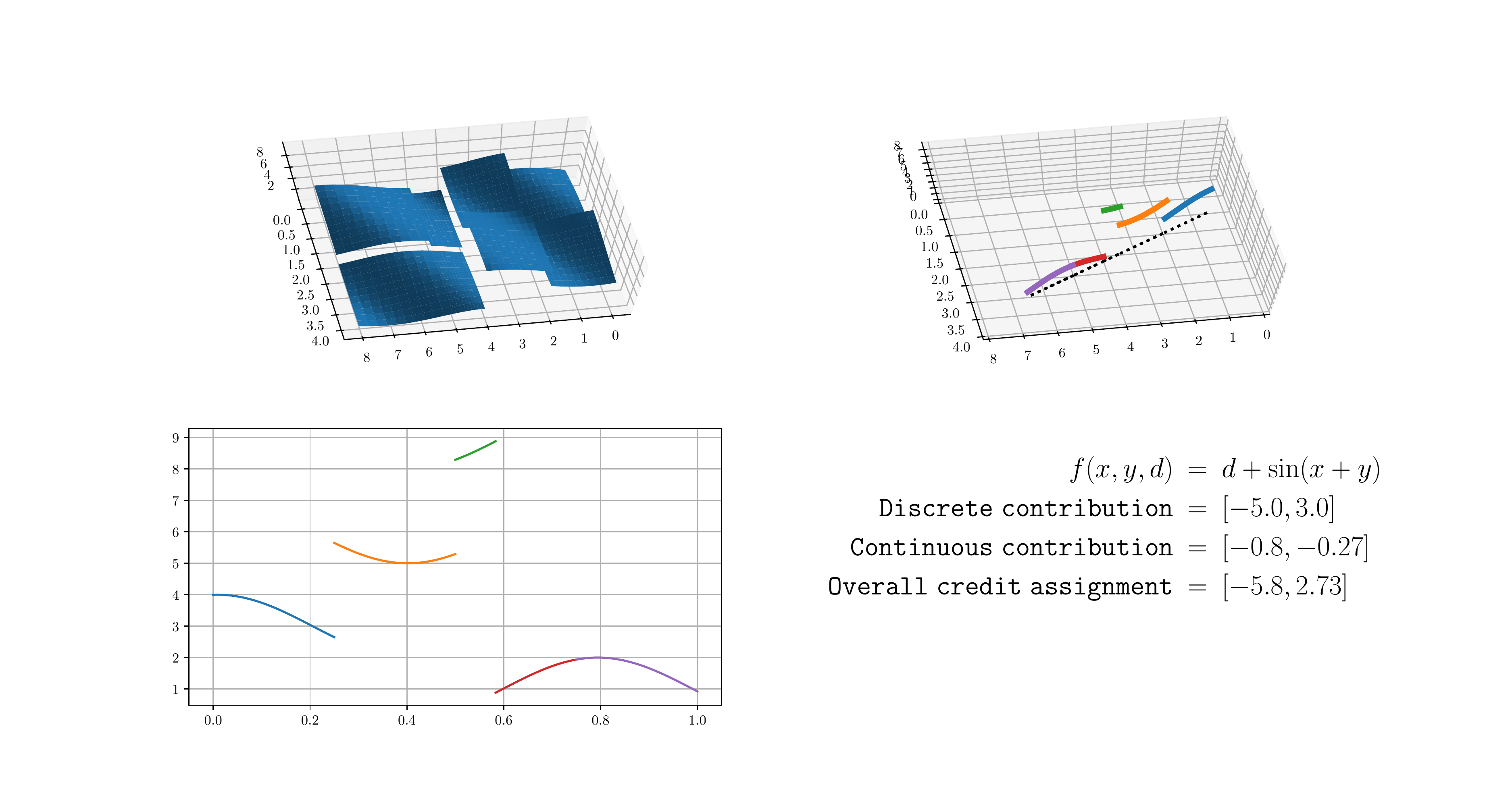}
      \begin{caption}
        {A straightforward example of credit assignment for a function which is the sum of a sine function and the piecewise continuous function displayed in Figure \ref{discrete}.}\label{sine_and_offset}
      \end{caption}
    \end{figure}

    \begin{figure}
      \centering
      \includegraphics[width=1.0\textwidth]{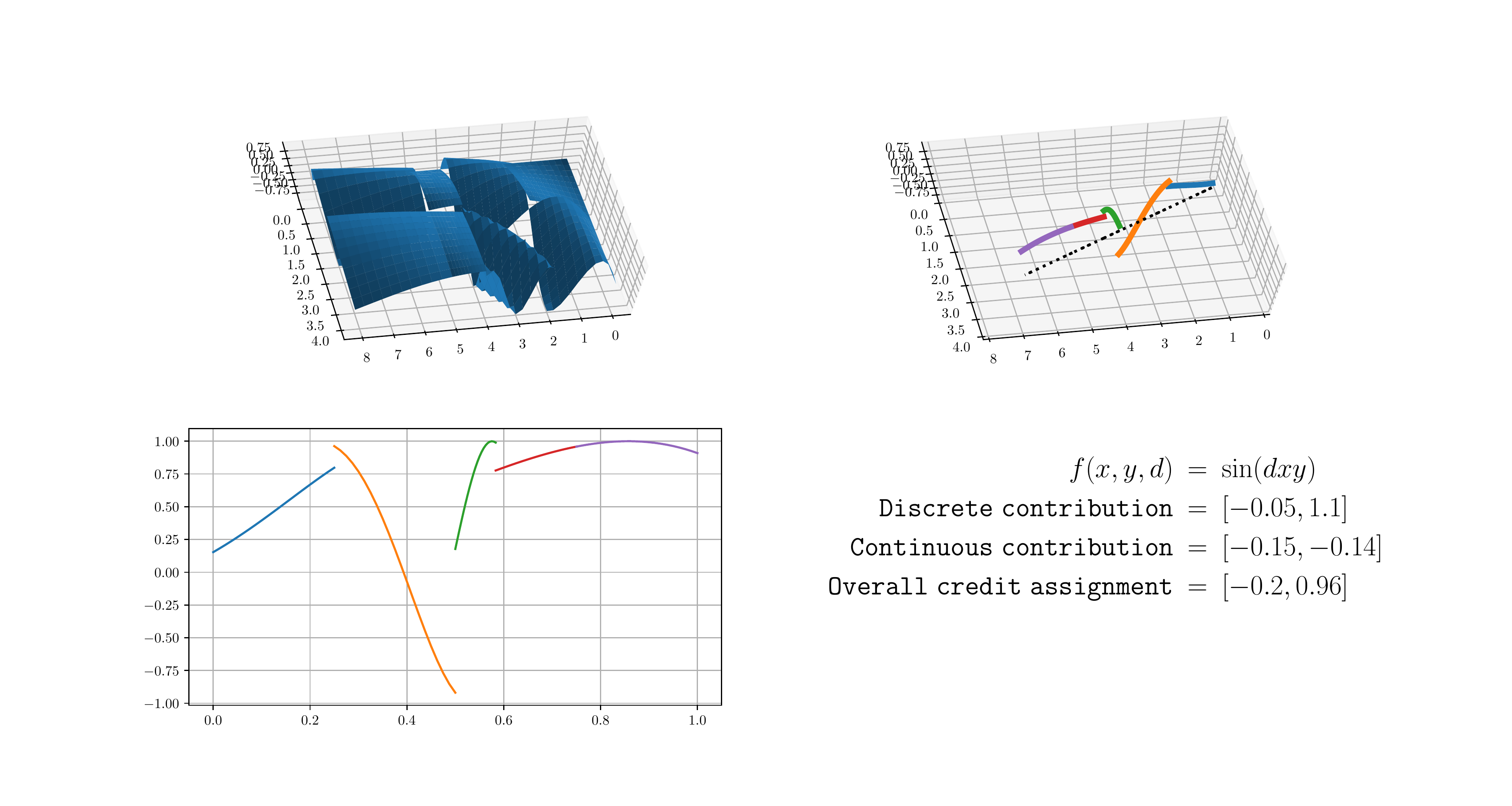}
      \begin{caption}
        {A more complicated example of credit assignment for a function which is the sine of a sine function applied to a complex product of the values along the two axes and the piecewise constant function shown in Figure \ref{discrete}.}\label{compound_merger}
      \end{caption}
    \end{figure}

  \end{subsection}
\end{section}

\begin{section}{Generalized Integrated Gradients}
  We start with two definitions.  First, the standard definition of an {\it orthant}.

  \begin{definition}\label{DefinitionOrthant}(Orthant)
        Let $x \in {\mathbb R}^n$. Then the {\it orthant containing $x$} is the set of all points $y \in {\mathbb R}^n$ each component of which all have the same sign as the corresponding component of $x$.  Given any $d \in \{ -1, 1 \}^n$, the {\it orthant indexed by $d$} is the orthant containing the point $d$.
  \end{definition}

  An orthant is the $n$-dimensional generalization of the same concept as quadrants: an orthant is the set of points around the origin separated by the orthogonal planes through the origin. The one-dimensional orthants are the positive and negative reals; the two-dimensional orthants are the quadrants; the three-dimensional orthants are the eight octants; and so on. The one-dimensional orthant indexed by $(1)$ is the set of positive reals, and the corresponding orthant indexed by $(-1)$ is the set of negative reals.

  We need a definition corresponding to the class of models we are studying (trees, neural networks, etc., and compositions thereof) that is rigorous enough to permit formal analysis:  We shall say that a function $f : {\mathbb R}^n \to {\mathbb R}$ is {\it piecewise continuous off a set of orthogonal hyperplanes} if and only if there exists a collection of collections, one collection for each dimension of the input space, $ \{ \{ b_{ij} : 1 \leq j \leq m_i \} : 1 \leq i \leq n \} $ such that $f$ is continuous on the line segment between $s \in {\mathbb R}^n$ and $e \in {\mathbb R}^n$ unless there exists an $i \leq n$ and a $j \leq m_i$ such that $B_{ij}$ lies between $s_i$ and $e_i$. See Figure \ref{compound_merger}  for a visual example.  More formally:

  \begin{definition}\label{PCOH}(Piecewise continuity off a set of orthogonal hyperplanes)
    A function $f : {\mathbb R}^n \to {\mathbb R}$ is piecewise continuous off a set of orthogonal hyperplanes if and only if
    \begin{itemize}
    \item
      There exists a function $D : {\mathbb R}^n \to {\mathbb R}^k$ which is constant everywhere except along a set of individual hyperplanes. That is, there exists a set of values $\{ b_{ij}  : 1 \leq i \leq n, 0 \leq j \leq j_i \}$ where we take $b_{i0} = -\infty$ and $b_{i{j_i}} = \infty$ by convention such that $D$ is constant on each Cartesian product $\prod_{i = 1}^n (b_{i{k_i}}, b_{i{(k_i + 1)}})$ for a sequence $\{ k_i: 0 \leq k < j_i \}$.
    \item
      There exists a continuous function $g : {\mathbb R}^n \times {\mathbb R}^k \to {\mathbb R}$ such that $f(x) = g(x, D(x))$.
    \end{itemize}
  \end{definition}
  \begin{example}\label{PCoOHpix} (Examples of functions which are piecewise continuous off a set of orthogonal hyperplanes.)
    \begin{itemize}
    \item
      Any continuous function is piecewise continuous off a set of orthogonal hyperplanes. This includes not just neural network models, but also linear models, SVMs, radial basis function models and a variety of other well-known model classes.
    \item
      The output of a model based on decision trees is piecewise continuous off a set of orthogonal hyperplanes. These models include decision trees, random forests, boosted tree models such as a GBM, and forests of extra-random trees (ETFs).
    \item
      The output of a continuous ensemble of any of the above. That is, the property of being piecewise constant of a set of orthogonal hyperplanes is preserved under continuous transformation, such as when a set of diverse submodels is composed with a neural network in a system of models.
    \item
      Any of the above passed through a differentiable function such as a smoothed ECDF, which is often employed to make machine learning scores more useful for decision-making.
    \end{itemize}
  \end{example}

  There is no computationally feasible mechanism in the general case of computing the integral required for GIG for an arbitrary path. Here we focus on describing a practical mechanism for implementing GIG in the case of a linear path between two points. Since this is the same path as is used in IG, the generalization of IG to GIG is straightforward.
  
  The implementation of GIG is also straightforward. First, given a set of boundaries (which can be readily produced via depth-first search of the decision trees) and a linear path (defined by the endpoints $s$ and $e$ in a differential credit assignment query), we enumerate the points where the path intersects one or more of the boundaries. Then, the sets of hyperplanes which intersect the path at each point are assembled. Next, we decompose the function along the path into two functions, one continuous along the path, and the other piecewise constant off of the intersection of the line segment with the set of discontinuities. We then compute the contribution of the continuous part of the decomposition and the contribution of the piecewise constant part of the decomposition. We add those two parts; the result is the credit assigned by GIG.

  Furthermore, we make a uniqueness claim about GIG: it is the only credit assignment process which satisfies several axioms. We use the terminology from \citet{shapley} and \citet{aumann-shapley} where appropriate. Since GIG is an extension of IG, we extend the axioms which define IG with three new axioms:

  \begin{definition}\label{GigAxioms}(Extended Aumann-Shapley axioms)
    A differential credit allocation function satisfies the {\it extended Aumann-Shapley axioms} if it is a differential credit assignment function as in Definition \ref{DefinitionDifferentialCreditAxioms} and also satisfies the following four axioms:
    \begin{trivlist}
    \item
      {\bf Strongly symmetric} A credit function $\Xi$ is strongly symmetric if and only if for any $f, g \in {\cal F}$ and any $i, j < n$, if $f(x_1, x_2, \ldots, x_i, \ldots, x_j, \ldots, x_n) = g(x_1, x_2, \ldots, x_j, \ldots, x_i, \ldots, x_n)$, then the $i^{\mathrm {th}}$ component of $\Xi(f, s, e)$ is the same as the $j^{\mathrm {th}}$ component of $\Xi(g, s, e)$ and vice versa as long as the function $f$ is everywhere continuous between $s$ and $e$.
    \item
      {\bf Insensitive to remote change} A credit assignment function $\Xi$ is insensitive to remote change if and only if for any function $f \in {\cal F}$ and any path, $p$, there exists an open set $O$ with $p((0, 1)) \subset O$ such that for any $g \in {\cal F}$ and any open $P \subseteq O$, if $g$ agrees with $f$ for all points within $P$, then $\Xi(f, p) = \Xi(g, p)$.
    \item
      {\bf Insensitive to constant variables} A credit assignment function $\Xi$ is insensitive to constant variables if and only if for any $f \in {\cal F}$ and any $s, e \in {\mathbb R}^n$ where $s_i = e_i$ for some $i \leq n$, the $i^{\mathrm {th}}$ component of $\Xi(f, s, e)$ is $0$.
    \item
      {\bf Reflexivity} A credit assignment function $\Xi$ is reflexive if for any $f \in {\cal F}$ and any $x, y \in {\mathbb R}^n$, $\Xi(f, x, y) = -\Xi(f, y, x)$.
    \end{trivlist}
  \end{definition}

  In order to construct any credit allocation function which supports all functions which are piecewise continuous off a set of orthogonal hyperplanes and which extends IG, we must show that IG meets all the extended Aumann-Shapley axioms. In the next few paragraphs, we give an intuitive explanation of each of the new axioms and explain why IG satisfies each of them.

  Strong symmetry is the extension of symmetry to a broader class of functions. Where symmetry applies only to an inversion which takes a function to itself, strong symmetry relates the credit assigned between two functions which are related by flipping variables. Effectively, this says the if the input to two functions is related by flipping two variables, the relation is symmetric. The proof that IG is strongly symmetric is simply a restatement of the proof that IG is symmetric, with one minor change which arises from the relationship between the partial derivatives. In IG, continuity on the line segment between the two points follows from the fact that the input functions must be everywhere continuous.

  Locality simply means that we don't need to know the value of the scoring function on any point outside the path in order to compute the credit allocation for that path. One can view this as an extension of insensitivity to null variables: a null variable doesn't change the function anywhere. Insensitivity to remote change is its local equivalent.

  Insensitivity to an invariant variable simply reflects the fact that if something didn't change then there is no credit to assign to it: every bit of the change in outcome happened without any change in the component, so the component didn't do anything. This is trivially true for IG as the partial derivative corresponding to an unchanged variable will always be 0.

 % Reflexivity is simply the statement that credit for ``doing something backwards'' is the opposite of the credit for ``doing it forwards''. Stated that way, it's an obvious axiom. It's also trivially true in the case of IG, since inverting the endpoints of any integral results in the same integral as the value with the original endpoint but with the signs reversed. 
  
  This allows us to formally define Generalized Integrated Gradients and state the main result of this paper.

  \begin{theorem}\label{GigTheorem}(Generalized Integrated Gradients, GIG)
    Let $f : {\mathbb R}^n \to {\mathbb R}$ be piecewise continuous off a set of orthogonal hyperplanes and let $s, e \in {\mathbb R}^n$. Let $\alpha_j$ denote the values of $\alpha$ such that $(1 - \alpha) s + \alpha e$ lies on a boundary. Then there is a unique set of values, $\zeta_j \in {\mathbb R}^n$ and a set of values $\iota_0, \iota_1 \in {\mathbb R}^n$ such that
    \begin{equation}\label{EqnGigFormula}
    \zeta + \iota_0 + \iota_1 + \sum \int_{\alpha_j}^{\alpha_{j + 1}} \nabla f(x) dx
    \end{equation}
    satisfies extended Aumann-Shapley axioms in Definition \ref{GigAxioms}. 
  \end{theorem}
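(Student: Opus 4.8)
The plan is to establish the theorem in two halves. For \emph{existence} I would exhibit the quantity in Equation~\ref{EqnGigFormula} as an explicit construction built from IG on the continuous pieces of the path plus a limiting construction on the jumps, and then verify it against each axiom of Definition~\ref{GigAxioms}. For \emph{uniqueness} I would run a linearity-reduction argument: any admissible $\Xi$ is pinned down on a family of functions spanning (in the function-algebra sense, along the segment) every $f$ that is piecewise continuous off a set of orthogonal hyperplanes, and the construction is forced to be the only member of that family.

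\textbf{Existence.} Using Definition~\ref{PCOH}, write $f(x) = g(x, D(x))$ with $g$ continuous and $D$ piecewise constant off the hyperplanes $\{b_{ij}\}$. Restricting to $\gamma(\alpha) = (1-\alpha)s + \alpha e$, the crossing values $\alpha_0 = 0 < \alpha_1 < \cdots < \alpha_m < \alpha_{m+1} = 1$ partition $[0,1]$ into open intervals on each of which $D\circ\gamma$ equals a constant $d^{(j)}$, so that $f\circ\gamma$ there agrees with the continuous function $g(\cdot, d^{(j)})$; applying IG (Definition~\ref{DefinitionIG}) on each subinterval produces the term $\sum_{j=0}^{m}\int_{\alpha_j}^{\alpha_{j+1}} \nabla f$. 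At each crossing $\alpha_j$, I would use the construction of Section~\ref{SubsectionGigPiecewiseConstant}: replace $D$ near $\gamma(\alpha_j)$ by a sequence of continuous approximants that are constant along every hyperplane parallel to the boundaries met by $\gamma$ at that point, compute $\Xi_{IG}$ for each, and pass to the limit. The limit assigns, to each axis $i$ whose hyperplane is crossed at $\alpha_j$, a share $\zeta_{j,i}$ of the total jump of $f$ at $\gamma(\alpha_j)$, and zero to every other axis; set $\zeta = \sum_j \zeta_j$. The terms $\iota_0, \iota_1$ absorb the one-sided endpoint corrections of Section~\ref{SubsectionGigPiecewiseContinuous} that appear when $s$ or $e$ itself lies on a boundary. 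It then remains to check the axioms: efficiency is a telescoping sum over the subintervals together with the jumps; Linear, Symmetric and Strongly symmetric descend from the corresponding properties of IG together with the invariance of the approximant family under permuting the crossed axes; Insensitive to null variables and Insensitive to constant variables hold because the relevant partial derivatives and jump-shares vanish; Insensitive to remote change holds because every ingredient depends only on $f$ in an arbitrarily thin tube around $\gamma((0,1))$; and Reflexivity holds because reversing the path negates $e - s$, reverses the order of the $\alpha_j$, and swaps one-sided limits.

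\textbf{Uniqueness.} Suppose $\Xi$ satisfies Definition~\ref{GigAxioms}. By the Linear axiom it suffices to determine $\Xi$ on continuous functions and on elementary step functions jumping across a single hyperplane. On continuous $f$, Insensitive to remote change lets us modify $f$ outside a thin tube around $\gamma$ without changing $\Xi$, so $\Xi$ restricted to continuous functions meets the hypotheses of Theorem~\ref{IGUniquenessTheorem}, forcing $\Xi(f,s,e) = \Xi_{IG}(f,s,e)$. For an elementary step across a hyperplane orthogonal to axis $i$, Insensitive to null variables forces every component except the $i$-th to vanish and Efficient forces the $i$-th to equal the jump; this settles any crossing at which $\gamma$ meets exactly one hyperplane. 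The decisive remaining case is a crossing $\alpha_j$ where $\gamma$ passes through a corner at which hyperplanes orthogonal to a set $T$ of axes meet simultaneously. Here I would write the corner jump of $f$ as a sum of $|T|$ elementary single-hyperplane steps along a short monotone lattice path through the corner (equivalently, an inclusion--exclusion over the $2^{|T|}$ orthants of Definition~\ref{DefinitionOrthant} meeting there), each summand itself piecewise continuous off orthogonal hyperplanes; Linear then reduces the corner to already-solved single-hyperplane pieces, and Strongly symmetric forces the shares on the axes of $T$ to be the equal ones in the symmetric configuration, with the general split obtained by applying this to the difference against a symmetrized reference. Assembling the cases shows $\Xi$ equals the construction above, so $\zeta_j, \iota_0, \iota_1$ are unique.

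\textbf{Main obstacle.} The hard part is the corner case, i.e.\ a path meeting several orthogonal hyperplanes at a single point. Two points need real care there: first, arranging the decomposition of a simultaneous multi-hyperplane jump into single-hyperplane pieces so that each piece lies in the class of Definition~\ref{PCOH} \emph{and} so that the axioms actually fix each piece's contribution --- this is exactly where the strong symmetry of Definition~\ref{GigAxioms}, rather than the weaker Symmetric axiom of Definition~\ref{DefinitionDifferentialCreditAxioms}, is indispensable, since the pieces relate genuinely different functions; and second, on the existence side, showing that the limit of $\Xi_{IG}$ over the parallel-constant approximants is independent of both the approximating sequence and the sub-path used to traverse the corner --- the observation flagged in Section~\ref{SubsectionGigPiecewiseConstant} that ``the actual path by which we traverse the discontinuity is irrelevant.'' Endpoints lying exactly on a boundary are a minor but genuine bookkeeping complication, which is precisely what $\iota_0$ and $\iota_1$ are introduced to absorb.
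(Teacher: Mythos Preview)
Your overall architecture matches the paper's: split into continuous pieces handled by IG and isolated jump contributions, then argue existence and uniqueness separately. The gap is precisely where you flagged it, the corner case, and it bites both halves of your argument.

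On existence, you propose to define $\zeta_j$ as the limit of $\Xi_{IG}$ over continuous approximants constant along hyperplanes parallel to the boundaries. The paper uses that device only as motivation for a single hyperplane, and explicitly warns (Section~\ref{SubsectionGigPiecewiseConstant}) that ``this mechanism is ill-defined in general'': at a corner of radix $k>1$ there are many approximating sequences, and different sequences yield different limits. The paper does \emph{not} build $\zeta$ as a limit; it constructs it directly via the orthant decomposition (Definition~\ref{GigOrthantDecomposition}, Proposition~\ref{GigOrthantSum}) and the explicit coefficients $\eta_{(k,j)}$ of Theorem~\ref{GigConstantDecomposition}, and then verifies the axioms (Theorem~\ref{GigZetaIotaSatisfaction}). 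Your existence argument would need an independent proof that the approximant limit is well-defined at corners, and none is given.

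On uniqueness, your plan is to write the corner jump as a sum of $|T|$ single-hyperplane steps ``along a short monotone lattice path,'' then invoke linearity. But this decomposition is not a decomposition of the \emph{function}: the characteristic function of a single orthant is not a linear combination of single-hyperplane Heaviside steps. Read as a decomposition of the \emph{path}, it is path-dependent --- in two dimensions with orthant values $a,b,c,d$, the two lattice paths allocate $(b-a,\,d-b)$ versus $(d-c,\,c-a)$, which differ unless $b+c=a+d$. Strong symmetry does not rescue this: it relates two functions obtained by swapping axes, so it constrains the symmetric configuration but says nothing about a single asymmetric $f$. The paper's uniqueness argument is genuinely different: it decomposes $f$ linearly into orthant characteristic functions, uses symmetry to reduce to the block form $d_i=(1,\ldots,1,-1,\ldots,-1)$, and then runs an \emph{induction on the number of negative entries} in which adjoining adjacent orthants collapses one hyperplane and drops the radix by one (Theorem~\ref{GigConstantDecomposition}), yielding the recursion~(\ref{GigZetaValues}) and hence the closed form~(\ref{GigEtaShapley}). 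That inductive dimension-reduction is the missing idea in your proposal. A secondary omission is the endpoint terms: the paper needs Reflexivity, not merely efficiency, to pin down the off-axis orthant weights $\alpha_d=\tfrac{1}{2}$ in the construction of $\iota_0,\iota_1$ (Section~\ref{GigIotaConstruction}); your sketch treats these as routine bookkeeping but does not indicate how the axioms force them.
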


  The proof is in several parts. First, we establish some basic properties which any realization of the $\zeta$ and the $\iota$ values would need to have. Then we work with piecewise constant functions, showing:
  \begin{enumerate}
  \item  
    The contribution of any given discontinuity lies only in the dimensions across which the function is discontinuous. That is, if there are two variables $x_i$ and $x_j$ which are discontinuous at a given point, then the only non-zero components of $\zeta$ corresponding to that point are the $i^{\mathrm {th}}$ or the $j^{\mathrm {th}}$. (Lemma \ref{GigRestrictVariables}.)
  \item
    We may consider each individual discontinuity in isolation: the values of $\zeta$ or $\iota$ depend only on the discontinuity to which each one corresponds. (Lemma \ref{GigSingleDiscontinuity}).
  \item
    The contribution of a single discontinuity is computed by taking a single sum based on a set of known coefficients which depend only on the number of dimensions across which the function is discontinuous at that point. (Lemmas \ref{GigOrthantSum}, \ref{GigOrderOnly}, and \ref{GigConstantDecomposition})
  \end{enumerate}

  We then explicitly consider the case where the path passes through a discontinuity, performing the following steps:
  \begin{enumerate}
  \item
    Formally define the mechanism by which one computes $\zeta$ from $\eta_O$ for a single discontinuity of radix $k$.(Lemmas \ref{GigOrthantSum} and \ref{GigOrthantDecomposition}.)
  \item
    Describe how to compute the $\eta_O$ value for the characteristic function of a specific orthant $O$ for an intersection of radix $k$. (Lemma \ref{GigOrderOnly} and Theorem \ref{GigConstantDecomposition}.)
  \item
    Formally describe how the $\zeta$ value for that discontinuity is computed (Definition \ref{GigZetaConstruction}) and then show how the $\iota$ values at each endpoint at computed (Definition \ref{GigIotaConstruction}). 
  \item
    Prove the resulting credit allocation function satisfies the extended Aumann-Shapley axioms. (Theorem \ref{GigZetaIotaSatisfaction})
  \item
    Show the sum of IG on the continuous portions of $f$ and the values of $\zeta$ and $\iota$ is the only function that satisfies the extended Aumann-Shapley axioms.
  \end{enumerate}

  \begin{subsection}{Isolating the discontinuities}\label{GigIsolateDiscontinuities}

  We start with a simple restatement of insensitivity to constant variables:

  \begin{lemma}\label{GigRestrictVariables}
    For $f \in {\cal F}$, if $s$ and $e$ lie on $H$, a single orthogonal hyperplane of dimension $l < k$ and, then $\Xi(f, s, e)$ also lies within $H$.
  \end{lemma}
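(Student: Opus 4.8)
The plan is to observe, as the surrounding text already signals (``a simple restatement of insensitivity to constant variables''), that this is an immediate consequence of the \textbf{insensitive to constant variables} axiom of Definition \ref{GigAxioms}. The only preliminary work is to unpack the hypothesis. An \emph{orthogonal} hyperplane $H$ of dimension $l<n$ is, by the meaning of ``orthogonal'' used throughout (a plane perpendicular to one or more coordinate axes), cut out by freezing a set of coordinates: there is an index set $I \subseteq \{1,\dots,n\}$ with $|I| = n-l$ and reals $\{c_i : i \in I\}$ so that $H = \{x \in {\mathbb R}^n : x_i = c_i \text{ for all } i \in I\}$. (For a single axis-orthogonal hyperplane $|I|=1$; the same argument covers the intersection of several, which is the case actually used downstream.)

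With this description in hand the argument is one line. Since both endpoints $s$ and $e$ lie on $H$, for every $i \in I$ we have $s_i = c_i = e_i$, so in particular $s_i = e_i$. Applying the insensitivity-to-constant-variables axiom coordinatewise, the $i^{\mathrm{th}}$ component of $\Xi(f,s,e)$ is $0$ for each $i \in I$. Hence the only axes along which $\Xi(f,s,e)$ can be nonzero are those in $\{1,\dots,n\}\setminus I$ --- exactly the free directions of $H$ --- so $\Xi(f,s,e)$ lies within $H$.

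There is no genuine obstacle here; the one point requiring a word of care is the bookkeeping convention behind the phrase ``lies within $H$'', since $\Xi(f,s,e)$ is a priori just a point of ${\mathbb R}^n$ and $H$ is an affine flat. The intended (and only sensible) reading is that $\Xi(f,s,e)$ has vanishing component along every axis normal to $H$, i.e.\ it belongs to the linear direction space $\{v \in {\mathbb R}^n : v_i = 0 \text{ for } i \in I\}$ of $H$; once that is fixed, the proof is the single appeal to the axiom above. This lemma then plays its stated role in the isolation step: a path confined to a coordinate slab cannot leak any credit into the frozen directions, which is what lets us later treat each discontinuity independently.
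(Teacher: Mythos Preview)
Your proposal is correct and matches the paper's intended approach exactly: the paper presents this lemma without proof, introducing it merely as ``a simple restatement of insensitivity to constant variables,'' and your argument is precisely that restatement spelled out. Your additional clarification of what ``lies within $H$'' should mean (the linear direction space rather than the affine flat) is a helpful gloss the paper omits.
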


  We also present a simple invariance case:

  \begin{lemma}\label{GigSinglePoint}
    Let $f$ be everywhere constant off a single point $z \in {\mathbb R}^n$, and let $s, e \in {\mathbb R}$ be distinct. Then $\Xi(f, s, e) = 0$.
  \end{lemma}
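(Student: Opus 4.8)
The plan is to peel off the ambient constant by linearity, dispose of the generic configuration with locality, and then treat the one genuinely delicate case — $z$ lying on the open segment — by a Dirac-style approximation together with a short symmetry computation for IG. First I would invoke the \emph{linearity} axiom. Since $f$ equals some constant $c$ off $z$, write $f = c\,\mathbf 1 + g$, where $\mathbf 1$ is the all-ones constant and $g$ vanishes off $z$ with $g(z) = f(z) - c$. Because $\mathbf 1$ is constant along every coordinate, \emph{insensitivity to null variables} forces $\Xi(\mathbf 1, s, e) = 0$, and the same applies to the zero function; hence by linearity $\Xi(f, s, e) = \Xi(g, s, e)$, and it suffices to prove $\Xi(g, s, e) = 0$ for the ``spike'' $g$.

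Next I would split on the position of $z$ relative to the closed segment $[s,e]$. If $z \notin [s,e]$ then, by compactness of $[s,e]$, the point $z$ has positive distance from it, so $g$ agrees with the zero function on an open neighbourhood of $[s,e]$, for instance on $\mathbb{R}^n \setminus \{z\}$. Then \emph{insensitivity to remote change} gives $\Xi(g, s, e) = \Xi(0, s, e) = 0$. (If $z$ were an endpoint the asserted conclusion would contradict \emph{efficiency} unless $g(z)=0$, so I read the statement as covering $z$ off the segment or strictly interior to it; the interior case is what remains.)

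For the remaining case, $z = (1-\alpha_0)s + \alpha_0 e$ with $\alpha_0 \in (0,1)$ — a removable discontinuity of $g$ along the path, with both one-sided limits equal to $0$ — I would follow the same limiting prescription the paper adopts for discontinuities in Section~\ref{SubsectionGigPiecewiseConstant}: take continuously differentiable bumps $g_m \in {\cal F}$ supported in $B(z, 1/m)$, each \emph{even about} $z$ (so $g_m(z+v) = g_m(z-v)$) and with $g_m(z) = g(z)$, so that $g_m \to g$ pointwise and $g_m$ agrees with $g$ outside $B(z,1/m)$. Each $g_m$ is continuously differentiable, so since $\Xi$ extends IG its credit is $(e-s)\int_0^1 \nabla g_m((1-\alpha)s + \alpha e)\,d\alpha$ (Theorem~\ref{IGUniquenessTheorem}, Definition~\ref{DefinitionIG}); evenness of $g_m$ about $z$ makes $\alpha \mapsto \partial_i g_m((1-\alpha)s + \alpha e)$ an odd function about $\alpha_0$ supported in a shrinking interval around $\alpha_0$, whence every component of that integral vanishes and $\Xi(g_m, s, e) = 0$. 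It then remains to pass to the limit, which I would do by applying \emph{insensitivity to remote change} on the two sub-paths that stay a fixed positive distance from $z$ (where $g_m \equiv g$ for all large $m$) and isolating all remaining behaviour into a single term localized at $z$, which is exactly the bump contribution just shown to vanish.

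\textbf{Main obstacle.} The crux is precisely this last limit passage: the axioms give locality of $\Xi$ only with respect to points \emph{off} the path, and no continuity of $\Xi$ under pointwise limits is postulated, so the argument must route the entire approximation through \emph{insensitivity to remote change} on portions of the path bounded away from $z$, reducing the ``bad'' behaviour to a single concentrated contribution that the even-bump/IG computation kills. Everything before that step — the reduction to a spike and the off-segment case — is routine bookkeeping with linearity, efficiency, and the two insensitivity axioms.
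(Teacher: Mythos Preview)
Your approach is substantially more elaborate than the paper's and, in the interior case, does not close. The paper's argument is two lines: translate so that $z=0$; then swapping any pair of coordinates fixes $0$ and permutes the complement of $\{0\}$, so $f$ is invariant under every transposition and the \emph{symmetry} axiom forces all components of $\Xi(f,s,e)$ to coincide. \emph{Efficiency} (the paper writes ``Linearity'' but uses $\sum_i \Xi_i = f(s)-f(e)$) then gives that this common value has sum $0$, since $s,e\neq z$ implies $f(s)=f(e)$. Equal numbers summing to zero are each zero. No locality, no approximation, no case split on the position of $z$ is needed.

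The gap in your route is exactly where you flag it, and it is not repairable with the axioms on hand. To ``isolate all remaining behaviour into a single term localized at $z$'' you would need additivity of $\Xi$ along concatenated sub-paths, i.e.\ something like $\Xi(g,s,e)=\Xi(g,s,a)+\Xi(g,a,b)+\Xi(g,b,e)$, and no such axiom is available (the paper only derives that kind of decomposition \emph{after} this lemma, using it as an ingredient). Likewise there is no continuity of $\Xi$ under pointwise limits, so even granting the IG computation $\Xi(g_m,s,e)=0$ you cannot conclude $\Xi(g,s,e)=0$; \emph{insensitivity to remote change} compares two functions that agree on an open neighbourhood of the path, and $g$ and $g_m$ disagree precisely at the on-path point $z$. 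The off-segment case you handle with locality is fine, and your reduction to the spike via linearity and null-insensitivity is correct but unnecessary: the paper's symmetry-plus-efficiency argument applies directly to $f$ itself.
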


  \begin{proof}
    Without loss of generality, assume that $z = 0$. Observe that $f$ now satisfies the hypothesis of the Symmetry axiom for all pairs $s_i, e_j$, and that we therefore know that $\Xi(f, p)_i = \Xi(f, p)_j$ for all $i$ and $j$. But we also know, by Linearity, that $\sum_i \Xi(f, p)_i = 0$. If the sum of a set of equal values is zero, then each value is 0. 
  \end{proof}
  
  \begin{lemma}\label{GigSingleDiscontinuity}{\bf (Single intersections are universal)}
    Given any $f \in {\cal F}$, any path $p$ from $s, e \in {\mathbb R}^n$, and any set of hyperplanes, we may reduce the problem of computing $C(f, s, e)$ to the sum of its values along a partition of $p$ into a set of open intervals and two isolated points (the endpoints of the path) such that each of $\Xi$ along the full path is $t$, for each of which there is only one value of $\alpha$ at which the sub-path intersects with a hyperplane and a set of subsets at the boundaries of these hyperplanes.
  \end{lemma}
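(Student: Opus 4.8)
The plan is to reduce the global computation of $\Xi(f,s,e)$ to a finite sum of local computations, using only the finiteness of the boundary set and additivity of the credit functional under subdivision of a straight path. First I would record that, by Definition~\ref{PCOH}, each coordinate $i$ has only finitely many boundary values $b_{ij}$, so the straight path $\alpha\mapsto(1-\alpha)s+\alpha e$ meets the hyperplane $\{x_i=b_{ij}\}$ in at most one value of $\alpha$ whenever $s_i\neq e_i$. The only way infinitely many crossings, or a crossing ``spread out'' over an interval, can occur is if $s_i=e_i$ equals some $b_{ij}$, in which case the whole path lies inside that hyperplane; there I would first invoke Lemma~\ref{GigRestrictVariables} to push the entire problem into the corresponding lower-dimensional coordinate subspace and induct on $n$. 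So without loss of generality the crossing set $A=\{\alpha\in[0,1]:(1-\alpha)s+\alpha e\text{ lies on a boundary}\}$ is finite, say $A=\{\alpha_1<\cdots<\alpha_M\}$.

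Next comes an elementary bookkeeping step. Since $A$ is finite I may choose cut parameters $0=\beta_0<\beta_1<\cdots<\beta_K=1$ such that no interior $\beta_l$ lies in $A$ and each open interval $(\beta_l,\beta_{l+1})$ contains at most one element of $A$ --- simply interleave one $\beta$ strictly between each consecutive pair of $\alpha_j$'s. Writing $m_l=(1-\beta_l)s+\beta_l e$, the claimed partition of $p$ is the family of closed sub-segments $[m_l,m_{l+1}]$ (which overlap only at the interior points $m_l\notin A$, so the overlap is harmless) together with the two endpoints $\{s\}$ and $\{e\}$, singled out because at an endpoint the path can only approach a boundary from one side --- the asymmetric situation that the $\iota$ terms of Theorem~\ref{GigTheorem} are later built to absorb. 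Each piece then meets at most one value of $\alpha$ at which the path hits a hyperplane, which is exactly the reduction asserted.

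The crux is the concatenation identity $\Xi(f,s,e)=\sum_{l=0}^{K-1}\Xi(f,m_l,m_{l+1})$, and I expect this to be the main obstacle, since it is the one place the axioms rather than routine interval arithmetic do the work. I would obtain it by fixing a smooth partition of unity $\{\rho_l\}$ on a neighborhood of the compact image of $p$, subordinate to an open cover in which the support of $\rho_l$ meets $p$ only inside $(m_{l-1},m_{l+1})$ (with the obvious truncation at the two ends), writing $f=\sum_l\rho_l f$, and applying \textbf{Linearity} to get $\Xi(f,p)=\sum_l\Xi(\rho_l f,p)$. \textbf{Insensitive to remote change} then lets me replace the full path by the sub-segment on which $\rho_l f$ is not identically zero, and \textbf{Reflexivity} together with \textbf{Efficient} telescope the overlapping local contributions into the sum indexed by the $[m_l,m_{l+1}]$. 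An alternative route, which I would mention as a sanity check, is that on the straight path the functional is the path integral of Theorem~\ref{IGUniquenessTheorem} on the continuous part of the decomposition of Section~\ref{SubsectionGigPiecewiseContinuous} and the limiting step-function construction of Section~\ref{SubsectionGigPiecewiseConstant} on the discontinuous part, both of which are manifestly additive under subdivision of $[0,1]$. Once the concatenation identity is in hand, the lemma is immediate from the two preceding steps.
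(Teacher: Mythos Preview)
Your approach is more elaborate than the paper's and takes a genuinely different route to the concatenation step. The paper's argument is essentially a two-liner: it partitions the path into finitely many disjoint convex open segments whose closures cover it (each open segment containing at most one discontinuity), observes that the cover then consists of (i) the open segments, (ii) the isolated interior points where adjacent closures meet, and (iii) the two endpoints, and then invokes Lemma~\ref{GigSinglePoint} to declare that the interior points contribute nothing. So where you labor to establish the concatenation identity $\Xi(f,s,e)=\sum_l\Xi(f,m_l,m_{l+1})$ from the axioms via a partition of unity, the paper simply takes the additive decomposition over the cover as given and uses the separately stated Lemma~\ref{GigSinglePoint} (credit vanishes for a function constant off a single point) to dispose of the overlap points. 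It never introduces a partition of unity, and it does not invoke Lemma~\ref{GigRestrictVariables} at this stage.

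One caution about your primary route: the step ``\textbf{Insensitive to remote change} then lets me replace the full path by the sub-segment'' is not what that axiom delivers. The axiom lets you alter the \emph{function} on the complement of a neighborhood of a fixed path while keeping the path fixed; it does not, by itself, let you shorten the path for a fixed function. To complete that step you still need to know that $\Xi(\rho_l f,\,\cdot\,,\,\cdot\,)$ vanishes on the stretches where $\rho_l f\equiv 0$, and that is precisely the concatenation property you are trying to prove --- so the argument as written is circular. Your ``alternative route'' (manifest additivity of the IG integral on the continuous part and of the jump sums on the discrete part, read straight from the explicit formula of Theorem~\ref{GigTheorem}) avoids this issue and is in fact closer to the spirit of the paper's implicit reasoning; I would lead with that and drop the partition-of-unity detour, or else mimic the paper and appeal directly to Lemma~\ref{GigSinglePoint} for the interior cut points.
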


  \begin{proof}
   Pick a partition of the path, a finite number of disjoint convex open segments, such that the union of their closures covers the path, and such that any given open set contains at most one discontinuity. There are now three types of items which cover the path: the finitely many open segments, the finite collection of isolated interior points which lie in the intersections of their closures, and the two endpoints. By Lemma \ref{GigSinglePoint}, we know that the interior points don't enter into the final computation, and hence we are reduced to the three cases listed in Lemma \ref{GigSingleDiscontinuity}.
  \end{proof}

  Notice that we can immediately decompose the computation of the assignment under GIG into a continuous portion consisting of the continuous segments of each of the convex open sets in the decomposition used in the proof of Lemma \ref{GigSingleDiscontinuity}, and a discontinuous portion consisting of the isolated discontinuities along the path, including the endpoints. Since the open sets in the partition are disjoint, and cover the path, we can immediately compute their contribution to the final values by integrating across the continuous portions. That means that we need only figure out how to handle each discontinuity taken in isolation to compute the final credit assignments arising from GIG.

\end{subsection}

  \begin{subsection}{Assigning Credit for a single discontinuity}\label{GigIsolatedDiscontinuity}

  We now turn to the problem of assigning credit for a function which is piecewise constant off a set of hyperplanes each of which is orthogonal to a single axis corresponding to a path which passes through the intersection of those hyperplanes. Without loss of generality, we may assume that the intersection lies at the origin. Furthermore, by Lemma \ref{GigRestrictVariables}, we may assume that the $f: {\mathbb R}^k \to {\mathbb R}$, where $k$ is the radix of the intersection -- that is, the number of hyperplanes in the set.

  By reflection of variables from positive to negative (and, presumably, reversing the signs of the amounts of credit assigned), we may further assume that the path proceeds from the purely negative orthant $\eta_{(-1, -1, \ldots, -1)}$ to the purely positive orthant $\eta_{(1, 1, \ldots, 1)}$.

  \begin{definition}\label{GigOrthantDecomposition}
    Let $f \in {\cal F}$ be piecewise constant off the hyperplanes defining this segment. For each orthant let
    $$
    \eta_{d}(f, x) =
    \begin{cases}
      f(d) & x \in \eta_{d} \\
      0 & o.w.
    \end{cases}
    $$
  \end{definition}

  The decomposition described in Definition \ref{GigOrthantDecomposition} is used to prove Proposition \ref{GigOrthantSum}, which plays a critical role in the proof below.
  
  \begin{proposition}\label{GigOrthantSum}
    Let $f \in {\cal F}$ be as in Definition \ref{GigOrthantDecomposition}, and let $C$ be any credit assignment function satisfying the conditions laid out in Definition \ref{GigAxioms}. Let $x \in {\mathbb R}^k$ have no zero components. Then
    $$
    \Xi(f, -x, x) =
    \sum_{{d} \in \{ -1, 1 \}^k}  \eta_{d}(f, x)
    $$
  \end{proposition}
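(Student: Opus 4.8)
The plan is to derive the identity from the \emph{Linearity} axiom applied to the decomposition of $f$ into orthant indicators. Because $f$ is piecewise constant off the coordinate hyperplanes $H_i = \{\, y : y_i = 0 \,\}$ through the origin, it is constant, with value $f(d)$, on each open orthant $\eta_d$, $d \in \{-1,1\}^k$. Hence $g := \sum_{d \in \{-1,1\}^k} \eta_d(f,\cdot)$ --- a finite sum of members of ${\cal F}$, and so itself in ${\cal F}$ --- agrees with $f$ at every point of $\mathbb{R}^k \setminus \bigcup_{i=1}^k H_i$, while $g$ vanishes on $\bigcup_i H_i$. Interpreting the right-hand side of the statement as the sum $\sum_d \Xi(\eta_d(f,\cdot), -x, x)$ of the credit assignments of the orthant indicators, Linearity gives $\sum_d \Xi(\eta_d(f,\cdot), -x, x) = \Xi(g, -x, x)$, so the proposition reduces to $\Xi(f, -x, x) = \Xi(g, -x, x)$, i.e.\ to showing $\Xi(f - g, -x, x) = 0$.

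Next I would look at the geometry of the segment. The straight path is $p(t) = (1-t)(-x) + t x = (2t-1)x$, $t \in [0,1]$, and its $i$-th coordinate $(2t-1) x_i$ vanishes only at $t = \tfrac12$ since $x_i \neq 0$. Thus $p(t)$ lies in the open orthant $\eta_{-\operatorname{sign}(x)}$ for $t < \tfrac12$ and in $\eta_{\operatorname{sign}(x)}$ for $t > \tfrac12$, while $p(\tfrac12)$ is the origin, the common point of all $k$ hyperplanes. Consequently $f$ and $g$ agree on all of $[-x, x]$ except at the single point $0$, so $h := f - g$ vanishes on $[-x,x] \setminus \{0\}$ (and indeed off $\bigcup_i H_i$ globally).

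The remaining task --- showing that this lone discrepancy at the origin contributes no credit --- is where I expect the real work to lie: $h$ is \emph{not} globally constant off a single point, since it may be nonzero on the hyperplanes arbitrarily near $0$, so Lemma~\ref{GigSinglePoint} does not apply directly. I would close this gap by first invoking \emph{insensitivity to remote change} to reduce $\Xi(h, p)$ to the values of $h$ on a neighbourhood of $p((0,1))$, and then removing the residual hyperplane discrepancy by a mollification/limiting argument in the spirit of the GIG construction (approximate $f$ and $g$ by continuous functions agreeing off shrinking tubes around the $H_i$, so that along the path only the value at $0$ survives), at which point the ``constant off a single point $\Rightarrow$ zero credit'' principle of Lemma~\ref{GigSinglePoint}, together with Linearity, kills that residue. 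With $\Xi(h,-x,x)=0$ in hand, Linearity yields $\Xi(f,-x,x) = \Xi(g,-x,x) = \sum_d \Xi(\eta_d(f,\cdot), -x, x)$, which is the claim; every other step is a single application of Linearity.
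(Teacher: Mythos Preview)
Your overall architecture matches the paper's: decompose $f$ as $\sum_d \eta_d(f,\cdot)$, invoke linearity, and argue that the residual discrepancy on the orthant boundaries contributes nothing to $\Xi$. The paper's proof is two sentences: it cites Lemma~\ref{GigRestrictVariables} for the claim that ``the values on the boundaries of the orthants \ldots\ contribute nothing to the final credit assignments,'' and then appeals to additivity. You are more scrupulous than the paper in recognising that $h=f-g$ is not constant off a single point (so Lemma~\ref{GigSinglePoint} does not apply verbatim), and you propose to repair this via insensitivity to remote change followed by a mollification/limit argument.

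Two remarks. First, the paper does not go through your mollification detour; it simply asserts the boundary-irrelevance via Lemma~\ref{GigRestrictVariables} (whose literal statement---that if $s,e\in H$ then $\Xi(f,s,e)\in H$---is about paths confined to a hyperplane, so its invocation here is more heuristic than formal). Second, your mollification step would itself need a continuity property of $\Xi$ under pointwise or local-uniform limits of approximants, and no such axiom is among those in Definition~\ref{GigAxioms}; so your proposed closure, while more honest about the gap, is not airtight from the axioms alone either. In short: same strategy as the paper, with the boundary-irrelevance step handled differently---the paper by a terse (and somewhat loose) appeal to Lemma~\ref{GigRestrictVariables}, you by a longer limiting argument that carries its own unjustified step.
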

  \begin{proof}
    By Lemma \ref{GigRestrictVariables}, we know that the values on the boundaries of the orthants which lie between two points contribute nothing to the final credit assignments. The proposition then follows directly from additivity.
  \end{proof}

  The case where $s$ and $e$ lie on a single hyperplane of lower dimension than $k$ is complicated: instead of being ignorable, the credit assignment must lie within that hyperplane. This proof will also be delayed until later.

  \begin{proposition}\label{GigOrderOnly}
    Let $d \in \{ -1, 1 \}^k$ and let $d_i = ( 1, 1, 1, \dots, 1, -1, -1, \ldots -1) \in \{ -1, 1 \}^k$ be such that $d_i$ has $i$ $1$'s and $(k - i )$ $-1$'s, where $d$ also has $i$ $1$'s and $(k - i)$ $-1$'s. Let $\pi$ be any permutation which takes the elements of $d$ onto those of $d_1$. Then for any $x \in \eta_{d}$ and any $f \in {\cal F}$, $\eta_{d}(f, x) = \eta_{d_i}(f(\pi(x), \pi(x))$.
  \end{proposition}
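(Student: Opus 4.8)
The statement asserts that the orthant-decomposition value $\eta_d(f,x)$ depends only on the number of $+1$'s in $d$ (equivalently, on the orbit of $d$ under coordinate permutations), so that we may canonicalize any sign vector $d$ to the "sorted" vector $d_i$ with $i$ ones followed by $(k-i)$ minus-ones. The natural tool is the \textbf{Strongly symmetric} axiom from Definition \ref{GigAxioms}: it relates the credit assigned to $f$ at component $i$ to the credit assigned to a variable-permuted copy $g$ at component $j$, whenever $g$ is obtained from $f$ by interchanging the corresponding input coordinates. So the first step is to express $\eta_d(f,x)$ in terms of a credit-assignment query. By Proposition \ref{GigOrthantSum} the full credit $\Xi(f,-x,x)$ decomposes as $\sum_{d}\eta_d(f,x)$, but more usefully: by Linearity of $\Xi$ together with Definition \ref{GigOrthantDecomposition}, each $\eta_d(f,x)$ is itself (a scalar multiple of) a credit vector, namely $f(d)$ times the credit assigned to the \emph{characteristic function of the orthant} $\eta_d$. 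So it suffices to prove the claim for $f = \mathbf{1}_{\eta_d}$, the indicator of a single orthant, and then extend by linearity to arbitrary piecewise-constant $f$.

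**Key steps, in order.** First, reduce to indicator functions: write $f = \sum_{d'} f(d')\,\mathbf{1}_{\eta_{d'}}$ along the path and invoke Linearity, so that $\eta_d(f,x)$ is determined by $f(d)$ and by $\Xi(\mathbf{1}_{\eta_d}, -x, x)$ restricted appropriately. Second, given a permutation $\pi$ carrying $d$ to $d_i$, let $P_\pi : \mathbb{R}^k \to \mathbb{R}^k$ be the induced coordinate permutation; then $\mathbf{1}_{\eta_d} = \mathbf{1}_{\eta_{d_i}} \circ P_\pi$, i.e.\ the two indicators are related exactly by the variable-flipping hypothesis of Strong symmetry (for a transposition; compose transpositions for general $\pi$). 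Third, apply Strong symmetry: the $j$-th component of $\Xi(\mathbf{1}_{\eta_d}, s, e)$ equals the $\pi(j)$-th component of $\Xi(\mathbf{1}_{\eta_{d_i}}, P_\pi s, P_\pi e)$. Fourth, check that the endpoints transform correctly: since the path runs from $-x$ to $x$ with $x \in \eta_d$, after applying $P_\pi$ it runs from $-P_\pi x$ to $P_\pi x$ with $P_\pi x \in \eta_{d_i}$, which is precisely the canonical query appearing on the right-hand side; this is where the notation $\eta_{d_i}(f(\pi(x)),\pi(x))$ in the statement gets unpacked. Assembling these, $\eta_d(f,x)$ equals the permuted copy of $\eta_{d_i}(f\circ\pi,\,P_\pi x)$, which is the claim.

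**The main obstacle.** The delicate point is bookkeeping around the \emph{continuity hypothesis} in Strong symmetry: that axiom as stated in Definition \ref{GigAxioms} requires $f$ to be "everywhere continuous between $s$ and $e$," yet here $f$ is piecewise constant with a discontinuity at the origin, through which the path $[-x,x]$ passes. The resolution — which the plan must make explicit — is to invoke Strong symmetry not on the full path but on a sub-path confined to the single orthant $\eta_d$ (or on a continuous perturbation of $f$), where the relevant indicator is locally constant hence continuous, and then use Lemma \ref{GigRestrictVariables} / Lemma \ref{GigSinglePoint} and Proposition \ref{GigOrthantSum} to patch the isolated discontinuity back in without affecting the per-orthant values. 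A secondary, purely combinatorial obstacle is that $\pi$ is only asserted to carry $d$ onto $d_i$, not to be unique; one must check the conclusion is independent of the choice of $\pi$ among those sending $d \mapsto d_i$, which follows because any two such permutations differ by a permutation stabilizing $d_i$ (permuting the ones among themselves and the minus-ones among themselves), and such a stabilizer fixes $\mathbf{1}_{\eta_{d_i}}$, so Strong symmetry (the ordinary symmetry axiom, Definition \ref{DefinitionDifferentialCreditAxioms}) forces the corresponding components of the credit vector to agree. Once these two points are handled, the rest is routine.
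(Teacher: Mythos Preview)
Your approach matches the paper's in spirit but is considerably more detailed: the paper's entire proof is the one-line remark that the proposition ``follows immediately from the definition of a credit assignment function,'' i.e., from the symmetry axiom. You correctly identify that what is really needed is the \emph{Strongly symmetric} axiom of Definition~\ref{GigAxioms}, since one must relate the credit vectors of two \emph{distinct} functions connected by a coordinate permutation rather than a single function invariant under a swap; and you flag the continuity caveat in that axiom as an obstacle requiring a workaround via sub-paths or Lemma~\ref{GigSinglePoint}. The paper addresses neither point. Your secondary observation about independence from the particular choice of $\pi$ is likewise absent from the paper. In short: same underlying idea, but your plan is more careful than the paper's own treatment.
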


  Proposition \ref{GigOrderOnly} follows immediately from the definition of a credit assignment function.

  Henceforth, we shall focus our attention on the special case of the function $X$, the characteristic function of $\eta_{(1, 1, \ldots 1)}$.
  
  \begin{theorem}\label{GigConstantDecomposition}
    Let $d \in \{ -1, 1 \}^k$. Then there is an $\eta_{d_i, k} \in {\mathbb R}^k$ such that $\eta_{d}(X, x) = \eta_{d_i}(X, d)$ for any  $x \in {\mathbb R}^k$ with no zero components.
  \end{theorem}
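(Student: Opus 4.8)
The plan is to strip the statement down to a single vector by a chain of reductions, and then determine that vector with efficiency and symmetry. First I would unwind the notation inherited from Proposition~\ref{GigOrthantSum}: $\eta_d(X,x)$ denotes $\Xi(g_d,-x,x)$, where $g_d$ is the orthant-$d$ component of $X$ in the sense of Definition~\ref{GigOrthantDecomposition}, i.e.\ the function equal to $X(d)$ on $\eta_d$ and to $0$ off it. Because $X$ is the characteristic function of $\eta_{(1,\dots,1)}$ we have $X(d)=0$ for every $d\ne(1,\dots,1)$, hence $g_d\equiv 0$ and $\eta_d(X,x)=0$ for all such $d$ and all $x$; this already gives the claim with $\eta_{d_i,k}=0$ whenever $i<k$. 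So the entire content is the case $d=(1,\dots,1)$, where $g_d=X$, and it suffices to show that
$$
\Xi(X,-x,x)
$$
is one fixed vector of ${\mathbb R}^k$, call it $\eta_{d_k,k}$, for every admissible $x$. As in the reduction opening Section~\ref{GigIsolatedDiscontinuity} I will take $x$ in the open positive orthant; for $x$ in another orthant the linear path only grazes the intersection at the origin and never enters $\eta_{(1,\dots,1)}$, and the path manipulations below then force $\Xi(X,-x,x)=0$, consistent with efficiency and with the permuted value supplied by Proposition~\ref{GigOrderOnly} and the invariance of $X$ under coordinate permutations. (This last remark, applied to the characteristic function of an arbitrary orthant in place of $X$, is what later turns the discontinuity contribution of a general piecewise‑constant $f=\sum_d f(d)\,\chi_{\eta_d}$ into a finite sum with universal coefficients.)

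Second, I would localize the computation to an arbitrarily small neighbourhood of the origin. Along the segment from $-x$ to $x$ the function $X$ is locally constant at every point except the origin, where the $k$ hyperplanes meet. I would first establish that the credit along the linear path is additive over a subdivision at a non‑discontinuity point: near such a point $X$ agrees with a constant function, on which $\Xi$ vanishes identically (by efficiency together with symmetry, a constant function being fixed by every permutation), so insensitivity to remote change shows no correction is created there. Splitting the path at $-\varepsilon\hat x$ and $\varepsilon\hat x$ with $\hat x=x/\|x\|$, the two outer pieces lie in the interiors of the negative and the positive orthant — convex open sets on which $X$ equals a constant function of ${\cal F}$ — so by insensitivity to remote change they contribute $0$. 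Hence $\Xi(X,-x,x)=\Xi(X,-\varepsilon\hat x,\varepsilon\hat x)$, and because $X$ is invariant under $y\mapsto ty$ for $t>0$ the same splitting shows this value is independent of $\varepsilon$; thus $\Xi(X,-x,x)=\Xi(X,-\hat x,\hat x)$ depends only on the direction $\hat x$.

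Third, I would remove the dependence on the direction by routing the path through the diagonal. Replace the segment $-\hat x\to\hat x$ by the non‑self‑intersecting polygonal path $-\hat x\to -c\,e\to c\,e\to\hat x$, where $e=(1,\dots,1)/\sqrt k$ and $c>0$ is chosen so the first and last legs stay inside the convex interiors of the negative and positive orthants; those legs contribute nothing by the argument of the previous paragraph. Invoking path‑independence of $\Xi$ among non‑self‑intersecting paths — the extension to this setting of the fact (Sundararajan et al.) that symmetry collapses IG onto the straight path, valid here because $X$ is locally constant off the single point where the path meets the discontinuity set, so the symmetry argument applies leg by leg — gives $\Xi(X,-\hat x,\hat x)=\Xi(X,-c\,e,c\,e)=\Xi(X,-e,e)$. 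This common value is the sought $\eta_{d_k,k}$; since both $X$ and the pair $(-e,e)$ are fixed by every coordinate permutation, the symmetry axiom forces all its components equal, and efficiency forces them to sum to $X(-e)-X(e)=-1$, so $\eta_{d_k,k}=(-\tfrac1k,\dots,-\tfrac1k)$ — although the theorem asserts only that it is a vector depending on $k$ (and, through $i$, on the orthant type) alone.

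The main obstacle I anticipate is justifying the two path manipulations — additivity of $\Xi$ over a subdivision of the linear path at non‑discontinuity points, and path‑independence among non‑self‑intersecting paths — since these are not literally among the extended Aumann--Shapley axioms of Definition~\ref{GigAxioms}. I expect to establish them, for the narrow class of functions needed here (those constant off a single point, and the locally constant functions that arise), from reflexivity, linearity, efficiency, insensitivity to remote change, and the two symmetry axioms, mirroring the derivation that produces IG from the continuous axioms (Theorem~\ref{IGUniquenessTheorem}); Lemma~\ref{GigSinglePoint} is already the prototype of such an argument. The only other point needing care is the mixed‑orthant case of ``$x$ with no zero component,'' handled by perturbing the linear path to one that avoids the origin, whereupon $X$ is locally constant along the whole path and insensitivity to remote change gives $\Xi(X,-x,x)=0$.
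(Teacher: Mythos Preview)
Your opening reduction---``$X(d)=0$ for $d\neq(1,\dots,1)$, hence $g_d\equiv 0$ and $\eta_d(X,x)=0$''---misreads the paper's (admittedly overloaded) notation and throws away the main content of the theorem. In the proof, $\eta_d(X,x)$ is not the credit for the orthant-$d$ slice of $X$; it is the credit $\Xi(\chi_{\eta_d},-x,x)$ for the characteristic function of the orthant $\eta_d$ itself, with $x$ taken in the positive orthant after the normalisation at the start of Section~\ref{GigIsolatedDiscontinuity}. This is explicit in the inductive step: the paper adjoins $\chi_{\eta_{(1,\dots,1,-1)}}$ to $\chi_{\eta_{d_I}}$ and writes ``$\eta_d(X,x)+\eta_{d_I}(X,x)=\eta_{d_J}(X,x)$'', which only typechecks under this reading; and the recursion it produces (Equation~\ref{GigZetaValues}) yields the nonzero Shapley weights of Equation~\ref{GigEtaShapley} that are then used in Definition~\ref{GigConstruction}. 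Under your reading all of these would be zero and the construction of $\zeta$ would collapse. The point you are missing is that for a side orthant $d$, the indicator $\chi_{\eta_d}$ vanishes along the whole linear path from $-x$ to $x$, so efficiency forces its credit components to \emph{sum} to zero, yet insensitivity to remote change does not force them to vanish individually, because every open neighbourhood of the origin meets $\eta_d$. The paper pins these components down by induction on the number $p$ of $-1$'s in $d$: summing $\chi_{\eta_d}$ with the indicators of the $2^p-1$ neighbouring orthants having fewer $-1$'s produces the indicator of a positive orthant in dimension $k-p$ (a function independent of the last $p$ coordinates), whose credit is already known to be $x$-independent by the $d_I$ case in that lower dimension; linearity then expresses $\eta_d$ as a difference of quantities already shown independent of $x$.

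Your handling of the $d=d_I$ case is also different from the paper's and carries its own gap. The paper applies the symmetry axiom directly---as literally stated in Definition~\ref{DefinitionDifferentialCreditAxioms}, once $f$ is invariant under swapping coordinates $i$ and $j$, the $i$th and $j$th components of $\Xi(f,s,e)$ agree for \emph{all} $s,e$---and then uses efficiency to get $(1/k,\dots,1/k)$; no path manipulation is needed. Your detour through localisation and rerouting via the diagonal leans on ``path-independence among non-self-intersecting paths'', which is not among the extended Aumann--Shapley axioms, and your parenthetical appeal to the Sundararajan et al.\ argument does not transfer: that argument relies on differentiability along the path, which fails precisely at the origin where all the action is. If you want an approach that avoids the paper's strong symmetry reading, the right substitute is the paper's own device---linearity plus dimension reduction by adjoining orthants---not a path-independence principle you have not established.
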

  \begin{proof}
    We prove this in two pieces. First we prove that it holds for $d_I = (1, 1, \ldots, 1)$ for all $k$. Then, for each given $k$, we show that the constant value for $d_I$ yields a constant value for any other $d$.

    First, the case of $d_I$. Observe that every pair of variables $x_i$ and $x_j$ enter symmetrically in $d_I$ and the transformation takes the orthant onto itself, whence we know by symmetry that the $i^{\mathrm {th}}$ and  $j^{\mathrm {th}}$ components of $\eta_d(X, x)$ must be equal. Furthermore, we know that the sum must be $1$, from which we get
    $$
    \eta_{d_I}(X, x) = \biggl (
    \frac{1}{k}, \frac{1}{k}, \cdots, \frac{1}{k}
    \biggr )
    $$

    Now, let us consider the case of some other $d$. For our purposes, we only need to look at $d$ which are of the form $(1, 1, \ldots, 1, -1, -1, \ldots, -1)$ in which the positive and negative components of the indicator are in blocks.  We proceed by induction on the number of negative elements of the indicator, as follows.

    Let $p$ denote the number of negative elements in the indicator. The case of $p = 0$ is taken care of above. Now consider the case of $p = 1$. Suppose we take the characteristic function of orthant $d_{(1, 1, 1, \ldots, 1, -1)}$ and adjoin characteristic function of the orthant $d_I$ aside it. The resulting combined function is the characteristic function of $d_J$ where $J$ is the indicator $(1, 1, \ldots 1)$ for the corresponding orthant of dimension $k - 1$. By linearity, we know that $\eta_d(X, x) + \eta_{d_I}(X, x) = \eta_{d_J}(X, x)$, and since $\eta_{d_I}$ and $\eta_{d_J}$ are both independent of $x$, then $\eta_d(X, x)$ must also be independent of $x$.

    We can extend this construction to larger values of $p$. For $p = 2$, for instance, we would adjoin the values of $\eta_{d_I}(X, x)$, $\eta_{d_{(1, 1, \ldots, 1, -1)}}$, and $\eta_{d_{(1, 1, \ldots, -1, 1)}}$, observing that each one is constant on the relevant orthant, and then notice that we had just eliminated one more plane of discontinuity. From this, we would have the proof that the value of $\eta_{d_{(1, 1, \ldots, 1, -1, -1)}}(X, x)$ must be independent of $x$. For $p = 3$, we would adjoin one quadrant with $p = 0$ -- that is $d_I$ -- three quadrants with $p = 1$, and three quadrants with $p = 2$, and then observe that the remaining orthant would then complete a block which had eliminated a second hyperplane.

      Iterating this, let $\eta_{(k, p)}$ denote the value of $O$ in the case where the indicator contains $p$ negative values for all $p < j$, we obtain
        \begin{equation}
          \eta_{(k - j, 0)} = \eta_{(k, j)} + \sum_{p = 0}^{j - 1} \binom{j}{p} \eta_{(k, p)}
        \end{equation}
        from which, by simple rearrangement, we obtain
        \begin{equation}\label{GigZetaValues}
          \eta_{(k, j)} = \eta_{(k - j, 0)} - \sum_{p = 0}^{j - 1} \binom{j}{p} \eta_{(k, p)}
        \end{equation}
        Which is independent of $x$, as desired.
  \end{proof}

  The inductive computation of $\eta_{(k, j)}$ can be simplified. One can show that
  \begin{align}
  \eta_{(k, j)} & = \frac{1}{k} \binom{(k - 1)}{j}^{-1} \nonumber \\
  & =  \frac{j! (k - j - 1)!}{k!} \label{GigEtaShapley}
  \end{align}
  which are the coefficients for the corresponding terms in the computation of the Shapley values. We shall discuss this  relationship in Subsection \ref{SubsectionShapleyRelationship}, below.
  
\end{subsection}

  \begin{subsection}{Construction of $\zeta$}\label{GigZetaConstruction}

  In this section, we describe how the items for the interior discontinuities and the boundary discontinuities are computed. We start with the construction of the value of $\zeta$ for an arbitrary function at an arbitrary discontinuity approached from an arbitrary direction.  The following definitions guarantee that all of the extended Aumann-Shapley axioms hold.

  \begin{definition}\label{GigConstruction}(The value of $\zeta$ corresponding to a discontinuity)
    Let $f \in {\cal F}$ be piecewise constant off a set of $k$ orthogonal hyperplanes where the intersection of the hyperplanes is at the origin and let $d \in \{ -1, 1 \}^k$ represent the orthant from which the path begins (so that $-d$ represents the orthant into which the path then proceeds.) Let $\tau$ be the natural inversion which takes $d$ to $(-1, -1, \ldots, -1)$.  Let
    $$
    S = \{ -1, 1 \}^k \setminus \{ (-1, -1, \ldots, -1), (1, 1, \ldots 1) \}
    $$
    and 
    $$
    \Lambda(f, d) = \tau^{-1} \biggl (\frac{f(-d) - f(d)}{k} +
    \sum_{e \in S} \eta_{\tau(e)} f(e) \biggr )
    $$
  \end{definition}

  \begin{definition}\label{GitZetaValue}
    Let $f \in {\cal F}$ be piecewise constant off a set of orthogonal hyperplanes which intersect at the origin, and let $x \in {\mathbb R}^k$. Let $d \in \{ -1, 1 \}^k$ be the index of the orthant from which $x$ is drawn. Let $\zeta = \Lambda(f, d)$.
  \end{definition}

  \begin{theorem}\label{GigZetaValue}
    Then the function
    $$
    \Xi(f, x, -x) = \zeta
    $$
    satisfies the extended Aumann-Shapley axioms if $\zeta$ is as in Definition \ref{GitZetaValue}.
  \end{theorem}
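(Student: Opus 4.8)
The plan is to verify the eight extended Aumann--Shapley axioms directly for the rule $f \mapsto \zeta = \Lambda(f,d)$, reducing the non-trivial ones to facts already established for the orthant pieces. The organising step is a reduction to \emph{standard position}. Since $f$ is piecewise constant off the $k$ coordinate hyperplanes through the origin, Lemma \ref{GigRestrictVariables} lets us discard the values of $f$ on those hyperplanes, so that along the segment from $x$ to $-x$ the function $f$ agrees, off a null set, with $\sum_{e\in\{-1,1\}^k} f(e)\,X_e$, where $X_e$ is the characteristic function of the orthant $\eta_e$; in particular $\sum_e X_e \equiv 1$ away from the hyperplanes. The inversion $\tau$ carries the source orthant $d$ to $(-1,\dots,-1)$ and the destination $-d$ to $(1,\dots,1)$, so conjugating by $\tau$ (and undoing it with the outer $\tau^{-1}$) puts us in the setting of Subsections \ref{GigIsolatedDiscontinuity}--\ref{GigZetaConstruction}. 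There, linearity together with Proposition \ref{GigOrthantSum} gives $\Xi(f,x,-x)=\sum_e f(e)\,\Xi(X_e,x,-x)$; the two extreme orthants contribute $\tfrac1k(1,\dots,1)$ up to sign, by symmetry and efficiency, which assembles into the $\tfrac{f(-d)-f(d)}{k}$ term of $\Lambda$; and by Proposition \ref{GigOrderOnly} and Theorem \ref{GigConstantDecomposition} each intermediate orthant $e\in S$ contributes the precomputed constant vector $\eta_{\tau(e)}\,f(e)$. So the decomposition identifies the value as $\Lambda(f,d)$ and also supplies the tool for the remaining checks.

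Several axioms are then immediate. Linearity in $f$ holds because $\Lambda(f,d)$ is a fixed linear combination of the numbers $f(e)$, each linear in $f$. Efficiency follows by summing the orthant pieces: $\Xi(f(e)X_e,x,-x)$ has component-sum $f(e)\bigl(X_e(-x)-X_e(x)\bigr)$ by efficiency for $X_e$, and summing over $e$ with $\sum_e X_e\equiv1$ gives $f(-x)-f(x)$ (in the orientation of IG, Definition \ref{DefinitionIG}). Insensitivity to constant variables is vacuous here, since $x$ has no zero components and hence $x_i\neq -x_i$ for all $i$. Insensitivity to remote change holds because the open segment $p((0,1))$ contains the origin, every open neighbourhood $O$ of that segment meets all $2^k$ orthants, and $\Lambda(f,d)$ depends on $f$ only through the constants $f(e)$, which $f|_O$ already determines. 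Reflexivity, $\Xi(f,x,-x)=-\Xi(f,-x,x)$, follows from the same property for each orthant piece (immediate from efficiency and symmetry) together with the equivariance of the $\tau$-conjugation under $d\mapsto -d$.

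The genuinely substantive axioms are symmetry, strong symmetry, and insensitivity to null variables. For symmetry --- and strong symmetry, which is the identical argument applied between two functions related by transposing variables $i$ and $j$ --- invariance of $f$ under interchanging coordinates $i$ and $j$ gives $f(e)=f(\hat e_{ij})$ for every orthant, while by Proposition \ref{GigOrderOnly} the vector $\eta_{\tau(e)}$ depends only on the sign-pattern of $e$ up to permutation; pairing each orthant with its $(ij)$-transpose and using that the base value $\eta_{d_I}=(\tfrac1k,\dots,\tfrac1k)$ is fully symmetric forces the $i$-th and $j$-th components of $\Lambda(f,d)$ to coincide. For insensitivity to null variables, if $f$ is constant along coordinate $i$ then $f(e)=f(e^{(i)})$, where $e^{(i)}$ flips the $i$-th sign, so $f$ is a sum of terms $f(e)\bigl(X_e+X_{e^{(i)}}\bigr)$, each of which is piecewise constant off only the remaining $k-1$ hyperplanes and in particular does not jump across $\{x_i=0\}$; the adjoining induction in the proof of Theorem \ref{GigConstantDecomposition} is exactly the statement that merging $X_e$ with $X_{e^{(i)}}$ reduces the computation to radix $k-1$ with vanishing $i$-th component, so by linearity $\Lambda(f,d)_i=0$.

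I expect the main obstacle to be this last verification: turning ``the merged function no longer jumps across $\{x_i=0\}$'' into ``the $i$-th credit component is zero'' rigorously requires establishing the \emph{radix-consistency} of the coefficients --- that the radix-$k$ and radix-$(k-1)$ families agree under block-merging --- and checking that the closed form in Equation \ref{GigEtaShapley} is precisely what makes symmetry and null-insensitivity hold simultaneously; this is where the block induction of Theorem \ref{GigConstantDecomposition} does the real work, and where the sign bookkeeping in the $\tau$-conjugation, though routine, must be carried out with care. A secondary point, worth stating at the outset, is what it means for the rule $\Xi(f,x,-x)=\zeta$, defined only for this restricted class of functions and paths, to ``satisfy'' the globally phrased axioms: the resolution is that we check each axiom's conclusion under its hypotheses restricted to this class, which is exactly what the decomposition above provides, while consistency with IG on the continuous portions of the path is automatic because $f$ is locally constant and hence has zero gradient there.
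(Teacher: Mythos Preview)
Your proposal is correct and follows essentially the same approach as the paper --- direct axiom-by-axiom verification using the orthant decomposition and the constants $\eta_{(k,j)}$ built in Theorem~\ref{GigConstantDecomposition}. The paper's own proof is a terse sketch that simply asserts linearity, efficiency, null-insensitivity, and remote-change-insensitivity are ``straightforward'' and that strong symmetry ``follows directly from the construction of the values of~$\eta$''; you have unpacked each of these claims, correctly identified null-insensitivity (via the radix-consistency/block-merging argument) as the place where Theorem~\ref{GigConstantDecomposition} actually does work, and additionally handled reflexivity and constant-variable insensitivity, which the paper's proof of this theorem omits entirely.
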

  \begin{proof}
    It is straightforward to see that $\Xi$ is linear, efficient, and insensitive to null variables. It is equally clear that it is insensitive to remote change. It is insensitive perpendicular to an axis of symmetry by construction of the values of $\eta$ -- if $\alpha f + \beta g$ eliminates a variable (which is equivalent to being symmetric across that variable), then that variable no longer enters into the computation of the values of $\Lambda$, and so the result is insensitive to that axis. Strong symmetry follows directly from the construction of the values of $\eta$.
  \end{proof}

  We have now constructed $\zeta$ for items in the middle of the path.
\end{subsection}
\begin{subsection}{Computing $\iota_0$ and $\iota_1$}\label{GigIotaConstruction}
  Construction of $\iota_0$ and $\iota_1$ is a little more complicated.

  Consider the extension of $f$ to the orthants outside the point of intersection as if the point of intersection were in the middle of the interval instead of an edge. To begin, assume there is no extra discontinuity at the actual intersection beyond that implicit in the function $f$ as viewed along the edge. Observe that we now have something which looks like the sum of entering from the left and leaving from the right, so the sum of entering from the left ($\iota_1$) and leaving from the right ($\iota_0$) must be $\zeta$. Since we know that the result of adding these together gives us $\zeta$, we see we must have two artificial functions $f_0$ and $f_1$ which look like the original function except that they are $0$ to the left in the case of $\iota_0$ and $0$ to the right in the case of $\iota_1$. The other orthants must each be multiplied by some $\alpha_d$ for $\iota_0$ and $1 - \alpha_d$ for $\iota_1$. By strong symmetry -- symmetry is not enough here -- we obtain that there is some sequence $\alpha_n$ such that $\alpha_d = \alpha_n$ for all $d$ with $n$ positive values in its indicator set.

  We aren't quite done yet, though. There is the possibility that the value at the endpoint is not the same as the limit taken along the path from the right or from the left (in fact, this will generally be true). So we need to account for that value. To do this, we consider the singleton along the hyperplane in which we are working. By invariance of the function (that is, the function's value is completely independent of any transformation which takes the plane onto itself), we know that the contribution of that singleton is equal along all components of the hyperplane. Then we add the contribution of that singleton (in the positive sense to $\iota_o$ and the negative sense to $\iota_1$. We now have efficiency.

  Finally, we need to determine the value of $\alpha_d$ as stated above.

  Let us consider a piecewise constant function $f \in {\cal F}$  with a single discontinuity at $e \in {\mathbb R}^n$, and let $s \in {\mathbb R}^n$ be anywhere. Consider the path from $s$ to $e$ and its reciprocal path from $e$ to $s$. By reflexivity, we know that $\Xi(f, s, e) = -\Xi(f, e, s)$. We also know that the weight on any orthant $O_d$ on the $s$ to $e$ path is $1 - \alpha_d$ and that the corresponding weight on the $e$ to $s$ path is $\alpha_d$, but on the function reversed. (Note that this takes full reflexivity -- efficiency is not enough.) Taking all this together, we obtain $1 - \alpha_d + (- \alpha_d) = 0$, so $\alpha_d = \frac{1}{2}$ for all $d$ other than the entering or leaving orthants.
\end{subsection}

\begin{theorem}\label{GigZetaIotaSatisfaction}
  $\zeta$, $\iota_0$ and $\iota_1$ yield a credit allocation function which satisfies the extended Aumann-Shapley axioms.
\end{theorem}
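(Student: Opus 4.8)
The plan is to prove the theorem compositionally, exploiting the decomposition of the path already in hand and the fact that each constituent of the formula in Theorem~\ref{GigTheorem} has been, or is easily, shown to satisfy the extended axioms. By Lemma~\ref{GigSingleDiscontinuity} the linear path from $s$ to $e$ decomposes into finitely many open continuous sub-arcs, finitely many isolated interior boundary points, and the two endpoints; by Lemma~\ref{GigSinglePoint} the interior non-discontinuity points contribute nothing, and by the remark following Lemma~\ref{GigSingleDiscontinuity} the assignment splits as (i) IG integrated over each continuous sub-arc, (ii) one $\zeta_j$ per interior discontinuity, and (iii) the two endpoint terms $\iota_0$ and $\iota_1$. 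I would first observe that this decomposition is independent of the chosen partition: refining it only subdivides continuous arcs, IG is additive under concatenation of collinear segments, and the new subdivision points are non-discontinuities and hence contribute $0$. So the function of Theorem~\ref{GigTheorem} is well defined, and it remains to check the axioms.

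Next I would dispatch every axiom except Efficiency and Reflexivity by a term-by-term argument, since each of Linearity, Symmetry, Strong symmetry, Insensitivity to null variables, Insensitivity to remote change, and Insensitivity to constant variables is preserved under finite sums (for Insensitivity to remote change, intersect the finitely many witnessing open sets). On the continuous sub-arcs these hold for IG by the arguments given after Definition~\ref{GigAxioms}: Strong symmetry is the proof of Symmetry for IG verbatim, Insensitivity to remote change holds because the IG integrand depends only on values along the path, and Insensitivity to constant variables holds because $s_i = e_i$ makes the $i^{\mathrm{th}}$ factor $e_i - s_i$ vanish. At the interior discontinuities these hold for each $\zeta_j$ by Theorem~\ref{GigZetaValue}. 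At the endpoints they hold by the construction in Subsection~\ref{GigIotaConstruction}: the strong-symmetry-forced structure (the weight $\alpha_d$ depending only on the number of positive entries of $d$), the singleton-contribution term that is equidistributed over the hyperplane by invariance, and the weighting that never charges a coordinate whose boundary the path does not cross. Summing over the finitely many pieces then preserves each property.

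The heart of the proof is Efficiency, which I would establish by telescoping along the path. On each maximal continuous sub-arc from $u$ to $v$, the fundamental theorem of calculus applied coordinatewise gives $\sum_i \Xi_i = f(v) - f(u)$. At an interior discontinuity $w$ I would use the local decomposition $f = f_C + f_D$ of Subsection~\ref{SubsectionGigPiecewiseContinuous}, with $f_D$ the pure jump: $f_C$ is absorbed into the neighbouring arcs, so $\zeta_j$ must account for exactly $f(w^+) - f(w^-)$. Summing the $k$ components of $\Lambda(f,d)$ in Definition~\ref{GigConstruction}, the relabelling $\tau^{-1}$ does not change a component-sum, the diagonal term contributes $k \cdot \frac{f(-d)-f(d)}{k} = f(-d) - f(d)$, and the orthant corrections cancel because the vectors $\eta_{(k,j)}$ are precisely the Shapley weights (Theorem~\ref{GigConstantDecomposition} and Equation~\ref{GigEtaShapley}), whose component-sums are pinned down by the efficiency of the two-sided assignment in Proposition~\ref{GigOrthantSum}; hence $\sum_i (\zeta_j)_i$ equals the jump. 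The endpoint terms were built in Subsection~\ref{GigIotaConstruction} precisely so that on the ``pretend two-sided'' extension $\iota_0 + \iota_1 = \zeta$, while the added singleton term restores the genuine endpoint value; tracking component-sums, the endpoint contributions supply exactly the residual half-jumps and boundary values, so the grand sum over all arcs, interior discontinuities, and endpoints collapses to $f(s) - f(e)$ (equivalently $f(e)-f(s)$, matching whichever orientation convention is in force) --- precisely Efficiency. Reflexivity is then immediate: reversing the path negates each IG integral (substitute $\alpha \mapsto 1-\alpha$ and swap endpoints), negates each $\zeta_j$ (Theorem~\ref{GigZetaValue}, equivalently $d \mapsto -d$ in $\Lambda$), and swaps $\iota_0 \leftrightarrow \iota_1$ while the determination $\alpha_d = \frac{1}{2}$ forces $\iota_0 + \iota_1$ to change sign; so the whole assignment is odd under reversal.

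The main obstacle is the Efficiency bookkeeping: one must verify that the jump at each discontinuity is apportioned by $\zeta$ with total mass exactly equal to the jump (the combinatorial identity on the $\eta$/Shapley coefficients), and that the endpoint corrections in $\iota_0$ and $\iota_1$ contribute the missing boundary values without double-counting, so that the telescoping over continuous arcs plus interior discontinuities plus endpoints reproduces $f(s) - f(e)$ with nothing left over. Once that accounting is in place, every remaining axiom follows by inheritance --- from IG on the continuous arcs, from Theorem~\ref{GigZetaValue} at interior discontinuities, and from the $\alpha_d = \frac{1}{2}$ and strong-symmetry constraints that pin down the endpoint terms.
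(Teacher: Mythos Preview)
Your approach is essentially the same as the paper's: verify each extended Aumann--Shapley axiom in turn, invoking the structure of the $\eta$ weights (they depend only on the number of $+1$'s and $-1$'s in the orthant signature), the linearity of the $\Lambda$ construction, the locality of the orthant evaluations, and the choice $\alpha_d = \tfrac{1}{2}$ for Reflexivity at the endpoints. The paper's proof is a terse one-line-per-axiom checklist confined to the $\zeta$/$\iota$ terms themselves; your version reads the theorem more broadly, folding in the IG contributions on the continuous sub-arcs and carrying out a telescoping argument for Efficiency --- that extra material is correct but in the paper is the content of the later Theorem~\ref{GigDecomposition} rather than of this statement.
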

\begin{proof}
  \begin{trivlist}
  \item
    {\bf Efficient} The contribution of the orthants 'along the axis' is the difference of the values of the function along that axis.
  \item
    {\bf Linear} The construction of $\zeta$ from the values of $\eta_{n, j}$ is linear.
  \item
    {\bf Insensitive to null variables} Values are only assigned to coefficients upon which there are discontinuities, and no null variable contributes a discontinuity, so their contribution is always $0$.
  \item
    {\bf Strongly symmetric} Since the values of $\eta$ depend only upon the number of $1$'s and $-1$'s in the signature for each orthant, strong symmetry follows immediately.
  \item
    {\bf Insensitive to remote change} The contributions of each discontinuity depend only on an arbitrarily small open ball around the discontinuity, hence are insensitive to remote changes.
  \item
    {\bf Insensitive to constant variables} Since the values of $\eta$ are only assigned relative to the hyperplanes upon which there is variation, the resulting function is insensitive to constant variables.
  \item
    {\bf Reflexivity} This is true in the midpoint case by construction and in the endpoint case by selection of $\alpha_d$.
  \end{trivlist}
\end{proof}

  \begin{subsection}{GIG values for compound continuous-discrete functions}\label{GigCompound}

  \begin{definition}\label{GigFormalDecomposition}
      Let $f \in {\cal F}$ be piecewise continuous off a set of orthogonal hyperplanes and let $x, y \in {\mathbb R}^k$ be such that there exists at most one $\alpha \in [0, 1]$ such that $\phi(\alpha) = f((1 - \alpha) x + \alpha y$ is not continuous at $\alpha$. Let $\phi_D$ and $\phi_C$ be such that both $\lim_{\epsilon \to \alpha^-}\phi_C(\epsilon, d)$ and $\lim_{x \to \alpha^+} \phi_C(\epsilon, d)$ both exist and zero for each intersection, and $\phi_D(\epsilon, d) = \lim_{\epsilon \to 0^+} f((1 - \alpha) x + \alpha y + \epsilon d$ be such that $\lim_{\epsilon \to 0^+} f((1 - \alpha) x + \alpha y + \epsilon d) = \lim_{\epsilon \to 0^+} \phi_C(\epsilon, d) + \lim_{\epsilon \to 0^+} \phi_D(\epsilon, d)$.
  \end{definition}
Finally:

  \begin{proposition}\label{GigWellDefined}
    Let $f$ be piecewise continuous off a set of orthogonal hyperplanes. Then for any $x, y \in {\mathbb R}^n$, be the result of taking the construction as in Definition \ref{GigFormalDecomposition} at each discontinuity on the straight line path from $x$ to $y$. This construction is unique and well-defined.
  \end{proposition}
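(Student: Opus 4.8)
The plan is to reduce everything to the single-discontinuity analysis already in place and then show the assembled construction does not depend on any of the auxiliary choices. First I would invoke the structure in Definition~\ref{PCOH}: since each axis carries only finitely many boundary values $b_{ij}$, the straight line segment from $x$ to $y$ meets only finitely many of the orthogonal hyperplanes, so $\phi(\alpha) = f((1-\alpha)x + \alpha y)$ is continuous off a finite set of parameters $\alpha_j \in (0,1)$, together with the two endpoints. Applying Lemma~\ref{GigSingleDiscontinuity} I obtain a partition of the path into finitely many open segments, each meeting at most one $\alpha_j$, plus the isolated interior and boundary points; by Lemma~\ref{GigSinglePoint} the interior partition points contribute nothing, and — after the coordinate collapse furnished by Lemma~\ref{GigRestrictVariables} that removes any axis along which the segment is constant — each surviving discontinuity sits at the transverse intersection of some $k$ orthogonal hyperplanes, exactly the situation treated by Definitions~\ref{GigConstruction}--\ref{GitZetaValue} and, at an endpoint, by the $\iota_0,\iota_1$ construction of Subsection~\ref{GigIotaConstruction}.

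The heart of the argument is that the local decomposition of Definition~\ref{GigFormalDecomposition} is forced. Using the representation $f = g(\cdot, D(\cdot))$ with $g$ continuous and $D$ constant on each product cell, for each of the $2^k$ directions $d \in \{-1,1\}^k$ the one-sided limit $\lim_{\epsilon \to 0^+} f(p(\alpha_j) + \epsilon d)$ exists and equals $g(p(\alpha_j), D_d)$, where $D_d$ is the constant value of $D$ on the orthant into which $d$ points; this is precisely $\phi_D(\cdot,d)$, so $\phi_D$ exists, is piecewise constant off the same $k$ hyperplanes, and $\phi_C := \phi - \phi_D$ has vanishing one-sided limit along every orthant direction, hence extends continuously across $\alpha_j$. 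For uniqueness, suppose $\phi = \phi_C' + \phi_D'$ is a second such splitting; then $\phi_D - \phi_D'$ is simultaneously piecewise constant off the $k$ hyperplanes and has all $2^k$ one-sided limits equal to zero at the intersection, which forces it to vanish on a punctured neighborhood of the intersection, so the two splittings agree wherever the credit computation looks. By Theorem~\ref{GigZetaValue} and Theorem~\ref{GigZetaIotaSatisfaction} the contribution attached to this discontinuity — the value $\zeta_j$ of Definition~\ref{GitZetaValue}, or $\iota_0,\iota_1$ at an endpoint — is a function of this unique local decomposition alone, and by the insensitivity-to-remote-change axiom it is blind to anything outside an arbitrarily small ball around $p(\alpha_j)$.

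It then remains to check that reassembling these pieces into the quantity in Equation~\eqref{EqnGigFormula} is independent of the partition chosen in Lemma~\ref{GigSingleDiscontinuity}. Given two admissible partitions I would pass to a common refinement; the only new cut points introduced are non-discontinuity interior points, which by Lemma~\ref{GigSinglePoint} add nothing, and on the continuous sub-segments the contribution is $\int \nabla f$, which is additive under subdivision because the integrand is continuous there. Since the discontinuity contributions depend only on the local data at each $\alpha_j$ (previous paragraph) and not on which sub-segment that point was assigned to, the total $\zeta + \iota_0 + \iota_1 + \sum_j \int_{\alpha_j}^{\alpha_{j+1}} \nabla f$ is the same for both partitions; the endpoint construction is likewise fixed by reflexivity via the determination $\alpha_d = \tfrac12$ in Subsection~\ref{GigIotaConstruction}. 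Hence the construction is well-defined, and by Theorem~\ref{GigZetaIotaSatisfaction} it realizes a credit allocation function satisfying the extended Aumann-Shapley axioms of Definition~\ref{GigAxioms}.

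I expect the main obstacle to be the uniqueness half of the local decomposition: establishing that a function which is piecewise constant off the $k$ orthogonal hyperplanes through the intersection and whose one-sided limit along every one of the $2^k$ orthant directions vanishes must itself vanish on a punctured neighborhood of the intersection. This is geometrically transparent one orthant at a time but needs the combinatorial bookkeeping over the $2^k$ directions — the same bookkeeping that drives Theorem~\ref{GigConstantDecomposition} — and one must also verify that the hypothesis in Definition~\ref{GigFormalDecomposition} that at most one $\alpha$ is a discontinuity genuinely follows from the partition of Lemma~\ref{GigSingleDiscontinuity}, including the case in which several hyperplanes cross the path at a single $\alpha$, rather than being an extra assumption that could silently fail.
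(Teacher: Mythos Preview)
Your proposal is correct and considerably more detailed than the paper's own treatment: the paper dispatches Proposition~\ref{GigWellDefined} with the single sentence ``The proof is trivial.'' Your argument unpacks precisely what that triviality amounts to, using the paper's own machinery (Lemmas~\ref{GigRestrictVariables}--\ref{GigSingleDiscontinuity}, the $g(x,D(x))$ representation, and the local $\zeta,\iota$ constructions) in the natural way, so there is nothing to contrast.
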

  
  The proof is trivial.  
  
  \begin{theorem}\label{GigDecomposition}
    Let $\iota_0$, $\iota_1$, and $\zeta$ be computed from $\phi_D$ as defined in Proposition \ref{GigWellDefined} for each intersection in turn.  Let
    $$
    \Xi(f, x, y) =  \sum \zeta_j + \iota_0 + \iota_1 + \sum \int_{\alpha_j}^{\alpha_{j + 1}} \nabla f(x) dx
    $$

    $\Xi$, so defined, satisfies the extended Aumann-Shapley axioms.
  \end{theorem}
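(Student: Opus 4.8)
The plan is to reduce the theorem to facts already in hand --- the behaviour of IG on a continuous segment, and the behaviour of $\zeta$, $\iota_0$, $\iota_1$ at a single intersection --- and then to check that gluing the pieces together preserves each extended Aumann-Shapley axiom. First I would apply Lemma~\ref{GigSingleDiscontinuity} to the straight-line path from $x$ to $y$, obtaining a partition into finitely many disjoint open segments, each meeting the discontinuity set of $f$ in at most one point (possibly an intersection of several orthogonal hyperplanes), together with the finitely many isolated interior intersection points and the two endpoints. By Lemma~\ref{GigSinglePoint} the isolated interior points contribute nothing, so $\Xi(f,x,y)$ as written in the statement is exactly the sum of the continuous-segment contributions $\sum_j \int_{\alpha_j}^{\alpha_{j+1}} \nabla f$, the interior-discontinuity corrections $\sum_j \zeta_j$, and the two endpoint corrections $\iota_0,\iota_1$. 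It then suffices to handle one continuous segment, one interior crossing, one endpoint crossing, and finally to check that the finite sum of all of these satisfies Definition~\ref{GigAxioms}.

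On a continuous open segment $f$ is continuously differentiable and $\Xi$ restricts to $\Xi_{IG}$ of Definition~\ref{DefinitionIG}; here I would quote the remarks preceding the theorem that IG is strongly symmetric (the proof that IG is symmetric, applied to the swapped pair $(f,g)$ rather than to $f$ alone, using continuity of $f$ on the segment), insensitive to remote change (the integral in Equation~\ref{IG} depends only on the values of $f$ along the path), insensitive to constant variables (the $i$th component carries the factor $e_i-s_i$, which vanishes when $s_i=e_i$), and reflexive (reversing the path negates $e-s$ and the direction of integration). Near an interior crossing I would invoke Definition~\ref{GigFormalDecomposition} and Proposition~\ref{GigWellDefined} to write $f=\phi_C+\phi_D$ on a small neighbourhood, with the one-sided limits of $\phi_C$ equal to zero and $\phi_D$ piecewise constant off the crossing hyperplanes; the $\phi_C$ part is absorbed into the adjacent IG integrals (and vanishes as the enclosing segment shrinks onto the point), while the $\phi_D$ part is dispatched by Theorem~\ref{GigZetaValue}, whose $\zeta=\Lambda(f,d)$ from Definitions~\ref{GigConstruction} and~\ref{GitZetaValue} is engineered so the midpoint axioms hold. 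The endpoint case uses the same decomposition but with weight $1/2$ on each non-axial orthant and the singleton correction along the boundary hyperplane constructed in Subsection~\ref{GigIotaConstruction}, and Theorem~\ref{GigZetaIotaSatisfaction} records that $\zeta,\iota_0,\iota_1$ together meet every extended axiom.

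The genuine work is in the last step, and I expect efficiency to be the main obstacle. One must show the telescoping identity $f(x)-f(y)=\sum_j(\text{continuous change of }f\text{ on segment }j)+\sum_j(\text{jump of }f\text{ at crossing }j)$ holds with the correct one-sided limits, and that $\iota_0$ and $\iota_1$ absorb precisely the gap between the value of $f$ at $x$ (respectively $y$) and the limit of $f$ approached along the path, so that nothing is double-counted where the half-weight orthants of neighbouring crossings overlap and nothing is dropped at the ends. One also has to confirm that the local recipes for $\zeta$ and $\iota$, stated one crossing at a time, are mutually compatible --- the $\phi_C$ used near one crossing must be the same continuous function whose gradient is integrated on the neighbouring segment --- which is exactly the content of Proposition~\ref{GigWellDefined}. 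Once compatibility and the telescoping are in place, linearity, strong symmetry (using the matching partition of the swapped function), insensitivity to remote change, insensitivity to constant variables, and reflexivity each follow by summing the corresponding statements for the continuous pieces (IG) and the discontinuous pieces (Theorem~\ref{GigZetaIotaSatisfaction}) term by term, which completes the argument.
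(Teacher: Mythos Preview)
Your proposal is correct and follows essentially the same approach as the paper: decompose $\Xi$ into the IG contribution on continuous segments and the $\zeta,\iota_0,\iota_1$ contributions at discontinuities, observe that each piece satisfies the extended Aumann-Shapley axioms (citing Theorem~\ref{GigZetaIotaSatisfaction} and the discussion that IG meets them), and conclude that their sum does as well. The paper's own proof is in fact considerably terser than yours---it simply asserts that IG satisfies the axioms, that the discontinuity terms satisfy the axioms, and that a weighted sum of such pieces again satisfies the axioms---so your careful treatment of the telescoping for efficiency and the compatibility via Proposition~\ref{GigWellDefined} goes beyond what the paper spells out, but in the same direction.
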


\begin{proof}  The proof of Theorem \ref{GigDecomposition} follows directly from the fact that IG satisfies the extended Aumann-Shapley axioms, that the values of $\iota_0$, $\iota_1$, and $\zeta$ for the individual discontinuities satisfy the Aumann-Shapley axioms, and that the weighted sum of any such pair also satisfies the extended Aumann-Shapley axioms.

  This shows the existence half of Theorem \ref{GigTheorem}. Uniqueness follows directly from linearity and the uniqueness of the values of $\iota_0$, $\iota_1$, and $\zeta$ and from the uniqueness of IG.

  This concludes the proof of Theorem \ref{GigTheorem}.
\end{proof}
\end{subsection}

\end{section}

\begin{section}{Details of the implementation of GIG}\label{GigDefinition}
  Let $d : {\mathbb R}^n \to {\mathbb R}^k$ denote the aggregation of the underlying piecewise constant functions. Let $f : {\mathbb R}^n \times {\mathbb R}^k \to {\mathbb R}$ denote the underlying continuous function. Let $D_1, D_2, \ldots, D_n$ denote the set of possible hyperplanes on which $d$ might be discontinuous. As normal, let $\nabla f : {\mathbb R}^n \times {\mathbb R}^m \to {\mathbb R}^n = \{ \frac{\partial f(x, d)}{\partial x_1}, \ldots, \frac{\partial f(x, d)}{\partial x_n} \}$ where the partial derivatives are all defined. Then GIG is naturally implemented as in Algorithm \ref{GigAlgorithm}.

  The use of the set of all possible discontinuities along the path -- even if many of them are trivial -- is intentional. The internal constant $\delta$ is taken to be such a small step that a perturbation of any of the intersections by an amount of $\delta$ in the direction of any orthant remains within the same continuous cell as the corner at which the difference is to be computed lies. In order to guarantee this, we need to take all possible cells into consideration, even those for which there's no discontinuity along the path -- the perturbation off the axis of the path might otherwise step across a boundary.

  In the algorithm description, we use a shorthand when we multiply $\delta$ by $D$. That multiplication only takes place on the axes along which the discontinuity takes place; all other values of the point are unchanged.

  \begin{algorithm}\label{GigAlgorithm}
    \caption{Generalized Integrated Gradients}
    \begin{algorithmic}[1]
      \Procedure{GIG}{}
      \Require $f$, $s$, $e$, $D_1, D_2, \ldots, D_n$
      \Ensure $\Xi(f, s, e)$
      \State $A \gets \{\alpha_i \in (0, 1) : (1 - \alpha) s + \alpha e \in \cup_{i = 0}^k D_i\}$ \Comment{Possible internal discontinuities}
      \State $\epsilon \gets \frac{1}{2} \min(\{\frac{\alpha_1}{2}, \frac{\alpha_2 - \alpha_1}{2}, \ldots, \frac{\alpha_{|A|} - \alpha_{|A| - 1}}{2}, \frac{1 - \alpha_{|A|}}{2})$
      \State $\delta\gets (1 - \epsilon) s + \epsilon e$
      \State $R \gets \{| \{ 1 \leq j \leq n : (1 - \alpha_i) s + \alpha_i e \in D_j \} | : 1 \leq i \leq |A| \}$ \Comment{Radices of each discontinuity}
      \For{$i \in 1, 2, \ldots, |A|$}\Comment{Compute contribution of each internal discontinuity}
      \State $O_i \gets (1 - \alpha_i) s + \alpha e$
      \For{$D \in \{ -1, 1 \}^{R_i}$}
      \State $\zeta_{i, D}\gets \eta_D f(O_i, d(O_i + \delta D))$ \Comment Uses $D$ as a vector
      \EndFor
      \State $\zeta_i \gets \sum_D \zeta_{i, D}$
      \EndFor
      \State $\zeta \gets \sum_{i = 1}^{|A|} \zeta_i$
      \State $R^0\gets | \{1 \leq j \leq n : s \in D_j \} |$\Comment{Radix at starting point}
      \If{$R^0 > 0$}
      \For{$D \in \{ -1, 1 \}^{R_i}$ where $D \neq (-1, -1, \ldots, -1)$ and $D \neq (1, 1, \ldots 1)$}
      \State $\iota^0_D\gets \frac{\eta_D}{2} f(s, d(s + \delta D))$
      \EndFor
      \State $\iota_0\gets \eta_{(1, 1, \ldots, 1)} f(s, d(s + \delta (1, 1, \ldots, 1))) + \sum_D \iota^0_D$\Comment{Starting point contribution}
      \State $\iota_0 \gets \iota_0 + \frac{1}{R^0} (f(s, d(d)) - f(s + \delta(1, 1, \ldots, 1), d(s + \delta(1, 1, \ldots, 1))))$
      \Else
      \State $\iota_0\gets 0$
      \EndIf
      \State $R^1\gets | \{1 \leq j \leq n : e \in D_j \} |$\Comment{Radix at ending point}
      \If {$R^1 > 1$}
      \For{$D \in \{ -1, 1 \}^{R_i}$ where $D \neq (-1, -1, \ldots, -1)$ and $D \neq (1, 1, \ldots 1)$}
      \State $\iota^1_D\gets \frac{\eta_D}{2} f(s, d(e + \delta D))$
      \EndFor
      \State $\iota_1\gets \eta_{-(1, -1, \ldots, -1)} f(s, d(e + \delta (-1, -1, \ldots, -1))) + \sum_D \iota^1_D$\Comment{Ending point contribution}
      \State $\iota_1 \gets \iota_1 - \frac{1}{R^1} (f(e, d(e)) - f(e + \delta(-1, -1, \ldots, -1), d(s +  \delta(-1, -1, \ldots, -1))))$
      \Else
      \State $\iota_1 \gets 0$
      \EndIf
      \State $\Xi\gets \iota_0 + \iota_1 + \zeta + \int_{\alpha = 0}^1 \nabla f((1 - \alpha) s + \alpha e, d((1 - \alpha) s + \alpha e)) d \alpha$
      \State {\bf {return}} $\Xi$
      \EndProcedure
    \end{algorithmic}
  \end{algorithm}
\end{section}

\begin{section}{Experiments}\label{SectionExamples}
  We report results on various datasets and model systems, and compare GIG's output with other credit allocation systems.

  \begin{subsection}{Sensitivity of credit assignment under GIG}
    In this section, we consider the behavior of GIG on models trained with systematically modified versions of the moons dataset (see Figure \ref{moons_classifier}) designed to verify that GIG can recover the appropriate information from models constructed in a particular way.

    \begin{figure}
     \centering
     \includegraphics[width=1.0\textwidth]{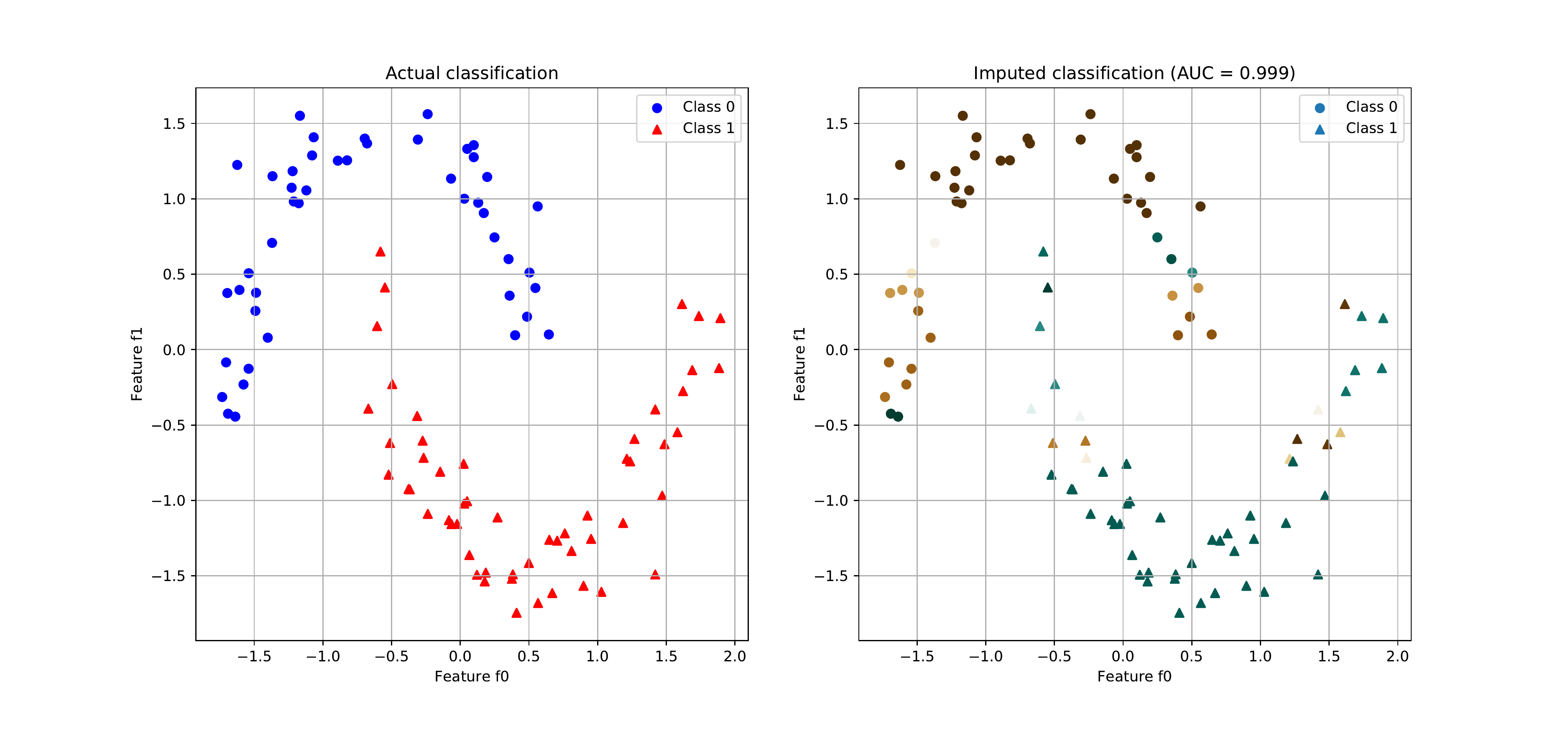}
      \begin{caption}
        {The original moons dataset and the scores arising from the XGBoost-based classifier on the members of that dataset. The scores are displayed in a color palette which is more saturated away from 0 and less saturated near zero.} \label{moons_classifier}
      \end{caption}
    \end{figure}

  %  \begin{figure}\label{shap_moons}
   %   \centering \includegraphics[width=1.0\textwidth]{} 

 %     \begin{caption}
 %       {Comparison between the results of the credit allocation due to SHAP and due to GIG. Notice that the SHAP credit assignments for elements from the two target classes overlap, while the assignments given by GIG do not.}
  %    \end{caption}
  %  \end{figure}

 % \end{subsection}

    The moons dataset is a two-dimensional dataset consisting of two moon-shaped targets which are perturbed by a scalable Gaussian kernel. We consider the particular version of the dataset presented by the Scikit Learn package version 0.20.3 \citep{scikit-learn}, 20,000 elements of which were extracted with the standard {\tt load\_data} call. These 20,000 elements were then normalized using the standard sklearn {\tt normalize} call. These split into a training set of 14,000 samples and a test set of 6000 samples randomly using the sklearn {\tt train\_test\_split} routine with random seed set to 0.      
    The data from the moons dataset was systematically extended with a third nuisance feature. This feature was a mixture of the target value and random noise, and three additional datasets were created, one in which the nuisance feature was pure noise, another where the nuisance feature was an equal mixture of noise and the target, and the other where the nuisance feature was the target.  For this example, three XGBoost classifiers \citep{DBLP:journals/corr/ChenG16} were built using 25 estimators of depth 6.

    \begin{figure}
      \centering \includegraphics[width=1.0\textwidth, trim=100 50 100 0,clip]{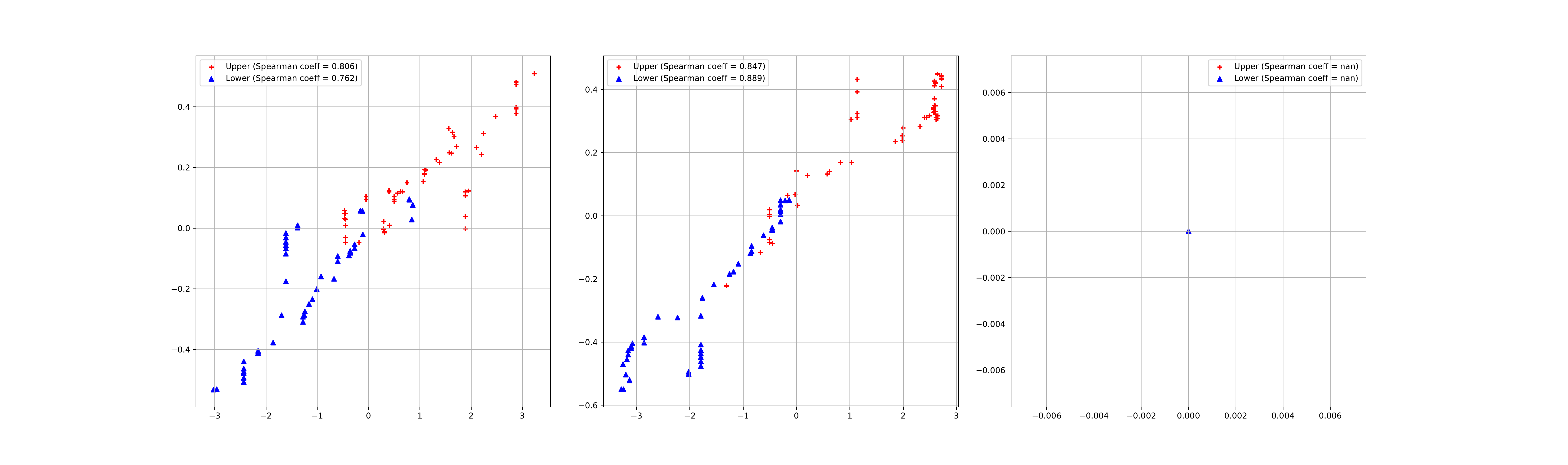} 

      \begin{caption}
        {Credit assignment with a purely random nuisance feature. The leftmost panel shows how credit is assigned to the $x$ coordinate in the moons dataset.  Each dot is the importance (vertical axis) of a given value of $x$ (horizontal axis) in determining the classification (blue or red).  The middle panel is the same for the $y$ coordinate in the dataset.  Both $x$ and $y$ help predict the outcome.  But the random nuisance feature does not, as the rightmost panel shows.  }\label{sensitivity_analysis_1}
      \end{caption} 
      
    \end{figure}

    \begin{figure}
      \centering \includegraphics[width=1.0\textwidth, trim=100 50 100 0,clip]{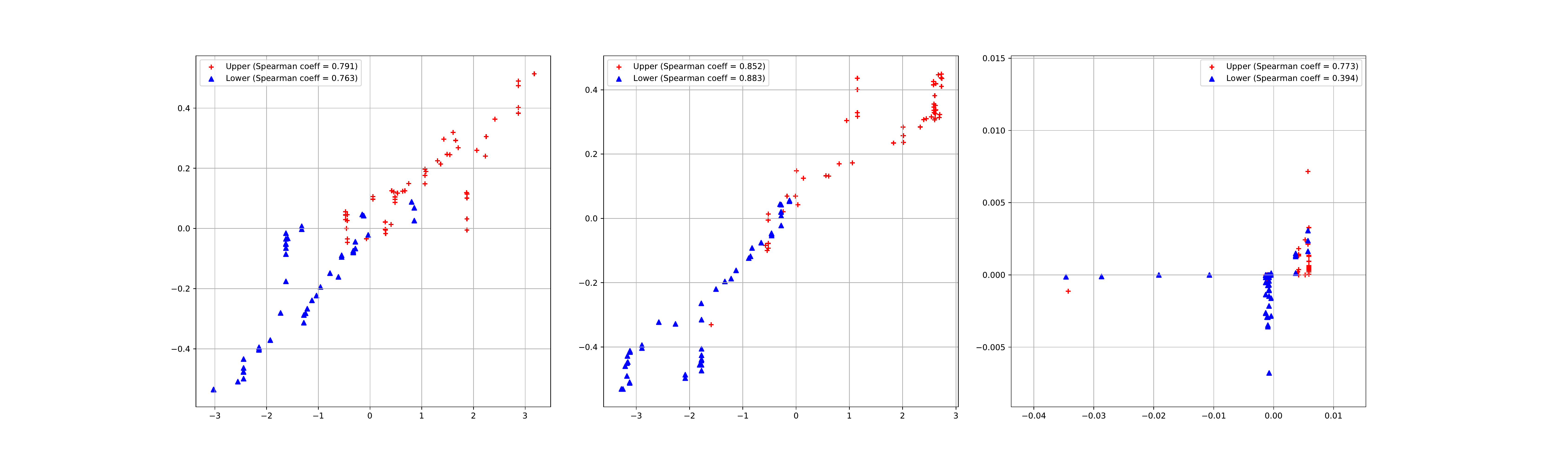} 

      \begin{caption}
        {Credit assignment with a nuisance feature that contains a mixture of a Gaussian and the target. Credit is assigned partially to all features -- including the nuisance feature -- because it has enough information to be informative, but not enough to be completely informative.}\label{sensitivity_analysis_2}
      \end{caption} 
      
    \end{figure}

    \begin{figure}
      \centering \includegraphics[width=1.0\textwidth, trim=100 50 100 0,clip]{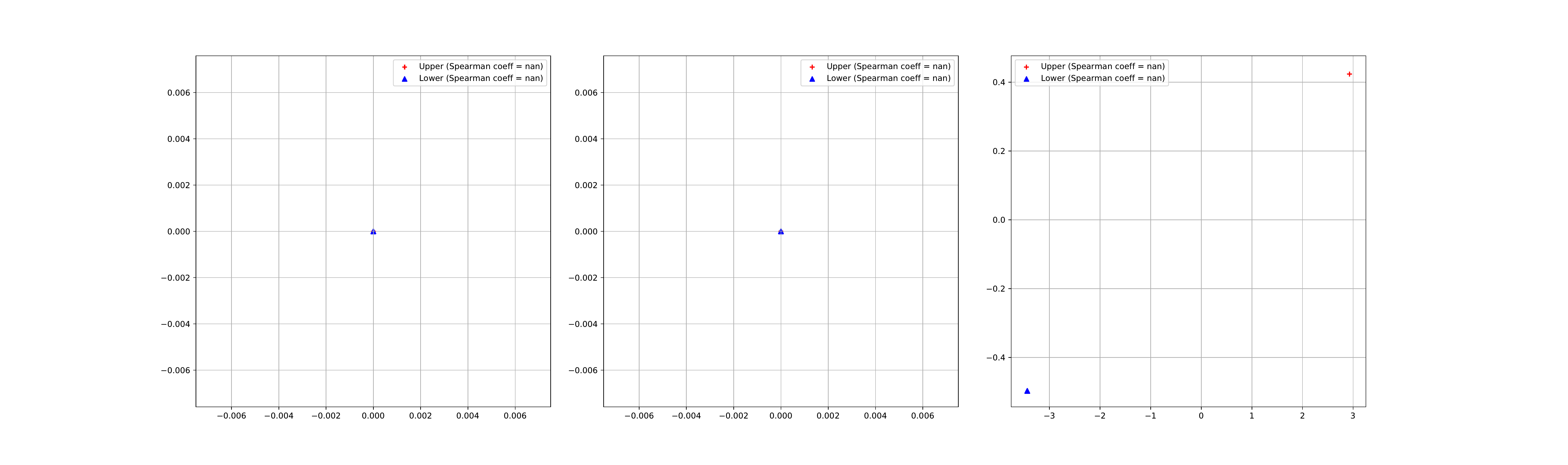} 

      \begin{caption}
        {Credit assignment when the nuisance feature equals the target. Credit is assigned only to the nuisance feature, as expected.}\label{sensitivity_analysis_3}
      \end{caption}  
      
    \end{figure}

    The models all produced high AUCs.  A reasonable credit assignment algorithm should therefore assign none of the credit to the nuisance feature in the case where it is pure noise, some of the credit to the nuisance feature when it is partially noise, and all of the credit to the nuisance feature in the case where it is equal to the target.   Indeed, as shown in Figures \ref{sensitivity_analysis_1}, \ref{sensitivity_analysis_2} and \ref{sensitivity_analysis_3}, the credit assigned by GIG for the case in which the nuisance feature was pure noise, a mixture of signal and noise, and pure signal, is as expected.

  \end{subsection}

  \begin{subsection}{Comparing GIG and TreeExplainer on the Ovals dataset}
    In this section, we consider the behavior of GIG on a toy dataset which consists of points drawn randomly according to a uniform distribution from each of two overlapping ovals. The classification task is to predict which oval a given point is associated with.  The task is intentionally impossible in the overlapping region: any point in that region could be drawn from either oval with exactly probability $\frac{1}{2}$.  As above, we use XGBoost to construct a tree-based classifier, this time consisting of 50 tree classifiers of depth 6.

    \begin{figure}
      \centering
      \includegraphics[width=0.9\textwidth]{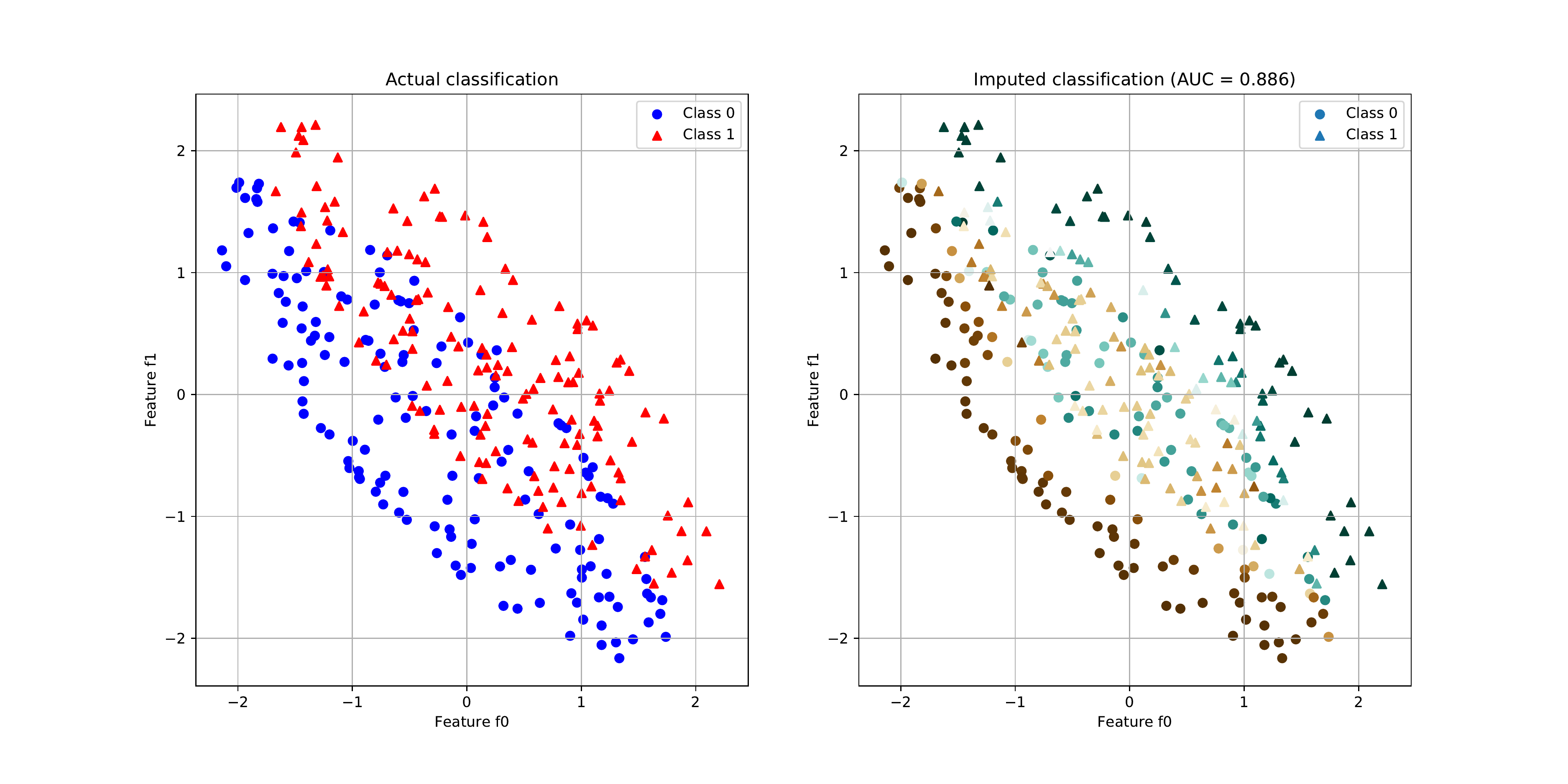}
      \begin{caption}
        {The base ovals dataset used in this demonstration. On the left, the actual dataset; on the right, the output values of the classifier on the dataset.}\label{ovals_classifier}
      \end{caption} 
    \end{figure}

    \begin{figure}
      \centering
      \includegraphics[width=1.0\textwidth,trim=100 250 100 20,clip]{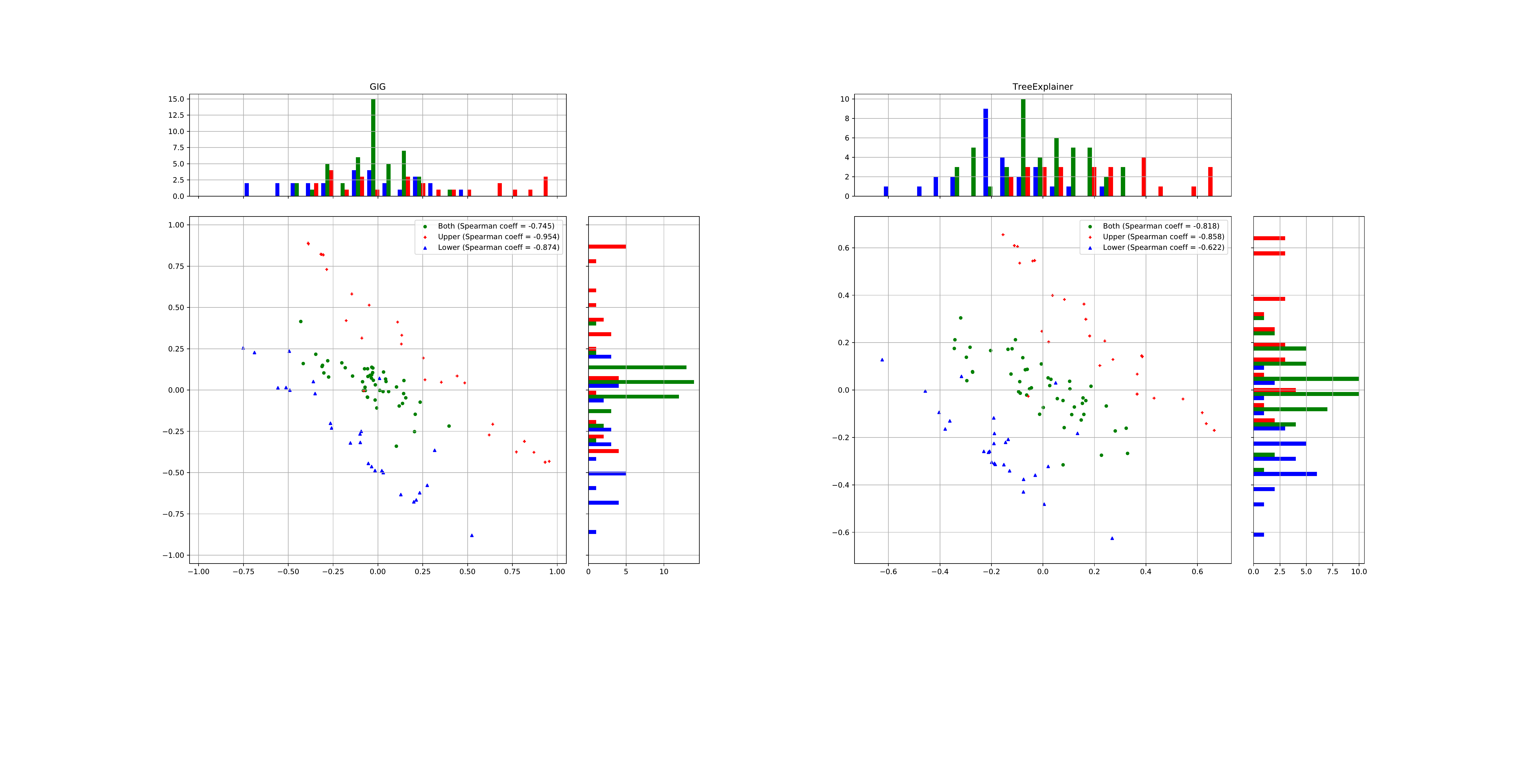}
      \begin{caption}
        {Comparison of credit allocation via GIG and TreeExplainer. The left panel shows credit allocated by GIG.  The right panel shows credit allocated by TreeExplainer.  The $x$ axes correspond to feature f0, the $y$ axes feature f1.  The red dots are from the upper oval, blue from the lower, and green from the overlapping region.  The histograms show the distributions of credit assignments for each variable.  Notice the GIG credit assignments reflect the structure of the overlapping ovals in the dataset better than those provided by TreeExplainer. The GIG labels are closer to the three lines -0.5, 0, and 0.5 for the lower, shared, and upper segments of the dataset, as shown by the Spearman coefficients for each item.}\label{SHAP_Ovals}
      \end{caption} 
      
    \end{figure}

    Figure \ref{ovals_classifier} shows, as expected, that the trained classification function is near zero in the area of the overlap, and increases as points move away from that region. Figure \ref{SHAP_Ovals} shows an interesting partition of the output values: GIG actually splits the credit attributions into three disjoint regions, one corresponding to the top group which lies outside the common area, one corresponding to the bottom group which lies outside the common area, and a third one corresponding to points which lie in the common area. In this it captures the structure of the underlying scoring manifold in ways that SHAP does not.
%% need to validate the above is still true when we investigate margin/score space transformation and the new figure that comes
      \end{subsection}

  \begin{subsection}{Credit assignment for complex model types on real data set}\label{TreeCov_data}
    The covertype dataset \citep{covertype} is publicly available from the OpenML repository. The dataset was partitioned into four pieces: a training set, a training era validation set, an ensembling era validation set, and a testing set.  We trained two direct classifiers on the training set while validating on the training era validation set, and trained four ensemble classifiers on the training era validation set while validating on the validation era dataset.  We tested all these models on the common testing set.

    Two direct classifiers were trained on the dataset.  One was trained using the SciKit Learn \citep{scikit-learn} implementation of extremely random forests with 100 trees of maximum depth 10 with an initial random seed of 0 where each leaf was required to have at least two elements.  The other is a multi-layer perceptron built using Keras \citep{chollet2015keras} on top of TensorFlow \citep{tensorflow2015-whitepaper} with 55 inputs, two fully-connected ReLU layers of 1000 nodes each, a single 1000 unit layer with the TANH output function, and a single output node with a sigmoid output function.  That model was trained using the Adam optimizer with Nesterov momentum with a batch size of 100 and a learning rate of 0.01.

    The scores for each of the trained models were then rescaled from the margin space of each model into an approximate uniform output space using an piecewise linear approximation of the ECDF, thus yielding a total of eight direct models. Four ensemble models were then trained using the outputs of the ETF and MLP models as inputs. Each of these was trained using Keras over TensorFlow: two linear models, one  using the unnormalized forms of the ETF and MLP models as input, and the other  using the normalized ETF and MLP models as input, and two MLP models, each one with a single 1000 node hidden layer. All MLP models were trained with an L2 penalty of 0.001. No attempt was made to optimize the final performance, as the purpose of this was to demonstrate credit assignment.

  %  In Figure \ref{TreeCovScatter} and \ref{TreeCovPerformance}, we show the relationship between the scores arising from both the normalized and unnormalized ETF and MLP classifiers. Notice that each of them is actually a reasonably strong classifier: there isn't that much to learn beyond what is already transparently obvious from the initial classifiers.
    In Tables, \ref{TableMlpEtf}, \ref{TableMlpNormalizedMlp}, we display the values of the credit assigned to the input variables for each of the four direct classifiers. Observe that the order and amounts of the different assignments vary from one classifier to another, and that the order and amount of each assignment assignment varies between each classifier and its normalized form. In Table \ref{TableLinearEnsembleMlpEnsemble} we display the corresponding values for the ensembled classifiers. Notice also that these values are both different from one another and from those assigned by the original direct models and from their respectively normalized or unnormalized analogues.
    The results demonstrate that rank ordering and magnitude of of credit assignment changes when models are transformed, and that GIG can handle a diversity of model types, including models that are compositions of piecewise constant and continuous functions.

    \begin{figure}
      \centering
      \includegraphics[width=0.8\textwidth]{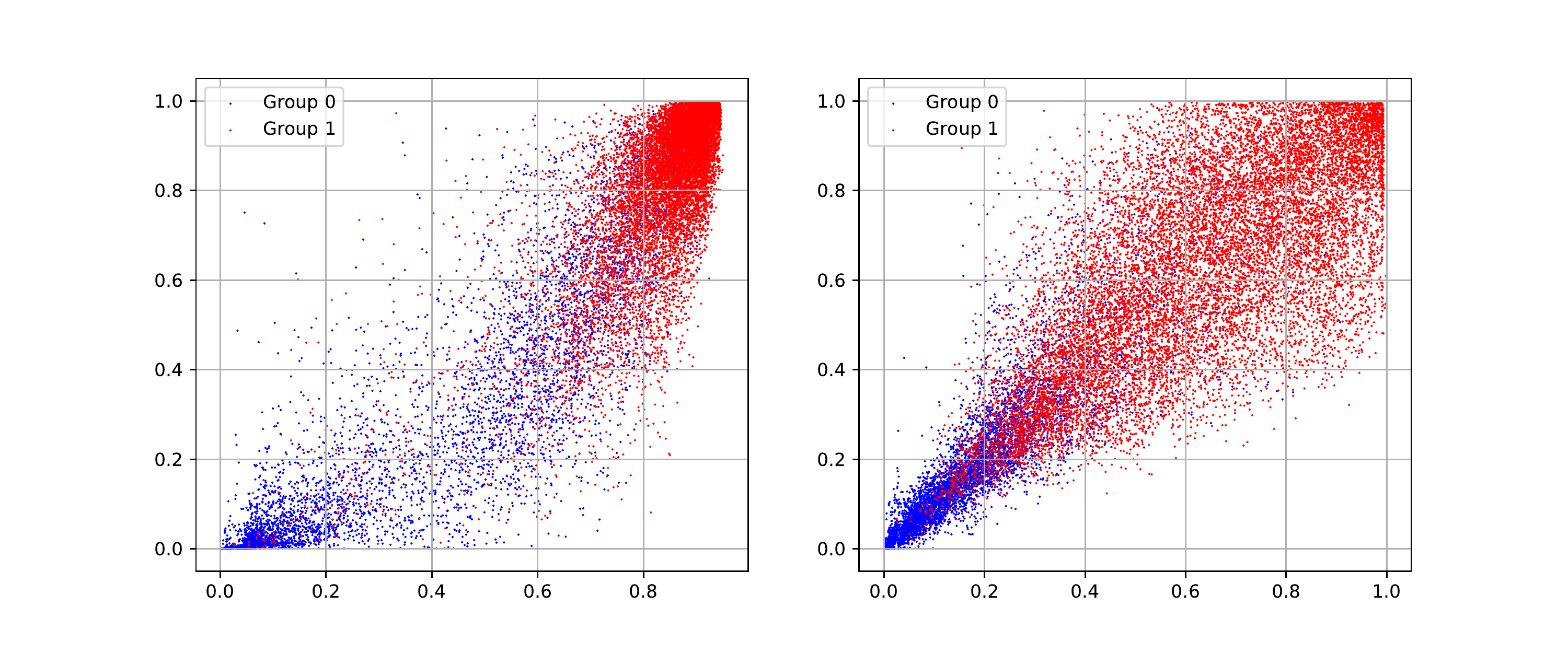}
      \begin{caption}
        {Scatter plots of the MLP and ETF classification results for the covering type classifier. The left-hand plot shows the scatter plot of the scores in margin space. The right-hand plot shows the scatter plots for the score in rank space.}
      \end{caption}\label{TreeCovScatter}
    \end{figure}

%    \begin{figure}
 %     \centering
  %    \includegraphics[width=0.8\textwidth]{}
  %    \begin{caption}
   %     {ROC curves for the covering type classifiers. The left-hand panel shows the ROC curves for the unensembled MLP and EFT scores. The right-hand panels shows the ROC curves for the linear and MLP ensemble scores. Notice that since the margin-to-rank transformation is monotone, the ROC curves for the transformed and untransformed scores are identical.}
    %  \end{caption}\label{TreeCovPerformance}
   % \end{figure}

    \begin{table}[ht]\centering 
      \begin{tabular}{|l | l |l | l |}\hline
        \multicolumn{2}{|c|}{MLP} & \multicolumn{2}{c|}{ETF} \\ \hline
Horizontal Distance To Fire Points & -1.0 & Elevation & 1.0\\ \hline
Soil Type12 & 0.246 & Wilderness Area1 & -0.478\\ \hline
Aspect & 0.234 & Soil Type12 & 0.386\\ \hline
Horizontal Distance To Roadways & 0.232 & Horizontal Distance To Roadways & -0.383\\ \hline
Soil Type29 & 0.154 & Soil Type32 & 0.122\\ \hline
Slope & -0.146 & Horizontal Distance To Hydrology & 0.118\\ \hline
Horizontal Distance To Hydrology & 0.137 & Soil Type30 & 0.08\\ \hline
Hillshade 9am & 0.113 & Soil Type20 & 0.058\\ \hline
Soil Type30 & 0.08 & Soil Type16 & -0.055\\ \hline
Vertical Distance To Hydrology & 0.071 & Hillshade Noon & 0.047\\ \hline
Wilderness Area1 & -0.062 & Aspect & 0.045\\ \hline
Soil Type23 & -0.035 & Soil Type29 & 0.039\\ \hline
Hillshade Noon & 0.033 & Soil Type23 & 0.037\\ \hline
Hillshade 3pm & 0.032 & Soil Type33 & -0.036\\ \hline
Wilderness Area3 & 0.026 & Soil Type39 & 0.032\\ \hline
      \end{tabular}\caption{Credit assigned for a multi-layer perceptron and an extremely randomized trees model, which demonstrates GIG can process both a continuous and a discrete model.}\label{TableMlpEtf}
    \end{table}
    \begin{table}[ht] \centering 
      \begin{tabular}{|l | l |l | l |}\hline
        \multicolumn{2}{|c|}{MLP} & \multicolumn{2}{c|}{Smoothed ECDF(MLP)} \\ \hline
Horizontal Distance To Fire Points & 1.0 & Horizontal Distance To Fire Points & 1.0\\ \hline
Elevation & 0.755 & Elevation & 0.947\\ \hline
Horizontal Distance To Roadways & 0.562 & Horizontal Distance To Roadways & 0.616\\ \hline
Hillshade 9am & 0.35 & Hillshade 9am & 0.415\\ \hline
Horizontal Distance To Hydrology & 0.338 & Vertical Distance To Hydrology & 0.398\\ \hline
Hillshade 3pm & 0.254 & Horizontal Distance To Hydrology & 0.373\\ \hline
Aspect & 0.247 & Hillshade 3pm & 0.308\\ \hline
Hillshade Noon & 0.193 & Aspect & 0.281\\ \hline
Vertical Distance To Hydrology & 0.19 & Hillshade Noon & 0.219\\ \hline
Slope & 0.186 & Soil Type12 & 0.198\\ \hline
Soil Type29 & 0.183 & Slope & 0.168\\ \hline
Soil Type12 & 0.168 & Soil Type29 & 0.146\\ \hline
Wilderness Area1 & 0.081 & Soil Type23 & 0.06\\ \hline
Soil Type30 & 0.058 & Wilderness Area1 & 0.05\\ \hline
Soil Type23 & 0.024 & Soil Type30 & 0.043\\ \hline
      \end{tabular}\caption{Comparison of credit allocated by a multi-layer perceptron and its normalized equivalent, demonstrating the change in order and magnitude for credit assigned given the application of a smoothed ECDF.  GIG enables the accurate rank ordering of credit assignments even when the model scores are transformed.}\label{TableMlpNormalizedMlp}
    \end{table}
 %   \begin{table}[ht] \centering 
 %     \begin{tabular}{|l | l |l | l |}\hline
 %       \multicolumn{2}{|c|}{Linear ensemble} & \multicolumn{2}{c|}{normalized linear ensemble} \\ \hline
 %       Horizontal Distance To Fire Points & 1.0 & Horizontal Distance To Fire Points & 1.0\\ \hline
 %       Horizontal Distance To Roadways & 0.933 & Horizontal Distance To Roadways & 0.787\\ \hline
 %       Horizontal Distance To Hydrology & 0.704 & Horizontal Distance To Hydrology & 0.617\\ \hline
 %       Elevation & 0.421 & Elevation & 0.33\\ \hline
 %       Slope & -0.417 & Slope & -0.288\\ \hline
 %       Vertical Distance To Hydrology & -0.269 & Hillshade 9am & -0.239\\ \hline
 %       Hillshade 9am & -0.267 & Vertical Distance To Hydrology & -0.233\\ \hline
 %       Aspect & -0.263 & Hillshade Noon & -0.221\\ \hline
 %       Hillshade Noon & -0.196 & Aspect & -0.213\\ \hline
 %       Hillshade 3pm & 0.146 & Hillshade 3pm & 0.167\\ \hline
 %     \end{tabular}\caption{Comparison of credit allocated by a linear ensemble and its normalized form.  }\label{TableLinearEnsembleNormalizedLinearEnsemble} \end{table}
    \begin{table}[ht] \centering 
      \begin{tabular}{|l | l |l | l |}\hline
        \multicolumn{2}{|c|}{Normalized linear ensemble} & \multicolumn{2}{c|}{Normalized MLP ensemble} \\ \hline
Horizontal Distance To Roadways & 1.0 & Horizontal Distance To Roadways & 1.0\\ \hline
Hillshade 9am & 0.443 & Horizontal Distance To Fire Points & 0.636\\ \hline
Horizontal Distance To Fire Points & 0.428 & Hillshade 9am & 0.452\\ \hline
Elevation & 0.366 & Elevation & 0.424\\ \hline
Slope & 0.352 & Slope & 0.354\\ \hline
Horizontal Distance To Hydrology & 0.283 & Hillshade 3pm & 0.255\\ \hline
Hillshade 3pm & 0.183 & Horizontal Distance To Hydrology & 0.225\\ \hline
Hillshade Noon & 0.145 & Soil Type29 & 0.153\\ \hline
Vertical Distance To Hydrology & 0.12 & Soil Type9 & 0.15\\ \hline
Aspect & 0.098 & Aspect & 0.12\\ \hline
Soil Type29 & 0.091 & Hillshade Noon & 0.116\\ \hline
Soil Type9 & 0.077 & Vertical Distance To Hydrology & 0.113\\ \hline
Soil Type18 & 0.051 & Soil Type18 & 0.093\\ \hline
Soil Type30 & 0.022 & Soil Type23 & 0.023\\ \hline
Soil Type39 & 0.021 & Soil Type30 & 0.012\\ \hline
      \end{tabular}\caption{Comparison of credit allocated by a linear ensemble of an ETF and an MLP, passed through a smoothed ECDF (left), and an MLP ensemble of an ETF and an MLP, also passed through a smoothed ECDF.  This demonstrates that GIG can process compositions of models of mixed types, which heretofore have not been possible to accurately explain. } \label{TableLinearEnsembleMlpEnsemble} \end{table}
  \end{subsection}
\end{section}

\begin{section}{Discussion}\label{SectionDiscussion}
  \begin{subsection}{Inevitability of Equation \ref{GigEtaShapley}}\label{SubsectionShapleyRelationship}
    In a certain sense, the careful construction of the values of $\eta_{(k, j)}$, above is unnecessary. We could simply have considered a lift of $f$ which was defined by
    $$
    \xi(f, x, S) = f(x - \epsilon \chi^*(S))
    $$
    where $\epsilon > 0$ and
    $$
    \chi^*(S) = \begin{cases}
      1 & n \in S \\
      -1 & {\mathrm {o.w.}} \\
    \end{cases}
    $$
    The limits as $\epsilon$ goes to zero of the resulting terms in the computation of the Shapley values are exactly the same as the corresponding terms as computed in Equation \ref{GigEtaShapley}. That's inevitable, of course, since the Shapley values are the unique set of weights consistent with Shapley's Axioms.  Setting aside the extra axioms required to prove GIG is well-defined, there's nothing surprising about those values.

    Since this is the case, then what is the contribution of GIG to the problem of credit allocation? After all, it appears that GIG is nothing more than a clever way to merge the Aumann-Shapley values corresponding to a pair of endpoints with the Shapley values which occur at any boundary intersection.  But there is more to it than that.

    First, the computation of the Shapley values is exponential in the number of variables in the scoring function, which in the case of underwriting problems, is usually in the hundreds or thousands, and is almost always more than a handful of tens.  This makes Shapley values impractical to compute.  By contrast, the number of variables involved in any given corner in a GIG computation is rarely very large. In our studies, we've typically seen no more than a few boundary intersections in a thousand with radix $> 1$.  As such, GIG is more practical to compute.

    Second, GIG defines a unique lift that is fully determined by the choice of axioms and $f$.  Unlike with Shapley, there are no arbitrary choices to be made.  In GIG, we select a specific lift $f$ -- $\xi$, defined entirely from the values of $f$ in the various orthants. We've shown above that the selection of a lift in the case of Shapley is arbitrary.  By contrast, GIG is unique: given a function and a pair of points to be compared, there is only one corresponding allocation that satisfies GIG's axioms.  This means credit allocations computed by GIG can be viewed as objective, not subjective, as they are with Shapley and its derivatives.
  \end{subsection}

  \begin{subsection}{Comparison with SHAP}\label{SubsectionShap}
    In \citet{lundberg2017unified,lundberg2018consistent,Shap2019:463e71ac98637be58d8fce2a16ed42ed3cdbf0cc}, Lundberg and colleagues describe methods that allocate credit for several well-known ML algorithms in terms of the Shapley values.
    
    As discussed above in \ref{SubsectionTransformation}, many practical uses of machine learning systems require explanations in a transformed output space.  This proves challenging for many algorithms.  In addition, the algorithms described in \citet{lundberg2017unified,lundberg2018consistent,Shap2019:463e71ac98637be58d8fce2a16ed42ed3cdbf0cc} depend on the assumption of feature independence (as in Equation 11 in \citet{lundberg2017unified}).

    Requiring feature independence is problematic in many applications, where features are correlated based on how they are constructed (e.g., a credit score is usually computed based on other model variables that are also included in the model) or by virtue of their encoding (e.g., one-hot encoded categoricals are obviously not independent).

    Prior algorithms either:  (1) cannot handle the score-space, margin-space transformation or (2) require the assumption that features  are independent.  As a result, the results from applying these methods are less accurate than GIG, which has none of these limitations.  GIG is well-defined regardless of whether the variables in the model are statistically independent.

    % , either universally as in KernelExplainer or conditionally as when TreeExplainer is used on deeply stacked models.

    DeepSHAP \citep{lundberg2017unified} is a technique for providing explanations for neural networks \citep{backprop-article}.  DeepSHAP computes neural network importance no matter what the final remapping is, whether the results are reported in margin space or in some transformed score space.  However, DeepSHAP is limited to explaining neural network models and, because it uses Shapley sampling to approximate the Shapley values, it depends on a dubious feature independence assumption.

    TreeExplainer \citep{lundberg2018consistent} applies to ensembles of decision trees. TreeExplainer is very fast and completely general if the result of the model is reported in margin space. However, TreeExplainer only reflects the correct computations of the Shapley values when the result is otherwise transformed, if we can assume that the input features are independent (which, as we discussed above is almost never true).

    Given a pure linear ensemble of one or more neural networks and one or more tree classifiers, we can extend the pair of DeepExplainer and TreeExplainer to create an ``EnsembleExplainer'' -- provided everything is ultimately in margin space. If we were to transform the output of the ensemble into some other space, we would again immediately require signal-wise independence.

    And, worse, if we take any fundamentally non-linear ensemble of such a set of items, we would no longer be able to use ``EnsembleExplainer'' at all. In \citet{lundberg2017unified}, however, the authors present KernelExplainer, a mechanism for computing the Shapley values of any arbitrary function at any arbitrary point -- provided the input features are independent, again a disqualifying assumption for our applications.

    Finally, none of these explainers or published analysis provide any reason to believe that the particular lift they employ is the correct lift.  
  \end{subsection}
  \begin{subsection}{GIG is fully determined by its axioms}\label{SubsectionGeometry}
    When one computes the Shapley values for a particular atomic game, one relies on a set function from the power set of the set of all input features to the set of output values. That's an essentially combinatorial operation: the function which we're actually assigning credit for isn't the original function, but rather a combinatorial lift of that function into a higher-dimensional space. This lift is essentially arbitrary: Lundberg et al. use a simple column-by-column lift when dealing with tabular data in the patient diagnosis application they discuss, but use super-pixel data in the cases where they are explaining image discrimination. These methods pick and processes {\em subsets} not {\em values}, and they do this in an arbitrary, but problem-dependent fashion.

    By contrast, GIG is fully determined by its axioms. By the proof of Theorem \ref{GigTheorem}, there is one and only one choice for the terms of the computation. Although the coefficients in Equation \ref{GigEtaShapley} are formulaically similar to the coefficients used in the computation of the Shapley values, their actual meaning is quite different: there's no arbitrary combinatorial lift into a higher dimensional space, but rather a computation based on orthants around a given intersection. Instead, GIG makes assuptions about the locally smooth structure of the manifold of possible input values.

    One of the interesting consequences of GIG's fully determined geometric or topological structure is a measure-theoretic interpretation of it as a direct extension of IG. IG depends on the notion of a path integral with respect to standard Lebesgue measure between two points. GIG constructs a notion of a ``path integral'' with respect to the Lebesgue decomposition of the measure induced by the combination of Lebesgue measure on the interval between the two endpoints and the orthogonal Borel measure induced by the hyperplanes upon which the function is discontinuous. We showed above that the sum of the values of $\iota$ and $\zeta$ create a well-defined notion of path integral in this case.  We believe GIG is the only reflexive linear operator on the space of all paths and functions continuous with respect to such a measure.
  \end{subsection}
\end{section}
 
\begin{section}{Summary}
  We described GIG, a novel credit allocation algorithm for a broad class of scoring functions that are of practical importance in real-world applications where accurate explanations are required.  Unlike other approaches, GIG's credit allocation is fully determined by its axioms and the scoring function under study.  We showed GIG is unique, practical to compute, and delivers expected results on a variety of datasets and ensembles of models.  

\end{section}

\acks{The authors gratefully acknowledge helpful consultations with
  Scott Lungberg and Manuela Veloso.}
\bibliography{citations.bib}

\end{document}